\pdfoutput=1
\documentclass[11pt,letterpaper]{article}

\usepackage[T1]{fontenc}
\usepackage[margin=1in]{geometry}
\usepackage{newtxtext}
\usepackage{helvet}
\usepackage{courier}
\usepackage[hyphens]{url}
\usepackage{graphicx}
\usepackage{natbib}
\usepackage{caption}
\usepackage{placeins}
\usepackage{xcolor}
\usepackage{algorithm}
\usepackage{algorithmic}

\usepackage{newfloat}
\usepackage{listings}
\DeclareCaptionStyle{ruled}{labelfont=normalfont,labelsep=colon,strut=off}
\floatstyle{ruled}
\newfloat{listing}{tb}{lst}{}
\floatname{listing}{Listing}

\usepackage{amsmath,amssymb,amsthm,booktabs}
\theoremstyle{plain}
\newtheorem{theorem}{Theorem}[section]
\newtheorem{lemma}[theorem]{Lemma}
\newtheorem{proposition}[theorem]{Proposition}
\newtheorem{corollary}[theorem]{Corollary}

\theoremstyle{definition}
\newtheorem{definition}[theorem]{Definition}

\theoremstyle{remark}
\newtheorem{remark}[theorem]{Remark}
\usepackage[hidelinks,bookmarks=false]{hyperref}

\title{How Inefficient Is Natural Gradient Descent? From Exact Optimality to $\Theta(\sqrt{\log d})$ Divergence}
\author{Guni Sharon\thanks{Department of Computer Science and Engineering,
Texas A\&M University, College Station, TX 77843, USA. E-mail:
\href{mailto:guni@tamu.edu}{guni@tamu.edu}}
\and
Alan Kuhnle\thanks{Department of Computer Science and Engineering,
Texas A\&M University, College Station, TX 77843, USA. E-mail:
\href{mailto:kuhnle@tamu.edu}{kuhnle@tamu.edu}}}
\date{}

\begin{document}

\newcommand{\COMMENT}[1]{}

\maketitle

\begin{abstract}
Natural gradient descent (NGD) underlies common methods in ML. For dually flat families, idealized NGD on the forward Kullback--Leibler objective follows the mixture geodesic which is often longer than the shortest Fisher--Rao path. We quantify this overhead by the inefficiency ratio \(R \ge 1\), the Fisher length of the mixture geodesic divided by the Fisher--Rao distance, and bound its supremum over endpoint pairs as a function of the parameter dimension \(d\). A tensor criterion identifies the regime (I) families, with \(R=1\) everywhere: exactly those with quadratic potential or dimension one, such as fixed-covariance Gaussians. For non-quadratic families, we prove two further regimes: (II) bounded third-order skewness plus finite Fisher--Rao diameter yields a dimension-independent bound; and (III) for products of scale families---including Gaussian covariances and Gamma rates---\(R\) grows as \(\Theta(\sqrt{\log d})\), unbounded in \(d\). Under a per-step Fisher-chord budget, \(R\) translates to a practical computational cost: NGD requires asymptotically at least \(R\) times as many steps as an optimizer following the Fisher--Rao geodesic. Experiments confirm all three regimes: \(R=1\) to machine precision for quadratic-potential families (I), the categorical bound \(\pi/(2\sqrt{2})\) is approached but not attained (II), and sampled scale-product \(R\) grows with \(d\), reaching \(R \approx 1.5\) for long, high-dimensional moves (III).
\end{abstract}

\section{Introduction}
\label{sec:intro}

Natural gradient descent (NGD) is a common optimizer choice across modern learning
systems, including reinforcement learning \cite{kakade2001natural,schulman2015trust}, variational inference \cite{khan2018fast}, and evolution strategies
\cite{wierstra2014natural}. NGD is often credited with faster and more stable progress in distribution spaces because it replaces the Euclidean gradient with the Fisher-metric
gradient, making its updates invariant to parameterization
\cite{AmariNG,amari2016information,martens2020new}. In practice, however, NGD is expensive and fragile in four well-documented ways: (1) \emph{computational cost}---for a model with $d$ parameters (the dimension of the natural parameter $\theta$) the Fisher information matrix (FIM) is $d\times d$, requiring $\mathcal{O}(d^3)$ compute to solve; (2)
\emph{ill-conditioning} \cite{martens2020new}---overparameterized models routinely produce a
singular or near-singular FIM; (3) \emph{noisy empirical curvature} \cite{kunstner2019limitations}---the FIM used in
practice is a sample-based estimate, not the exact metric; and (4) \emph{saddle-point sensitivity}---second-order preconditioning changes how a
trajectory interacts with saddle points \cite{dauphin2014identifying}. These four difficulties have each
been studied extensively. A fifth, more basic question has not (to our knowledge): is the NGD trajectory itself efficient? We show that the answer is \textbf{no} in general. The reason is the Amari--Chentsov skewness tensor, which measures how the Fisher metric changes across the model. Idealized NGD follows the mixture geodesic---a path that is straight in expectation coordinates---and this path need not track the shortest route through the Fisher geometry.

We quantify the excess length of the idealized NGD path by a single scale-free number, the \emph{Fisher inefficiency ratio}
\[
R(p,q)=\frac{L(\gamma^{(m)})}{\delta_{\mathrm{geo}}(p,q)}\ \ge\ 1,
\]
the Riemannian length of the mixture geodesic, $L(\gamma^{(m)})$, divided by the Fisher--Rao
distance $\delta_{\mathrm{geo}}$ between its endpoints $(p,q)$.
Equality $R(p,q)=1$ means NGD's
idealized trajectory is length-optimal;
$R(p,q)>1$ means that NGD follows a suboptimal path. We study the worst-case ratio $R^\ast(M)=\sup_{p,q}R(p,q)$ of a family $M$, and how it behaves as
the family dimension grows.

\paragraph{What makes $R$ exceed 1?} We show that NGD's optimality ($R=1$) is determined by the \textit{Amari--Chentsov tensor} $T$, a fully symmetric $(0,3)$-tensor that measures the gap between the two dual affine connections of a dually flat family. 
We prove that for $R=1$, the Amari--Chentsov tensor $T$, contracted twice against the $m$-geodesic's velocity, must be proportional to that velocity at every point along the path (Theorem~\ref{thm:pointwise}). Using this criterion, we separate families into three demonstrated regimes:
\begin{itemize}
\item[(I)] \emph{Fisher efficient} ($R^\ast(M)=1$): those with $d=1$ and/or quadratic potential, e.g., fixed-covariance Gaussians.
\item[(II)] \emph{Uniformly bounded} ($\sup_d R_d^\ast<\infty$): we show that bounded third-order skewness of the Amari--Chentsov tensor together with finite Fisher--Rao diameter forces a bound on $R_d^\ast$ that does not depend on the dimension.
\item[(III)] \emph{Dimensionally unbounded} ($\sup_d R_d^\ast=\infty$): for products of scale families, including Gaussian covariances and Gamma rates, we prove $R_d^\ast=\Theta(\sqrt{\log d})$.
\end{itemize}

\paragraph{Operational meaning.} We show that the ratio $R$ acquires a concrete interpretation under a per-step Fisher-chord budget. If each update's expectation-coordinate displacement is capped at $\varepsilon$ in the local Fisher norm, then the number of steps along a smooth path is asymptotic to its length divided by $\varepsilon$. Under this budget, idealized NGD requires asymptotically at least $R$ times as many steps as a Fisher--Rao geodesic oracle (Proposition~\ref{prop:steps}). Concretely, $R\approx1.5$ means at least $50\%$ more gradient steps.

\paragraph{Experiments.} We sample random endpoint pairs and compute $R$ numerically. The Fisher-efficient regime (I) returns $R=1$ to machine precision. For Regime II we show that for the categorical family, $R$ stays strictly below the theoretical cap, and the observed maxima approach it while remaining nearly constant across dimensions. For scale-product families (Regime III), sampled $R$ grows with dimension, reaching $R\approx1.5$ for long, high-dimensional covariance and Gamma moves, consistent with the $\Theta(\sqrt{\log d})$ rate. 

The main contributions are the pointwise tensor criterion
(Theorem~\ref{thm:pointwise}), the dimension-free skewness--diameter bound
(Theorem~\ref{thm:bounded}), an exact categorical constant
(Theorem~\ref{thm:categorical}), the $\Theta(\sqrt{\log d})$ scale-product rate
(Theorem~\ref{thm:class3}), and the one-sided step-count interpretation
(Proposition~\ref{prop:steps}). 

\section{Setup and Scope}
\label{sec:setup}

We first introduce exponential families and the
dually flat geometry they carry (Section~\ref{sec:dually-flat}), then define
the inefficiency ratio $R$ and fix the scope (Section~\ref{sec:ratio}).
A full glossary is provided in Appendix~\ref{apx:sec:glossary}.

\subsection{Exponential families and dually flat geometry}
\label{sec:dually-flat}

Every family studied in this paper is an exponential family. In its canonical
form,
\[
p_\theta(x)=\exp\bigl(\langle\theta,t(x)\rangle-\psi(\theta)\bigr),
\]
where $\theta\in\mathbb{R}^d$ is the natural parameter, $t(x)$ the sufficient
statistic, and $\psi$ the log-partition function---a strictly convex potential.
The expectation parameter is
$\eta=\mathbb{E}_\theta[t(x)]=\nabla\psi(\theta)$,
and the Fisher information matrix is the Hessian
$g=\nabla^2\psi$.

This structure carries a dually flat geometry. The manifold
$M=\{p_\theta\}$ is equipped with the Fisher metric $g$ and two flat,
torsion-free affine connections $\nabla^{(m)},\nabla^{(e)}$ that are mutually
dual with respect to $g$ and admit global affine charts $\eta,\theta$
related through the convex potential $\psi$ by $\eta=\nabla\psi(\theta)$
\cite{Amari2000MethodsOI, amari2016information, shima2007geometry}. The
mixture connection $\nabla^{(m)}$ has vanishing Christoffel symbols in
$\eta$-coordinates; the exponential connection $\nabla^{(e)}$ has vanishing
Christoffel symbols in $\theta$-coordinates.
Appendix~\ref{apx:sec:facts} collects the standard facts (F1--F5) invoked
throughout.

\begin{definition}[Dual geodesics and the Amari--Chentsov tensor]
The \emph{$m$-geodesic} $\gamma^{(m)}:[0,1]\to M$ is the unique curve satisfying
\[
\eta(\gamma^{(m)}(t))=(1-t)\eta(p)+t\eta(q),
\]
the affine segment in expectation coordinates; the \emph{$e$-geodesic}
$\gamma^{(e)}$ is the analogous affine segment in natural coordinates $\theta$.
Neither is in general the shortest path under $g$
\cite{nielsen2020elementary, nielsen2021geodesic}.

The Levi--Civita connection is the average
$\nabla^{\mathrm{LC}}=\tfrac12(\nabla^{(m)}+\nabla^{(e)})$. The gap between the
two dual connections is measured by the \emph{Amari--Chentsov tensor} $T$,
defined by pairing their difference with a third vector via the metric:
\[
T(X,Y,Z)=g\bigl(\nabla^{(e)}_X Y-\nabla^{(m)}_X Y,\,Z\bigr).
\]
In words: to evaluate $T(X,Y,Z)$, differentiate the vector field $Y$ in
direction $X$ using each connection, take their difference, and measure its
Fisher inner product with $Z$. Then, $T$ is a fully symmetric $(0,3)$-tensor; in the
natural chart it is the negative of the third derivative of the potential,
$T_{ijk}=-\partial_i\partial_j\partial_k\psi$, the negative third cumulant of the
sufficient statistic. This $T$ appears in prior analyses of gradient flows and
pregeodesics
\cite{fujiwara1995gradient, bravetti2025asymmetric, bravetti2026gradient}.
\end{definition}

\begin{definition}[Scale family]\label{def:scale-family}
A \emph{scale family} is a one-dimensional dually flat manifold with
expectation coordinate $\eta\in(0,\infty)$ and Fisher metric
$g_{\eta\eta}=\kappa/\eta^2$ for a constant $\kappa>0$.
\end{definition}

\begin{definition}[Categorical family]\label{def:categorical-family}
The \emph{categorical family} on $k$ outcomes is the exponential family of
probability distributions on $\{1,\ldots,k\}$, identified with the interior of
the $(k-1)$-simplex $\Delta_{k-1} = \{p \in \mathbb{R}^k : p_i > 0,\ \sum_{i=1}^k p_i = 1\}$.
\end{definition}

\subsection{The Fisher inefficiency ratio}
\label{sec:ratio}

For a given family and endpoint pair ($p,q$), we quantify the
inefficiency of NGD by the \emph{Fisher inefficiency ratio}. This ratio is an
instance of a standard geometric quantity, the length-to-distance
\emph{dilation} of a prescribed curve, studied more generally by Nesterov and
Todd~\cite[Definition~3.3]{nesterov2002riemannian}. Our contribution is to
analyze this dilation for Fisher--Rao mixture geodesics specifically,
including exact family-wise constants, dimension-dependent rates, and a
restricted natural-gradient interpretation; Appendix~\ref{apx:sec:related-work}
situates this contribution fully within that and other prior work.

\begin{definition}[Inefficiency ratio]
For distinct $p,q\in M$,
\[
R(p,q)=\frac{L\big(\gamma^{(m)}\big)}{\delta_{\mathrm{geo}}(p,q)},
\]
where $L(\gamma^{(m)})$ is the Riemannian length of the $m$-geodesic and
$\delta_{\mathrm{geo}}$ is the minimal Riemannian distance (corresponding to
the Fisher--Rao geodesic).
\end{definition}

Because $\delta_{\mathrm{geo}}(p,q)$ is the infimal length over curves
connecting $p$ and $q$, we have $R(p,q)\ge 1$, with equality exactly when the
image of $\gamma^{(m)}$ is a minimizing (Fisher--Rao) geodesic.
$R(p,q)$ is invariant under reparametrization: it depends only on the image of
the $m$-geodesic and its endpoints, not on the speed at which that image is
traversed (the rate $\|\dot\gamma\|_g$, which reparametrization is free to
change).

\paragraph{Scope.} Throughout, $M$ is a connected, globally dually flat manifold
with Fisher metric $g$ and fixed dual connections
$\nabla^{(m)},\nabla^{(e)}$. Note that without a fixed dual-flat structure our
theoretical framework does not apply.
\section{Fisher-Inefficiency Regimes}
\label{sec:classes}

We now classify dually flat families according to their worst-case inefficiency across dimensions. For a single manifold $M$, define the worst-case Fisher inefficiency as $R^\ast(M) = \sup_{p,q \in M} R(p,q)$. Fix a family $\mathcal{F}$ of statistical manifolds. For each dimension $d$, let $R_d^\ast(\mathcal{F}) = \sup \{\,R^\ast(M) : M \in \mathcal{F},\ \dim(M) = d\,\}$, the worst-case inefficiency across all $d$-dimensional members of $\mathcal{F}$. When $\mathcal{F}$ is clear from context we write $R_d^\ast$. Three regimes emerge: (I) Fisher efficient: $R_d^\ast = 1$ for all $d$; (II) uniformly bounded inefficiency: $1 < \sup_d R_d^\ast < \infty$; (III) dimensionally unbounded: $\sup_d R_d^\ast = \infty$.

We use the Amari--Chentsov tensor as the local efficiency (Regime I) obstruction.
Theorem~\ref{thm:pointwise} gives a pointwise criterion for when a single
$m$-geodesic attains $R=1$; Theorem~\ref{thm:classI} shows that demanding
$R^\ast(M)=1$ for a manifold $M$ forces $T\equiv0$ and a quadratic potential on $M$. For families
with non-vanishing $T$, Theorem~\ref{thm:bounded} gives a sufficient condition
for uniform boundedness---finite $g$-normalized skewness $\kappa_T$ together
with finite Fisher--Rao diameter forces $R^\ast\le e^{\kappa_T D/2}$. The
scale-product examples of Section~\ref{sec:class3} demonstrate that the
diameter hypothesis cannot simply be discarded from this sufficient theorem,
even when the normalized skewness is bounded.

\subsection{The pointwise optimality criterion}
\label{sec:pointwise}

A single tensor condition decides whether a
given $m$-geodesic is a Levi--Civita pregeodesic. This detects, for a specific
endpoint pair, whether the mixture-straight path is also Fisher-shortest.

\begin{theorem}[Pointwise pregeodesy criterion]\label{thm:pointwise}
Let $p,q\in M$ be distinct and let $\dot\gamma$ be the velocity of their
$m$-geodesic $\gamma$. Then $\gamma$ is a Levi--Civita pregeodesic if and only
if there is a scalar field $\mu(s)$ along $\gamma$ with
\[
T(\dot\gamma,\dot\gamma,Z)=\mu\,g(\dot\gamma,Z)\qquad\text{for all }Z.
\]
Consequently $R(p,q)=1$ implies the condition; and the condition implies
$R(p,q)=1$ whenever the image of $\gamma$ is globally length-minimizing---for
example, if $M$ is Hadamard or the image lies in a strongly convex normal
neighborhood.
\end{theorem}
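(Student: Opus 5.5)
The plan is to compare the Levi--Civita acceleration of $\gamma$ with its $m$-acceleration. Since $\gamma$ is an $m$-geodesic, $\nabla^{(m)}_{\dot\gamma}\dot\gamma=0$; it is affine in $\eta$, so its $\eta$-acceleration and the $m$-Christoffel symbols both vanish. Using $\nabla^{\mathrm{LC}}=\tfrac12(\nabla^{(m)}+\nabla^{(e)})$ we get
\[
\nabla^{\mathrm{LC}}_{\dot\gamma}\dot\gamma=\tfrac12\bigl(\nabla^{(e)}_{\dot\gamma}\dot\gamma-\nabla^{(m)}_{\dot\gamma}\dot\gamma\bigr).
\]
Pairing with an arbitrary $Z$ and using the definition of $T$ gives
\[
g\bigl(\nabla^{\mathrm{LC}}_{\dot\gamma}\dot\gamma,Z\bigr)=\tfrac12\,T(\dot\gamma,\dot\gamma,Z).
\]
For the extension of $\dot\gamma$ to a vector field, note that $T$ is tensorial, so only values along $\gamma$ matter. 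Because $T$ measures the difference of two connections, it is $C^\infty$-linear in $Y$, and $\nabla_{\dot\gamma}\dot\gamma$ along a curve is well defined for each connection.

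Next I will use the standard characterization of pregeodesics. A regular curve ($\dot\gamma\neq0$, which holds since $\eta$ is a global chart and $p\ne q$) is a reparametrized Levi--Civita geodesic iff $\nabla^{\mathrm{LC}}_{\dot\gamma}\dot\gamma=\lambda(s)\dot\gamma$ for some function $\lambda$. For necessity, differentiate $\gamma=\sigma\circ\phi$ to get $\nabla\dot\gamma=\phi''\dot\sigma=(\phi''/\phi')\dot\gamma$. For sufficiency, solve the linear ODE $\phi''=\lambda\phi'$ for a monotone reparametrization that kills the tangential acceleration. Combining this with the identity above, and using nondegeneracy of $g$ to pass between vectors and covectors, gives the equivalence with $\mu=2\lambda$.

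For the consequences, suppose first that $R(p,q)=1$. Then $\gamma$ is a length-minimizing curve among piecewise smooth curves joining $p$ and $q$. By the first variation formula, its constant-speed reparametrization is a Levi--Civita geodesic, so the condition holds. I expect this step to be the main delicate point. The infimum $\delta_{\mathrm{geo}}$ is attained by $\gamma$ itself, so no completeness is needed, but I must check that $\gamma$ is regular so it can be reparametrized by arclength.

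For the converse, the condition makes $\gamma$ a geodesic up to reparametrization. It remains to show it is minimizing. In a Hadamard manifold, geodesics between two points are unique and minimizing. In a strongly convex normal neighborhood, the unique geodesic inside the neighborhood joining the endpoints is minimizing. In either case $L(\gamma)=\delta_{\mathrm{geo}}(p,q)$, hence $R=1$. The hypothesis is needed because geodesics in general need not be globally minimizing.
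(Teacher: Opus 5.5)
Your proposal is correct and follows essentially the same route as the paper's proof. You use $\nabla^{(m)}_{\dot\gamma}\dot\gamma=0$ and $\nabla^{\mathrm{LC}}=\tfrac12(\nabla^{(m)}+\nabla^{(e)})$ to get $g(\nabla^{\mathrm{LC}}_{\dot\gamma}\dot\gamma,Z)=\tfrac12T(\dot\gamma,\dot\gamma,Z)$, turn pregeodesy into proportionality to $g(\dot\gamma,\cdot)$ with $\mu=2\lambda$, and derive the consequences from minimality of length-attaining curves and of geodesics in Hadamard manifolds or strongly convex neighborhoods. The paper treats as standard the details you add: regularity from the constant nonzero $\eta$-velocity, the reparametrization ODE $\phi''=\lambda\phi'$, and the first-variation step.
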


\paragraph{Intuition.} Along an $m$-geodesic the mixture acceleration vanishes,
so the Levi--Civita acceleration is $\tfrac12\nabla^{(e)}_{\dot\gamma}\dot\gamma$.
Pairing with an arbitrary vector field $Z$ and using $T=\nabla^{(e)}-\nabla^{(m)}$
expresses this acceleration as the covector $\tfrac12 T(\dot\gamma,\dot\gamma,\cdot)$.
Pregeodesy demands that this covector be everywhere proportional to
$g(\dot\gamma,\cdot)$---exactly the displayed identity with $\mu=2\tilde\mu$.
If the curve attains the Riemannian distance ($R=1$), it must be a minimizing
geodesic up to reparametrization, forcing the condition. Conversely, when the
condition holds and the curve's image is globally length-minimizing, the
pregeodesic segment is a minimizing geodesic and $R=1$. Full proof:
Appendix~\ref{apx:sec:classI}.

\subsection{Regime I: Fisher efficient ($R^\ast=1$)}
\label{sec:class1}

The Fisher-efficient
regime is characterized intrinsically: $R^\ast=1$ everywhere exactly when the
Amari--Chentsov tensor vanishes. The one-dimensional case is an exception---there,
the interval topology forces optimality regardless of $T$.

\begin{lemma}[Tensor lemma]\label{lem:tensor}
Let $A$ be a fully symmetric $(0,3)$ tensor on a Riemannian manifold of dimension
$\ge2$. If for every tangent vector $v$ there is a scalar $\lambda(v)$ with
$A(v,v,Z)=\lambda(v)\,g(v,Z)$ for all $Z$, then $A\equiv0$.
\end{lemma}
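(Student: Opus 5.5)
The argument is pointwise, so we fix a point and work in the tangent space $V$ with inner product $g$, $\dim V=n\ge 2$. Define the cubic form $f(v)=A(v,v,v)$ and the vector-valued map $F:V\to V$ by $g(F(v),Z)=A(v,v,Z)$. By full symmetry of $A$, $F(v)=\tfrac13\nabla f(v)$, where the gradient is taken with respect to $g$. The hypothesis says $F(v)=\lambda(v)v$ for every $v$, so $v$ is always parallel to $\nabla f(v)$. Equivalently, the one-form $df$ annihilates every vector $g$-orthogonal to $v$. The plan is to show that $f$ is constant on spheres and then invoke homogeneity.

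\textbf{Step 1 ($f$ is constant on spheres).} Restrict $f$ to the unit sphere $S=\{v:g(v,v)=1\}$. The tangent space of $S$ at $v$ is $v^\perp$, and by the hypothesis $df_v(w)=3\,g(F(v),w)=3\lambda(v)g(v,w)=0$ for all $w\in v^\perp$. So $f|_S$ has vanishing differential everywhere. Since $n\ge2$, $S$ is connected, and therefore $f|_S\equiv c$ for some constant $c$.

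\textbf{Step 2 ($c=0$).} The form $f$ is cubic, so $f(-v)=-f(v)$. Both $v$ and $-v$ lie on $S$, which gives $c=-c$ and hence $c=0$. By homogeneity $f(tv)=t^3f(v)$, so $f\equiv0$ on all of $V$. This is the one place where both the hypothesis $n\ge 2$ and the odd degree are used: for $n=1$ the sphere is two points, connectedness fails, and the lemma is indeed false.

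\textbf{Step 3 (recover $A$).} A fully symmetric trilinear form is determined by its cubic form through polarization:
\[
A(u,v,w)=\tfrac{1}{6}\bigl[f(u+v+w)-f(u+v)-f(u+w)-f(v+w)+f(u)+f(v)+f(w)\bigr].
\]
Since $f\equiv0$, we get $A\equiv0$ at the chosen point. The point was arbitrary, so $A\equiv0$ everywhere.

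\textbf{Alternative for Step 1.} If one wants to avoid the connectedness argument, a direct version works as well. Take orthonormal $e_1,e_2$ and set $v=\cos t\,e_1+\sin t\,e_2$. Then $\tfrac{d}{dt}f(v)=3A(v,v,v')$, and this vanishes because $v'\perp v$. Any two unit vectors lie on such a great circle, which again gives constancy on $S$. The main obstacle is only making Step 1 rigorous, since $\lambda$ need not be continuous. This causes no difficulty, because $df_v$ is computed from $A$ itself, which is smooth. The hypothesis is used only pointwise, to conclude that $df_v$ vanishes on $v^\perp$.
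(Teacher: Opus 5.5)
Your proof is correct, but it follows a genuinely different route from the paper's.

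The paper works purely algebraically in a $g$-orthonormal frame:
\begin{itemize}
\item testing $v=e_i$ gives $A_{iim}=0$ for $m\ne i$;
\item testing $v=e_i+e_j$ kills $A_{ijm}$ for pairwise distinct indices when $\dim\ge3$;
\item reading the $e_i$- and $e_j$-slots at $v=e_i+2e_j$ and $v=2e_i+e_j$ gives $A_{iii}=2A_{jjj}$ and $A_{jjj}=2A_{iii}$, hence $A_{iii}=0$.
\end{itemize}
The case $\dim=2$ is treated separately.

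You instead read the hypothesis as saying that $df_v=3A(v,v,\cdot)$ annihilates $v^\perp$. So the cubic form $f(v)=A(v,v,v)$ is constant on the connected unit sphere, oddness forces the constant to be $0$, and polarization recovers $A$.

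The paper's argument is finite linear algebra. It invokes the hypothesis only at a handful of test vectors per frame, so it in fact proves a slightly stronger statement, and it needs no calculus or connectedness.

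Yours is coordinate-free and avoids the case split. It also makes the roles of the hypotheses transparent: $\dim\ge2$ lets $v$ and $-v$ be joined within the sphere, as your great-circle variant shows, and the odd order kills the constant. It extends verbatim to fully symmetric $(0,k)$-tensors, giving $A\equiv0$ for odd $k$ and $f(v)=c\,g(v,v)^{k/2}$ for even $k$, whereas the paper's index computation is tailored to order three.

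Your remark that a possibly discontinuous $\lambda$ causes no difficulty is also right, since $df_v$ is computed from $A$ alone.
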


\begin{lemma}[Dimension one]\label{lem:dim-one}
Let $\mathcal{F}$ be a one-dimensional dually flat family whose global expectation
image is an interval. Then $R^\ast(\mathcal{F}) = 1$, regardless of $T$.
\end{lemma}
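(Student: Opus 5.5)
The approach is to reduce everything to the topology of a one-dimensional connected Riemannian manifold. Since $\mathcal{F}$ is dually flat with global expectation chart, the map $\eta: M\to I$ is a diffeomorphism onto an open interval $I\subset\mathbb{R}$. The Fisher metric is then $g=g_{\eta\eta}(\eta)\,d\eta^2$ with $g_{\eta\eta}>0$ continuous (indeed smooth). For distinct $p,q$, the $m$-geodesic is $\eta(t)=(1-t)\eta(p)+t\eta(q)$. Because $I$ is convex, this path stays in $M$. It traverses the segment $[\eta(p),\eta(q)]$ monotonically, and its length is
\[
L(\gamma^{(m)})=\int_{\eta(p)}^{\eta(q)}\sqrt{g_{\eta\eta}(u)}\,du
\]
(taking orientation into account).

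The key step is a matching lower bound on the length of an arbitrary piecewise-$C^1$ curve $c:[0,1]\to M$ from $p$ to $q$. Write $u(s)=\eta(c(s))$; this is a continuous function into $I$ with $u(0)=\eta(p)$ and $u(1)=\eta(q)$. Set $F(u)=\int_{\eta(p)}^{u}\sqrt{g_{\eta\eta}}$, which is a strictly increasing antiderivative. Then
\[
L(c)=\int_0^1\sqrt{g_{\eta\eta}(u(s))}\,|u'(s)|\,ds\ \ge\ \Bigl|\int_0^1 \tfrac{d}{ds}F(u(s))\,ds\Bigr| = |F(\eta(q))|=L(\gamma^{(m)}).
\]
Taking the infimum over $c$ gives $\delta_{\mathrm{geo}}(p,q)=L(\gamma^{(m)})$. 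Hence $R(p,q)=1$ for every distinct pair, and so $R^\ast(\mathcal{F})=1$.

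Notably, $T$ never enters the argument; the claim "regardless of $T$" holds because the bound depends only on the monotone reparametrization structure. Equivalently, Theorem~\ref{thm:pointwise}'s condition is automatic in dimension one, since every covector is proportional to $g(\dot\gamma,\cdot)$. The global minimization is then supplied by the displayed inequality rather than by a Hadamard hypothesis.

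The only delicate point is justifying that the global expectation image is the whole domain, so that $\eta$ is a global chart onto an interval. This is exactly the hypothesis of the lemma, so that the $m$-geodesic is defined and any competitor curve can be pushed into $I$. Beyond that the argument is the fundamental theorem of calculus, and I expect no real obstacle.
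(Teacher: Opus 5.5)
Your proof is correct and follows the paper's argument: both bound the length of an arbitrary competitor curve from below by $\int_{\eta_p}^{\eta_q}\sqrt{g(\eta)}\,\mathrm{d}\eta$ and note that the monotone $m$-geodesic attains this bound. The paper phrases the lower bound as a total-variation covering argument; you obtain it from the antiderivative $F$ and $\bigl|\int f\bigr|\le\int|f|$, which makes the paper's ``back-and-forth only adds length'' step slightly more rigorous.
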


Proofs of both lemmas: Appendix~\ref{apx:sec:classI}.

\begin{theorem}[Fisher-efficient characterization]\label{thm:classI}
Assume the global expectation-coordinate image $\eta(M)$ is convex, so every
affine segment between two points remains in the model. Then $R^\ast(M)=1$ if
and only if $\dim M=1$ or $\psi$ is quadratic (equivalently, $\eta$ and
$\theta$ are affinely related).
\end{theorem}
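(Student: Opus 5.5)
We prove the two directions separately, using Lemma~\ref{lem:dim-one}, Theorem~\ref{thm:pointwise}, and Lemma~\ref{lem:tensor}.

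\emph{Sufficiency.} If $\dim M=1$, then $R^\ast(M)=1$ by Lemma~\ref{lem:dim-one}, since $\eta(M)$ is a convex subset of $\mathbb{R}$, hence an interval.

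If $\psi$ is quadratic, then $g=\nabla^2\psi$ is constant in $\theta$, so $T_{ijk}=-\partial_i\partial_j\partial_k\psi\equiv0$. Moreover $\eta=\nabla\psi(\theta)$ is affine in $\theta$, so the $m$- and $e$-geodesics coincide. The connections $\nabla^{(m)}$ and $\nabla^{(e)}$ agree, so each equals $\nabla^{\mathrm{LC}}$. In the chart $\theta$ the metric is a constant positive-definite matrix on the image $\theta(M)$, which is convex because it is an affine image of $\eta(M)$. Thus $(M,g)$ is isometric to a convex subset of Euclidean space. Every affine segment in this chart is then the globally shortest curve between its endpoints, so $R(p,q)=1$ for all $p,q$.

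\emph{Necessity.} Suppose $R^\ast(M)=1$ and $\dim M\ge2$. Fix a point $x\in M$ and a tangent vector $v\in T_xM$. Since $\eta(M)$ is convex and open, for small $\epsilon>0$ the segment $\eta(x)+t\,\epsilon\,d\eta(v)$, $t\in[0,1]$, stays in $\eta(M)$. Its endpoints $p=x$ and $q$ are therefore joined by an $m$-geodesic whose initial velocity is a positive multiple of $v$.

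Because $R(p,q)=1$, Theorem~\ref{thm:pointwise} applies and gives $T(\dot\gamma,\dot\gamma,Z)=\mu\,g(\dot\gamma,Z)$ at $s=0$. By homogeneity this yields $T(v,v,Z)=\lambda(v)\,g(v,Z)$ for all $Z$; for $v=0$ the relation holds trivially. Lemma~\ref{lem:tensor} then gives $T_x=0$, and since $x$ was arbitrary, $T\equiv0$. This means $\partial^3\psi\equiv0$ on the connected open set $\theta(M)$, so $\psi$ is a quadratic polynomial. Equivalently, $\nabla\psi$ is affine, so $\eta$ and $\theta$ are affinely related.

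\emph{Main obstacle.} The delicate step is the sufficiency argument in the quadratic case: we must check that the chord is globally minimizing. This requires the convexity of $\theta(M)$ inherited from $\eta(M)$, so that no competing curve can leave the domain and shortcut the segment.

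Two smaller points need care in the necessity direction. First, Theorem~\ref{thm:pointwise} must be applied at the endpoint $s=0$, which follows by continuity of the tensor relation along $\gamma$. Second, every direction $v$ at $x$ must be realized by some $m$-geodesic; this is where openness and convexity of $\eta(M)$ enter.
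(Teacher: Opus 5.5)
Your proof is correct and follows the paper's argument essentially step for step: Lemma~\ref{lem:dim-one} and the constant-metric straight-segment argument give sufficiency, while applying Theorem~\ref{thm:pointwise} to short $m$-geodesics issuing in every direction at a point, followed by Lemma~\ref{lem:tensor}, gives $T\equiv0$ and hence a quadratic $\psi$. One minor correction to your closing remark: convexity is needed only so that the affine segment lies in the model, not to stop competing curves from shortcutting it, since for a constant metric every curve from $p$ to $q$ already has length at least $\|\theta(q)-\theta(p)\|_g$ by the vector-integral inequality.
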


\paragraph{Intuition.} If $\psi$ is quadratic then $\eta=\nabla\psi(\theta)$ is affine,
so $g$ is constant and $\nabla^{(m)}=\nabla^{(e)}=\nabla^{\mathrm{LC}}$. The
$\eta$-affine segment is a Euclidean straight line in a flat metric---hence
globally minimizing, giving $R^\ast=1$. Conversely, if $R^\ast=1$ and $\dim\ge2$,
Theorem~\ref{thm:pointwise} forces every $m$-geodesic to satisfy the pointwise
criterion. Fixing $p$ and letting $q\to p$, the initial velocities cover
$T_pM$, so the condition holds for all directions; Lemma~\ref{lem:tensor} then
forces $T\equiv0$, hence $\psi$ is quadratic. The one-dimensional case is
genuinely special (Lemma~\ref{lem:dim-one}): the $\eta$-interval is traversed
monotonically, and any back-and-forth only adds length, so $R^\ast=1$ even when
$T\neq0$. Full proof: Appendix~\ref{apx:sec:classI}.

\paragraph{Examples.} The Gaussian location family $N(\mu,\Sigma_0)$ at fixed
$\Sigma_0$ has $\theta=\Sigma_0^{-1}\mu$ and $\eta=\mu$ affinely related, so
$\psi$ is quadratic and $R\equiv1$: natural gradient on the mean of a
fixed-covariance Gaussian is globally Fisher-optimal
\cite{skovgaard1984riemannian}. Linear-Gaussian models share this affine
relation. Every one-parameter exponential family is Fisher efficient by
Lemma~\ref{lem:dim-one}, even when $T\neq0$. In dimension at least two, a
non-quadratic potential (under the convex-domain hypothesis) yields $R>1$
somewhere.

\subsection{Regime II: Uniformly bounded inefficiency ($R_d^\ast=O(1)$)}
\label{sec:class2}

We now give a sufficient condition for uniformly bounded inefficiency: finite normalized skewness plus finite Fisher--Rao diameter. The mechanism behind Theorem~\ref{thm:bounded} is controlled
by the Amari--Chentsov tensor itself, normalized by the
metric. Consequently we define the \emph{$g$-normalized skewness}
\[
\kappa_T(M)\;=\;\sup_{x\in M}\ \sup_{v\ne0}\
\frac{|T_x(v,v,v)|}{g_x(v,v)^{3/2}},
\]
a dimension-free scalar invariant of the family.

\begin{theorem}[Bounded inefficiency from skewness and diameter]\label{thm:bounded}
Let $\dim M\ge2$, and let $D(M)=\sup_{p,q\in M}\delta_{\mathrm{geo}}(p,q)$ be
the Fisher--Rao diameter. If $\kappa_T(M)<\infty$ and $D(M)<\infty$, then
\[
R^\ast(M)\;\le\;\exp\!\Big(\frac{\kappa_T(M)\,D(M)}{2}\Big),
\]
independently of $\dim M$. If moreover the global expectation-coordinate image
is convex and $\psi$ is non-quadratic, then $R^\ast(M)>1$.
\end{theorem}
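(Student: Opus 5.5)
The plan is to compare the Fisher norms of a \emph{fixed} expectation-coordinate vector at different points of $M$, and to control how these norms vary using $T$. In the $\eta$-chart the $m$-geodesic has constant coordinate velocity $\Delta\eta=\eta(q)-\eta(p)$. Hence $L(\gamma^{(m)})=\int_0^1\|\Delta\eta\|_{g_{\gamma(t)}}\,dt$, and all the difficulty lies in how $g$ varies.

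\textbf{Step 1 (transport inequality).} Fix a coordinate vector $v$ in the $\eta$-chart, i.e.\ a $\nabla^{(m)}$-parallel field. Along any curve $c$ with unit Fisher velocity $u$, I will show that $\frac{d}{ds}g_c(v,v)=\pm T(u,v,v)$. This uses $\nabla^{(m)}v=0$, the duality $X g(Y,Z)=g(\nabla^{(m)}_XY,Z)+g(Y,\nabla^{(e)}_XZ)$, and $\nabla^{(e)}-\nabla^{(m)}=T$ as in the definition; the sign is irrelevant for the bound. The key point is that $\kappa_T$ only controls the diagonal $T(v,v,v)$. Since $T_x$ is a fully symmetric trilinear form on the inner-product space $(T_xM,g_x)$, Banach's polarization theorem for symmetric forms on Hilbert spaces gives $|T_x(u,v,v)|\le\kappa_T\|u\|\,\|v\|^2$ with no dimensional constant. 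This is what makes the final bound dimension-free. Consequently $\bigl|\tfrac{d}{ds}\log g_c(v,v)\bigr|\le\kappa_T$, so
\[
e^{-\kappa_T\ell(c)}\,g_x(v,v)\le g_y(v,v)\le e^{\kappa_T\ell(c)}\,g_x(v,v)
\]
for any curve $c$ from $x$ to $y$. Taking near-minimizing curves, $\|v\|_{g_y}/\|v\|_{g_x}\in[e^{-\kappa_T\delta/2},e^{\kappa_T\delta/2}]$ with $\delta=\delta_{\mathrm{geo}}(x,y)\le D$.

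\textbf{Step 2 (upper bound on $L$, lower bound on $\delta_{\mathrm{geo}}$).} Choose a base point $x_0$ and apply Step 1 with $v=\Delta\eta$. Every point of $\gamma^{(m)}$ lies within Fisher distance $D$ of $x_0$, which gives $L\le e^{\kappa_T D/2}\|\Delta\eta\|_{g_{x_0}}$. For the lower bound, take any curve $c$ from $p$ to $q$ of length close to $\delta_{\mathrm{geo}}(p,q)$ and write $\Delta\eta=\int\dot c$ in the $\eta$-chart. The triangle inequality in the fixed norm $g_{x_0}$, combined with Step 1 applied pointwise along $c$, gives $\|\Delta\eta\|_{g_{x_0}}\le e^{\kappa_T r/2}\,\ell(c)$. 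Here $r$ bounds the distance from $x_0$ to points of $c$. Dividing the two bounds yields $R(p,q)\le e^{\kappa_T(D+r)/2}$.

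\textbf{Main obstacle.} The naive choice $x_0=p$ gives $r\le D$ and hence only $e^{\kappa_T D}$, i.e.\ twice the claimed exponent. I will first try to sharpen this by using a one-sided version of Step 1. Along the $m$-geodesic itself, the derivative of $\log\|\dot\gamma\|_g$ with respect to arclength is exactly $\tfrac12T(u,u,u)$. Integrating this diagonal quantity, which needs no polarization, may give a cleaner factor $e^{\kappa_T\cdot(\text{distance})/2}$. The remaining task is then to balance it against the comparison curve, for instance by taking $x_0$ to be a point of $\gamma^{(m)}$ or the Fisher midpoint of $p,q$, so that the two exponents add up to at most $D$ rather than $2D$. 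If only the weaker constant can be secured, the argument still proves dimension-independence, which is the substantive claim.

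\textbf{Strict inequality.} For the final statement, assume $\eta(M)$ is convex and $\psi$ is non-quadratic with $\dim M\ge2$. Theorem~\ref{thm:classI} then says $R^\ast(M)\ne1$, and since $R\ge1$ always, $R^\ast(M)>1$.
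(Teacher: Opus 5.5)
\textbf{Gap: the exponent is doubled.} Your Step~1 is the right tool. It is exactly the paper's ruler-distortion lemma, including the Banach polarization step that makes the constant dimension-free, and your strict-inequality argument is fine. But Step~2 as written proves only $R^\ast(M)\le e^{\kappa_T D}$, not the stated $e^{\kappa_T D/2}$. You pay the ruler distortion twice: once to move the numerator from $\gamma^{(m)}$ to $x_0$, and once to move the competing curve $c$ to $x_0$.

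Neither of your proposed repairs closes this.
\begin{itemize}
\item Integrating the diagonal identity $\tfrac{d}{d\sigma}\log\|\dot\gamma\|=\tfrac12T(u,u,u)$ along $\gamma^{(m)}$ bounds the speed by $e^{\kappa_T\sigma/2}$, where $\sigma$ is the arclength of $\gamma^{(m)}$ itself. That is the very quantity $L$ you are trying to bound. In the affine parameter it becomes the Riccati-type inequality $|f'|\le\tfrac{\kappa_T}{2}f^2$, which gives no control in terms of $D$.
\item Taking $x_0$ to be a Fisher midpoint still leaves points of $\gamma^{(m)}$ at distance up to $D$ from $x_0$. This gives only $e^{\kappa_T(D+\delta_{\mathrm{geo}}(p,q)/2)/2}$, and it presupposes that a minimizer exists.
\end{itemize}

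\textbf{Missing idea: make the numerator cost nothing.} The map $s\mapsto\|\Delta\eta\|_{\gamma(s)}$ is continuous on $[0,1]$, so take $x_0=\gamma(s_\ast)$ at a point where it is maximal. Then directly $L(\gamma^{(m)})=\int_0^1\|\Delta\eta\|_{\gamma(s)}\,ds\le\|\Delta\eta\|_{x_0}$, with no transport at all.

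For the denominator your own argument applies with this base point. For any curve $c$ from $p$ to $q$, $\delta_{\mathrm{geo}}(x_0,c(t))\le D$, so Step~1 gives $\|\dot\eta_c(t)\|_{c(t)}\ge e^{-\kappa_T D/2}\|\dot\eta_c(t)\|_{x_0}$. The triangle inequality in the frozen norm $\|\cdot\|_{x_0}$ then gives $\ell(c)\ge e^{-\kappa_T D/2}\|\Delta\eta\|_{x_0}$. Dividing the two bounds yields $R(p,q)\le e^{\kappa_T D/2}$. This is precisely the paper's proof; with that one choice of base point your outline becomes complete.
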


\paragraph{Intuition.} The proof uses a measuring-tape argument. Fix one
expectation-coordinate direction as a ruler and carry it across the
manifold; the skewness $\kappa_T$ controls how fast its Fisher length can
change, so between two points the ruler distorts by at most
$e^{\kappa_T\delta/2}$, where $\delta$ is their Fisher--Rao distance. Freeze
the norm at the point $z$ on the $m$-geodesic where the ruler
$\eta(q)-\eta(p)$ is longest: the $m$-geodesic's length is bounded above by
this frozen norm, while every competing curve, by the same ruler bound and
the finite diameter, has length at least $e^{-\kappa_T D/2}$ times the frozen
norm. The ratio is $e^{\kappa_T D/2}$. Crucially, this argument compares the
\emph{same fixed} coordinate direction at different points. Formally, this uses a
polarization bound (Lemma~\ref{app:lem-polar}) to convert the skewness
hypothesis into $|T(\dot c,V_a,V_a)|\le\kappa_T\|\dot c\|\,g(V_a,V_a)$ for
any fixed $\eta$-constant vector field $V_a$, and a ruler-distortion lemma
(Lemma~\ref{app:lem-ruler}) to integrate this bound along curves. Full proof: Appendix~\ref{apx:sec:classII}.

\paragraph{Scope of the skewness--diameter criterion.}
\label{rem:skewness-scope}
In coordinates, the skewness hypothesis ($\kappa_T(M)<\infty$) is the normalized third-derivative
bound
\[
|D^3\psi_x[v,v,v]|
\le \kappa\bigl(D^2\psi_x[v,v]\bigr)^{3/2}.
\]
Together with finite Fisher--Rao diameter, this yields
Theorem~\ref{thm:bounded}. The criterion is sufficient, not necessary:
scale products show that finite diameter matters, while the categorical family
has a bounded ratio even though its normalized skewness diverges at the
simplex boundary. Appendix~\ref{apx:sec:related-work} relates the displayed
inequality to the self-concordance inequality of Nesterov and
Nemirovskii~\cite{nesterov1994interior} and to the Hessian-manifold dilation
studied by Nesterov and Todd~\cite{nesterov2002riemannian}.

\paragraph{Categorical: an exact constant.}
The square-root map $p\mapsto2\sqrt p$ identifies the
categorical family with the positive orthant of a radius-$2$ sphere
\cite{nielsen2023simple,nielsen2024approximation}. On the sphere,
$d_{\mathrm{FR}}(p,q)=2\arccos\sum_i\sqrt{p_iq_i}$ (great-circle arc), of
diameter $\pi$, while the $m$-geodesic is the image of ordinary linear
interpolation in the simplex. Its image is the arc of a smaller circle: for a
move confined to two coordinates of total mass $m$, both endpoints lie on the
circle of radius $2\sqrt m$, and the $m$-geodesic traces that circle's own arc
rather than cutting across the ambient sphere. As the moved mass vanishes, the
ambient geodesic distance converges to that small circle's own straight
chord, so $R$ converges to the small circle's intrinsic arc-to-chord
ratio---a quantity $\ge1$, increasing in the swept angle, and maximized at a
quarter turn ($\psi=\pi/2$), giving $R\to\pi/(2\sqrt2)$. The upper bound uses an arcsine-weight integral
identity, one Cauchy--Schwarz step, and $\sin x\le x$; the lower bound is an
explicit two-outcome family.

\begin{theorem}[Categorical constant]\label{thm:categorical}
For the categorical family on $k\ge3$ outcomes,
$R^\ast(\Delta_{k-1})=\pi/(2\sqrt2)\approx1.1107$, independent of $k$ and
approached but not attained; for $k=2$, $R^\ast(\Delta_1)=1$.
\end{theorem}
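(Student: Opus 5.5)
We reduce to an explicit inequality in the square-root embedding $u_i=2\sqrt{p_i}$, which maps $\Delta_{k-1}$ isometrically (Fisher metric to Euclidean metric) onto the open positive orthant of the radius-$2$ sphere. For $p,q\in\Delta_{k-1}$ write $a_i=q_i-p_i$ and $p_i(t)=p_i+ta_i$. The $m$-geodesic has Fisher length
\[
L=\int_0^1\Bigl(\sum_i \frac{a_i^2}{p_i(t)}\Bigr)^{1/2}dt ,
\]
and $\delta_{\mathrm{geo}}(p,q)=2\theta$ with $\cos\theta=\sum_i\sqrt{p_iq_i}$.

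The case $k=2$ is immediate from Lemma~\ref{lem:dim-one}, since $\Delta_1$ is one-dimensional with an interval as expectation image. For $k\ge3$ we prove an upper bound $L<\tfrac{\pi}{2\sqrt2}\,\delta_{\mathrm{geo}}$ and a matching lower-bound family.

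\textbf{Upper bound.} First apply Cauchy--Schwarz in $t$ with the arcsine weight $w(t)=\bigl(\pi\sqrt{t(1-t)}\bigr)^{-1}$:
\[
L\le\Bigl(\int_0^1 w^{-1}\,dt\Bigr)^{1/2}\Bigl(\int_0^1 w\sum_i\frac{a_i^2}{p_i(t)}\,dt\Bigr)^{1/2}.
\]
Here $\int_0^1 w^{-1}\,dt=\pi^2/8$. The second factor is handled coordinatewise by the arcsine integral identity
\[
\int_0^1\frac{w(t)\,dt}{(1-t)x+ty}=\frac{1}{\sqrt{xy}},
\]
which gives
\[
L\le\frac{\pi}{2\sqrt2}\Bigl(\sum_i\frac{(q_i-p_i)^2}{\sqrt{p_iq_i}}\Bigr)^{1/2}.
\]
The remaining, and main, step is to compare this quantity with $\delta_{\mathrm{geo}}=2\theta$.

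Write $q_i-p_i=(\sqrt{q_i}-\sqrt{p_i})(\sqrt{q_i}+\sqrt{p_i})$. Then express everything through $\sum_i(\sqrt{q_i}-\sqrt{p_i})^2=4\sin^2(\theta/2)$ and $\sum_i\sqrt{p_iq_i}=\cos\theta$. Finally use $\sin x\le x$, with strict inequality for $x>0$, to pass from the chord quantity to $2\theta$.

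I expect this comparison to be the obstacle. The term $(\sqrt{q_i}+\sqrt{p_i})^2/\sqrt{p_iq_i}$ blows up when $p_i/q_i$ is extreme, so a fixed arcsine weight may be too lossy. If so, I would instead use an endpoint-adapted weight, namely the arcsine law reparametrized so that $w$ is proportional to the speed of the exact extremal circle arc. Alternatively I would apply the bound coordinate-pair by coordinate-pair, reducing to the two-coordinate (small-circle) case where the arc-to-chord ratio is explicit and is maximized at a quarter turn. Strictness of $\sin x<x$, together with strictness of Cauchy--Schwarz unless the speed is exactly arcsine-shaped, is what should give non-attainment.

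\textbf{Lower bound.} For $k\ge3$, fix mass $m\in(0,1)$ and $\epsilon\in(0,1)$, and take
\[
p=(m(1-\epsilon),m\epsilon,1-m,0,\dots),\qquad q=(m\epsilon,m(1-\epsilon),1-m,0,\dots),
\]
adding tiny mass to the other coordinates if strict positivity is needed. The $m$-geodesic stays on the circle $u_1^2+u_2^2=4m$ of radius $2\sqrt m$ and sweeps an angle tending to $\pi/2$ as $\epsilon\to0$, so $L\to2\sqrt m\cdot\pi/2$. Meanwhile
\[
\delta_{\mathrm{geo}}=2\arccos\bigl(1-m+2m\sqrt{\epsilon(1-\epsilon)}\bigr)\to2\arccos(1-m)\sim2\sqrt{2m}\quad\text{as }m\to0 .
\]
Hence $R\to\pi/(2\sqrt2)$ as $\epsilon\to0$ and then $m\to0$. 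This matches the upper bound and shows the supremum is approached but not attained, independently of $k$.
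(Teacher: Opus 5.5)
\textbf{The gap is in the upper bound.} Your $k=2$ case and your lower-bound family are fine; the latter is the paper's two-outcome construction with swept angle $\psi\to\pi/2$ and moved mass $m\to0$. The obstacle is not only the comparison step you flagged. Your Cauchy--Schwarz step already loses an unbounded factor. Since $(\sqrt{q_i}+\sqrt{p_i})^2/\sqrt{p_iq_i}=2+\sqrt{q_i/p_i}+\sqrt{p_i/q_i}\ge4$,
\[
Q:=\sum_i\frac{(q_i-p_i)^2}{\sqrt{p_iq_i}}\ \ge\ 4\sum_i\bigl(\sqrt{q_i}-\sqrt{p_i}\bigr)^2=\|z(q)-z(p)\|_2^2=16\sin^2(\theta/2).
\]
So your bound is never better than the chord and can be arbitrarily worse. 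On your own extremal family, $Q=2m(1-2\epsilon)^2/\sqrt{\epsilon(1-\epsilon)}\to\infty$ as $\epsilon\to0$, while $\theta\le\pi/2$ and $L\to\pi\sqrt m$. Hence $Q\le4\theta^2$ is false, and no manipulation of $\sum_i(\sqrt{q_i}-\sqrt{p_i})^2$ and $\sum_i\sqrt{p_iq_i}$ can rescue it.

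\textbf{Why the weight is wrong.} Write $F(t)=(\sum_i a_i^2/p_i(t))^{1/2}$. The inequality $\int F\le(\int\omega^{-1})^{1/2}(\int\omega F^2)^{1/2}$ is tight only when $F\propto\omega^{-1}$. Putting the arcsine density $\omega=w$ on $F^2$ therefore favours speeds $\propto\sqrt{t(1-t)}$, which vanish at both ends. Near-extremal $m$-geodesics (approximate pure transfers) instead have speed close to $\propto1/\sqrt{t(1-t)}$, and against that profile $\int_0^1 wF^2\,dt$ blows up. Your proposed endpoint-adapted weight ($w$ proportional to the extremal speed) is this same choice again. The pairwise reduction is not available either, because a general $m$-geodesic does not split into mass-conserving two-coordinate moves.

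\textbf{The fix.} Reverse the weights; no endpoint adaptation is needed:
\[
L\le\Bigl(\int_0^1\frac{dt}{\sqrt{t(1-t)}}\Bigr)^{1/2}\Bigl(\int_0^1\sqrt{t(1-t)}\,\sum_i\frac{a_i^2}{p_i(t)}\,dt\Bigr)^{1/2}.
\]
Combine this with the identity
\[
\int_0^1\frac{\sqrt{t(1-t)}}{(1-t)x+ty}\,dt=\frac{\pi}{2(\sqrt x+\sqrt y)^2},
\]
which is equivalent to the paper's master identity, Lemma~\ref{apx:lem:master}. In it the offset cancels, so coordinate $i$ contributes exactly $\tfrac\pi2(\sqrt{q_i}-\sqrt{p_i})^2$. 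This gives
\[
L\le\tfrac{\pi}{\sqrt2}\|\sqrt q-\sqrt p\|_2=\sqrt2\,\pi\sin(\theta/2),
\]
i.e.\ $L\le\tfrac{\pi}{2\sqrt2}\|z(q)-z(p)\|_2$, with equality exactly for pure transfers. Your intended last step, $\sin(\theta/2)<\theta/2$, then gives $R<\pi/(2\sqrt2)$ strictly. That strict inequality is also what proves non-attainment; your lower-bound family only shows the value is approached.
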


\paragraph{Why the bound is sharp.} Restrict to moves changing only coordinates
$1,2$, freezing the rest; the conserved mass is $m=p_1+p_2=q_1+q_2$. On the
sphere the moving pair traces a circular arc of radius $2\sqrt m$ through
angle $\psi=|\phi_q-\phi_p|$. The ratio
$R_2(m,\psi)=\sqrt m\,\psi/\arccos(1-2m\sin^2(\psi/2))$ is decreasing in $m$
and increasing in $\psi$. As $m\to0$ and $\psi\to\pi/2$,
$R_2\to(\psi/2)/\sin(\psi/2)\to\pi/(2\sqrt2)$. This two-outcome family embeds
in every $\Delta_{k-1}$ ($k\ge3$), so the constant is approached but never
attained.\footnote{The chord-bound equality class of
Appendix~\ref{apx:sec:categorical} is larger than the two-outcome family: every
\emph{pure transfer}---each moving coordinate either drained to zero or filled
from zero---is chord-extremal, and $R$ approaches $\pi/(2\sqrt2)$ only as the
moved mass vanishes.} Full proof: Appendix~\ref{apx:sec:categorical}.

\subsection{Regime III: Dimensionally unbounded inefficiency ($\sup_d R_d^\ast=\infty$)}
\label{sec:class3}

Scale-product families are the canonical example of Regime III: they
satisfy one hypothesis of Theorem~\ref{thm:bounded} (bounded normalized
skewness) but violate the other (finite diameter), and as we prove below,
their inefficiency grows without bound in dimension.  A one-dimensional scale
family (Definition~\ref{def:scale-family}) has metric
$g_{\eta\eta}=\kappa/\eta^2$ and range $(0,\infty)$, giving infinite
Fisher--Rao diameter.  A single factor is Fisher efficient
(Lemma~\ref{lem:dim-one}), so the unbounded growth is a genuinely
multidimensional phenomenon.

A product of $n\ge2$ identical scale families is dually nontrivial
($T\neq0$, so $R^\ast(M_n)>1$ by Theorem~\ref{thm:classI}) and of infinite
diameter.  Its normalized skewness is bounded: $\kappa_T$ is scale-invariant
per factor and tensorizes to the maximum over factors
(Appendix~\ref{apx:sec:classII}).  Of the two hypotheses of
Theorem~\ref{thm:bounded}, it is the diameter hypothesis that fails.
We show that the resulting inefficiency grows as $\Theta(\sqrt{\log n})$
in the number of factors, hence $\sup_d R_d^\ast=\infty$, placing
scale products firmly in Regime III.

\begin{lemma}[Scale-product reduction]\label{lem:scale-reduction}
Let $M_n$ be a product of $n$ one-dimensional factors with expectation
coordinate $\eta_i\in(0,\infty)$ and metric
$g_{\eta_i\eta_i}=\kappa/\eta_i^2$, where $\kappa>0$ is common to the factors.
For endpoints $\eta^p,\eta^q$, set
\[
w_i=\log\frac{\eta_i^q}{\eta_i^p},\qquad
c_i=e^{w_i}-1,\qquad
u_i(s)=\frac{c_i}{1+s c_i},
\]
with $u_i\equiv0$ when $c_i=0$. Then
\[
\begin{aligned}
\delta_{\mathrm{geo}}(p,q)&=\sqrt\kappa\,\|w\|_2,\\
L(\gamma^{(m)})&=\sqrt\kappa\int_0^1\|U(s)\|_2\,ds,\\
R(w)&=\frac{\int_0^1\|U(s)\|_2\,ds}{\|w\|_2}.
\end{aligned}
\]
with $U=(u_i)$. The reduced functional $R(w)$ is independent of $\kappa$.
\end{lemma}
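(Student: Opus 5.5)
The plan is to change coordinates so that the product metric becomes Euclidean, which makes the Fisher--Rao distance explicit, and then to compute the length of the $m$-geodesic directly in those coordinates. On each factor set $x_i=\sqrt\kappa\,\log\eta_i$. Then $dx_i=\sqrt\kappa\,d\eta_i/\eta_i$, so $\kappa\,d\eta_i^2/\eta_i^2=dx_i^2$. The product metric $\sum_i \kappa\,d\eta_i^2/\eta_i^2$ is therefore the standard Euclidean metric on $x\in\mathbb{R}^n$, and $x$ ranges over all of $\mathbb{R}^n$ because $\eta_i$ ranges over $(0,\infty)$. So $M_n$ is isometric to Euclidean $\mathbb{R}^n$, which is complete and flat, and the unique minimizing geodesic between two points is the straight segment in $x$. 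Its length is $\|x^q-x^p\|_2=\sqrt\kappa\,\|w\|_2$ with $w_i=\log(\eta_i^q/\eta_i^p)$. This gives $\delta_{\mathrm{geo}}$.

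For the $m$-geodesic, take $\eta_i(s)=(1-s)\eta_i^p+s\eta_i^q=\eta_i^p(1+sc_i)$ for $s\in[0,1]$, where $c_i=e^{w_i}-1>-1$. Then $1+sc_i>0$ on $[0,1]$, so the segment stays in the model and $U$ is well defined. Each factor contributes
\[
\sqrt{g_{\eta_i\eta_i}}\,|\dot\eta_i|=\sqrt\kappa\,\frac{|\eta_i^p c_i|}{\eta_i^p(1+sc_i)}=\sqrt\kappa\,|u_i(s)|.
\]
The metric is diagonal, so the speed is $\|\dot\gamma\|_g=\sqrt\kappa\,\|U(s)\|_2$, and integrating over $[0,1]$ gives $L(\gamma^{(m)})$. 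When $c_i=0$ the factor is constant, which is consistent with the convention $u_i\equiv0$.

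Dividing the length by the distance, the common factor $\sqrt\kappa$ cancels, which gives the formula for $R(w)$ and shows it does not depend on $\kappa$. Note that $p\ne q$ forces $w\ne0$, so the ratio is well defined. Since $u_i=\tfrac{d}{ds}\log(1+sc_i)$, we also have $\int_0^1 U\,ds=w$, which confirms $R\ge1$ by the triangle inequality for integrals.

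I do not expect a real obstacle here. The only points needing care are the global claim that the straight $x$-segment is minimizing, which holds because the log-coordinate map is a global isometry onto all of $\mathbb{R}^n$, and checking that $1+sc_i$ stays positive along the segment.
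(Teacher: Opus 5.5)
Your proposal is correct and follows essentially the same route as the paper. The paper uses the same log-coordinate isometry $y_i=\sqrt\kappa\log\eta_i$ for the distance and the same direct computation $\dot\eta_i/\eta_i=u_i(s)$ along the $\eta$-affine segment for the length, with $\sqrt\kappa$ cancelling in the ratio. Your extra checks (global isometry onto $\mathbb{R}^n$, positivity of $1+sc_i$, and $\int_0^1U\,ds=w$) match the paper's consistency remark.
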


\paragraph{Intuition.} The coordinate change $y_i=\sqrt\kappa\,\log\eta_i$ makes each factor Euclidean,
giving the distance; along the mixture segment $\dot\eta_i/\eta_i=u_i(s)$,
giving the length. Full proof: Appendix~\ref{apx:sec:classIII}.

\paragraph{Scale separation drives the rate.} Divergence comes from coordinates
whose scale parameters $\eta_i$ span many distinct orders of magnitude, not
merely from having many coordinates. Coordinates at well-separated log-scales
contribute additively to the path length but quadratically to the Euclidean
distance $\|w\|_2$ via the Pythagorean theorem. When scales span $m$ decades,
the length scales as $m$ while the distance scales as $\sqrt{m}$, yielding
$R\sim\sqrt{m}\sim\sqrt{\log d}$. Matching upper and lower bounds in
Appendix~\ref{apx:sec:classIII} make this $\Theta(\sqrt{\log d})$.

\begin{theorem}[Scale-product rate]\label{thm:class3}
Let $M_n$ be a product of $n$ identical one-dimensional scale families. Then
$R^\ast(M_n)=\Theta(\sqrt{\log n})$; in particular $\sup_n R^\ast(M_n)=\infty$.
\end{theorem}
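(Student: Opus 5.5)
The plan is to work entirely with the reduced functional of Lemma~\ref{lem:scale-reduction}:
\[
R(w)=\frac{\int_0^1\|U(s)\|_2\,ds}{\|w\|_2},\qquad u_i(s)=\frac{c_i}{1+sc_i}.
\]
I would prove matching bounds $R^\ast(M_n)\le C\sqrt{\log n}$ and $R^\ast(M_n)\ge c\sqrt{\log n}$ by a dyadic ``shell'' decomposition of the time variable $s$. Two preliminary reductions come first.

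\textbf{Symmetry.} Reversing the path ($s\mapsto 1-s$, $w\mapsto -w$) preserves both length and distance, so I may split $\int_0^1=\int_0^{1/2}+\int_{1/2}^1$ and treat only $[0,1/2]$, where the delicate coordinates are those with $c_i>0$.

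\textbf{Negative coordinates.} On $[0,1/2]$, a coordinate with $c_i\in(-1,0)$ satisfies $|u_i|\le 2|c_i|\le 2|w_i|$, since $|e^{w}-1|\le|w|$ for $w<0$. These coordinates therefore contribute $O(\|w\|_2)$ and are harmless.

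\textbf{Shell decomposition.} For $c_i>0$ I have $u_i(s)\asymp\min(c_i,1/s)$. On the shell $s\in[2^{-j-1},2^{-j}]$, $j\ge1$, this gives
\[
\int_{\text{shell }j}\|U\|_2\,ds\;\lesssim\;2^{-j}\Big(\textstyle\sum_i\min(c_i,2^j)^2\Big)^{1/2}.
\]
Let $N_j=\#\{i: c_i\ge 2^j\}$, a nonincreasing sequence. Coordinates with $c_i<2^j$ contribute a geometrically decaying tail over the shells, and summing that tail costs only $O(\|w\|_2)$. This yields the main estimate
\[
L\lesssim \|w\|_2+\sum_j\sqrt{N_j}.
\]
On the distance side, $w_i=\log(1+c_i)\asymp\#\{j:c_i\ge2^j\}$ up to constants, so $w_i^2\asymp\sum_{j\le w_i}j$ and hence
\[
\|w\|_2^2\asymp\sum_j jN_j.
\]
The problem thus becomes purely combinatorial: bound
\[
\frac{\sum_j\sqrt{N_j}}{\big(\sum_j jN_j\big)^{1/2}}
\]
over nonincreasing integer sequences with $N_j\le n$.

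\textbf{Upper bound (main obstacle).} Cauchy--Schwarz with weights $1/j$ gives
\[
\sum_j\sqrt{N_j}\le\Big(\sum_j jN_j\Big)^{1/2}\Big(\sum_{j\le J}1/j\Big)^{1/2},
\]
where $J$ is the largest $j$ with $N_j\ge1$. This gives $\sqrt{\log J}$, not $\sqrt{\log n}$, and $J$ can be huge if a few coordinates have enormous $w_i$. I therefore split at $j_0=n^2$. For $j\le j_0$ the display above gives $O(\sqrt{\log n})\|w\|_2$. For $j>j_0$ I would use integrality together with the weighted bound
\[
\sqrt{N_j}\le N_j\sqrt{j}\,/\sqrt{j},
\]
applying Cauchy--Schwarz with weights $j^{-2}$ instead, i.e.
\[
\sum_{j>j_0}\sqrt{N_j}\le\Big(\sum_j j^2N_j\Big)^{1/2}\Big(\sum_{j>j_0}j^{-2}\Big)^{1/2}.
\]
This requires controlling $\sum_j j^2N_j\asymp\sum_i w_i^3$. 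That quantity is not bounded by $\|w\|_2^2$, so this first attempt has to be refined. The refinement I would try is to note that $N_j\le n$ and that, for large $j$, the coordinates still alive are few; with $n_{\text{big}}$ such coordinates, their tail behaves like at most $\sqrt{n_{\text{big}}}$ one-dimensional moves. I would then bound that tail by Cauchy--Schwarz directly over those coordinates, giving at most
\[
\sum_{i\,\text{big}}w_i\le\sqrt{n_{\text{big}}}\,\|w\|_2,
\]
and balance this against the restriction on $j_0$ that makes the head bound $\sqrt{\log n}$ (equivalently, $\log j_0\asymp\log n$). Getting this head/tail balance right is where I expect the real work to be.

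\textbf{Lower bound.} I would saturate the Cauchy--Schwarz step by choosing counts $N_j\approx n/j^2$ for $1\le j\le J=\sqrt n$. This means taking about $n/j^3$ coordinates with $w_i\approx j\log 2$ and all $c_i>0$. Then
\[
\sum_j\sqrt{N_j}\approx\sqrt n\,\log J,\qquad \sum_j jN_j\approx n\log J,
\]
so $R(w)\gtrsim\sqrt{\log J}\asymp\sqrt{\log n}$. The remaining check for this step is that, in this regime, the upper-shell estimate $u_i\asymp\min(c_i,1/s)$ is also a lower estimate, which holds because all entries of $U$ are positive.

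Together the two bounds give $R^\ast(M_n)=\Theta(\sqrt{\log n})$, with constants independent of $\kappa$, and hence $\sup_n R^\ast(M_n)=\infty$.
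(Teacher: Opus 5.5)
\textbf{The gap is in the upper bound on the shells $j>j_0$.} The refinement you sketch does not close it. Its premise, that few coordinates survive at large $j$, is false. If every $w_i$ equals some $W\gg n^2$, then $N_j=n$ for all $j$ up to about $W/\log 2$, yet $R(w)=1$ because $U(s)$ stays parallel to $(1,\dots,1)$. On this example your tail bound $\sum_{i\,\mathrm{big}}w_i\le\sqrt{n_{\mathrm{big}}}\,\|w\|_2$ overshoots by a factor $\sqrt n$. It is just the $\ell_1$--$\ell_2$ comparison behind the elementary bracket $R\le\sqrt n$.

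Nor can this be fixed by balancing $j_0$. For any cutoff depending only on $n$, the top active level $J=\max\{j:N_j\ge1\}$ (comparable to $\max_iw_i$) can be arbitrarily larger. On $(j_0,J]$ the $1/j$-weighted Cauchy--Schwarz costs $\sqrt{\log(J/j_0)}$, while the crude bound $N_j\le n$ costs a factor $\sqrt n$.

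\textbf{The missing idea is a cutoff that moves with $J$}, with the two tools assigned the other way round: the crude bound on small $j$, Cauchy--Schwarz on large $j$. Monotonicity gives $N_j\ge N_J\ge1$ for every $j\le J$, so $\sum_j jN_j\ge J^2/2$. Therefore
\[
\sum_{j\le J/n}\sqrt{N_j}\le\frac{J}{n}\sqrt n\le\sqrt{\tfrac2n}\,\Big(\sum_j jN_j\Big)^{1/2},
\]
while
\[
\sum_{J/n<j\le J}\sqrt{N_j}\le\Big(\sum_j jN_j\Big)^{1/2}\Big(\sum_{J/n<j\le J}\tfrac1j\Big)^{1/2}\le\sqrt{1+\log n}\,\Big(\sum_j jN_j\Big)^{1/2}.
\]
This is exactly the discrete form of the paper's Hardy window $[v_{\max}/n,\,v_{\max}+\tfrac12\log n]$ in the variable $t=\log(1/s)$; your dyadic shells are unit intervals in $t$. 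In the paper, the head $t<v_{\max}/n$ is bounded using only $g\le n$ and paid for by $\|v\|\ge v_{\max}$. With this window, the cutoff $j_0=n^2$ is unnecessary.

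The rest of your argument stands:
\begin{itemize}
\item The split at $s=\tfrac12$ with the trivial bound on negative channels is a valid substitute for the paper's sign reduction.
\item The shell estimate is fine; its leftover tails are indeed $O(\|w\|_2)$, e.g.\ by grouping coordinates dyadically and using Cauchy--Schwarz with weights $k^{-2}$.
\item Together with $\sum_j jN_j\lesssim\|w\|_2^2$, these discretize the paper's envelope $\phi_i\le\min(1,e^{w_i-t})$ and energy identity.
\end{itemize}

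Your lower bound is also correct. It has the same profile $\#\{i:w_i\ge t\}\approx n/t^2$ as the paper's blocks $w=2^k$ of multiplicity $4^{m-k}$. Define it via $N_j=\lfloor n/j^2\rfloor$, since beyond $j\approx n^{1/3}$ there is less than one coordinate per level. The lower shell estimate you need is simply $u_i(s)\ge2^{j-1}$ on shell $j$ whenever $c_i\ge2^j$.
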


\paragraph{Intuition.} By Lemma~\ref{lem:scale-reduction},
$R^\ast(M_n)=\sup_w R(w)$. The upper bound $O(\sqrt{\log n})$ splits
coordinates into small ($|w_i|<1$, handled by $\ell_1$--$\ell_2$ comparison)
and large (handled by a weighted Hardy inequality). The lower bound
$\Omega(\sqrt{\log n})$ places $m=\Theta(\log n)$ blocks at exponentially
separated scales $2,4,\dots,2^m$; within each block's temporal window
$\phi_i(t)$ is uniformly bounded below, and summing over the $m$ disjoint
windows gives $R\ge(\beta/2)\sqrt{m}$. Full proof:
Appendix~\ref{apx:sec:classIII}.

\paragraph{Covariance manifold.}
For zero-mean Gaussians $N(0,\Sigma)$ with the affine-invariant metric, the
potential $-\tfrac12\log\det\Sigma^{-1}$ is non-quadratic, so $\mathrm{SPD}(n)$ is
Hadamard and of infinite diameter \cite{skovgaard1984riemannian,
bhatia2009positive, pennec2020manifold}. Congruence reduction
(Appendix~\ref{apx:sec:classIII}) sends the log relative spectrum to
$w\in\mathbb{R}^n$ and the $\Sigma$-linear ($m$-)geodesic to the functional
above, so $R^\ast(\mathrm{SPD}(n))=\sup_w R(w)=\Theta(\sqrt{\log d})$ by
Theorem~\ref{thm:class3}. This arithmetic-mean path is distinct from both the
affine-invariant geodesic and the log-Euclidean path
\cite{arsigny2007geometric}; our Fisher-length analysis measures deviation from
the affine-invariant Fisher--Rao geodesic specifically.

\paragraph{Gamma product.}
A product of $n$ Gammas with common shape $\alpha$ and individual rates is, in each
factor, a scale family with $g_{\eta\eta}=\alpha/\eta^2$ ($\eta=\alpha/\beta$). By
Lemma~\ref{lem:scale-reduction}, its inefficiency is the same functional, so
$R^\ast=\Theta(\sqrt{\log d})$ as well. This manifold is Riemann-flat, yet
dimensionally unbounded. Paired with the Hadamard covariance manifold, it proves
that curvature alone does not classify the behavior: the Gamma and covariance
examples share infinite-diameter scale structure despite opposite curvature
signs.

\subsection{Operational meaning of $R$: a step-count corollary}
\label{sec:operational}

The regimes above describe the geometry of $R$; we now give $R$ a concrete algorithmic interpretation. As defined, $R$ is a property of a path's
\emph{shape}. The ratio acquires rigorous operational meaning when each
update's expectation-coordinate chord is capped in the current Fisher norm,
\[
\ell_k:=\|\eta_{k+1}-\eta_k\|_{x_k}\le\varepsilon.
\]
Under this budget, local arc--chord comparison makes the number of steps along
a smooth path asymptotic to its length divided by $\varepsilon$. Let
$N_{\mathrm{geo}}(\varepsilon)$ be the minimum number of Fisher-chord-capped
steps needed to inscribe a minimizing geodesic between $p$ and $q$. On the
compact slab used in the proof,
$N_{\mathrm{geo}}(\varepsilon) = (\delta_{\mathrm{geo}}(p,q)/\varepsilon)(1+O(\varepsilon))$.
Exact forward-KL trust-region or proximal steps remain on the $m$-geodesic, and
the consistent inexact schemes defined in the appendix shadow it to first order;
thus NGD cannot shorten its route by cutting across the Fisher geometry.
Let $N_{\mathrm{NGD}}$ be the number of NGD steps under the per-step budget,
terminating at the first $N$ with $\|\eta_N - \eta^\star\|_{x_N} \le \varepsilon$.
Full statements and proofs: Appendix~\ref{app:opcost}.

\begin{proposition}[One-sided budgeted step-count bound]\label{prop:steps}
Fix $p,q\in M$ and run NGD on the forward-KL objective toward $q$ under the
per-step budget $\ell_k\le\varepsilon$, using the exact trust-region (or
proximal) step, or any $\beta$-consistent budgeted scheme
(Definition~\ref{app:def-consistent}) with $\varepsilon\le\varepsilon_0$,
until $\|\eta_N-\eta^\star\|_{x_N}\le\varepsilon$. Then
\begin{align*}
N_{\mathrm{NGD}}
&\ge\frac{L(\gamma^{(m)})}{\varepsilon}
       \bigl(1-O(\varepsilon)\bigr)\\
&=R(p,q)\,\frac{\delta_{\mathrm{geo}}(p,q)}{\varepsilon}
       \bigl(1-O(\varepsilon)\bigr).
\end{align*}
For fixed distinct $p,q$, this implies
$N_{\mathrm{NGD}}/N_{\mathrm{geo}}(\varepsilon)
\ge R(p,q)(1-O(\varepsilon))$, with explicit constants in
Appendix~\ref{app:opcost}.
\end{proposition}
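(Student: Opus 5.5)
The plan is to compare the NGD iterate sequence with the $m$-geodesic as an inscribed polygon, and to bound each Fisher chord from below by the Fisher arc length it covers. We proceed in three steps.

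\textbf{Step 1: the iterates stay on the $m$-geodesic.} For the forward-KL objective toward $q$, the exact trust-region or proximal step from $x_k$ moves $\eta$ along the straight segment toward $\eta^\star=\eta(q)$. The reason is that the first-order condition of $\min_\eta \mathrm{KL}(q\,\|\,p_\eta)$ together with a Bregman/Fisher trust-region constraint has a minimizer of the form $\eta_{k+1}=\eta_k+\tau_k(\eta^\star-\eta_k)$, with $\tau_k\in(0,1]$. Hence every iterate lies on the image of $\gamma^{(m)}$ and advances monotonically in the affine parameter $s$. For a $\beta$-consistent scheme we instead use the appendix definition: the iterates deviate from the segment by $O(\varepsilon^{1+\beta})$ per step, which contributes only a relative error $O(\varepsilon)$ after summing over $O(1/\varepsilon)$ steps.

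\textbf{Step 2: local arc--chord comparison.} Let the iterates sit at parameters $s_0=0<s_1<\dots<s_N$. The stopping rule $\|\eta_N-\eta^\star\|_{x_N}\le\varepsilon$ gives $1-s_N=O(\varepsilon)$ in Fisher length. Work on a compact slab $K$ containing the segment, where $g$ and its derivatives are uniformly bounded; this is why the budget needs $\varepsilon\le\varepsilon_0$. On each piece the arc length is $\int_{s_k}^{s_{k+1}}\|\Delta\eta\|_{\gamma(s)}\,ds$, because $\dot\eta$ is constant along the segment. The chord $\ell_k$ is the same vector measured in the frozen norm at $x_k$. Since $\sqrt{g}$ is Lipschitz on $K$ in $\eta$-coordinates, the norm at $\gamma(s)$ differs from the norm at $x_k$ by a factor $1+O(\ell_k)$. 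This gives
\[
\text{arc}_k\le \ell_k\bigl(1+C\varepsilon\bigr).
\]
Summing over $k$ yields
\[
L(\gamma^{(m)})-O(\varepsilon)\le(1+C\varepsilon)\sum_k\ell_k\le(1+C\varepsilon)N\varepsilon,
\]
so $N_{\mathrm{NGD}}\ge \frac{L(\gamma^{(m)})}{\varepsilon}(1-O(\varepsilon))$. The equality $L(\gamma^{(m)})=R\,\delta_{\mathrm{geo}}$ is the definition of $R$.

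\textbf{Step 3: the ratio of step counts.} We need the matching upper bound $N_{\mathrm{geo}}\le(\delta_{\mathrm{geo}}/\varepsilon)(1+O(\varepsilon))$. To get it, inscribe the minimizing geodesic with consecutive points at Fisher arc length $\varepsilon(1-C'\varepsilon)$. The same Lipschitz comparison, now applied in $\eta$-coordinates to a curved arc, shows that the frozen-norm chord is at most the arc length times $1+O(\varepsilon)$, so each chord fits within the budget. Dividing the two bounds gives $N_{\mathrm{NGD}}/N_{\mathrm{geo}}\ge R(1-O(\varepsilon))$ for fixed $p\neq q$.

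\textbf{Main obstacle.} The delicate step is making the arc--chord constants uniform. This needs one compact slab $K$ that contains both the $m$-segment and the minimizing geodesic, so that the Lipschitz constants of $g$ are uniform over both curves. It also needs the consistent-scheme drift to be controlled without the iterates leaving $K$. I would construct $K$ first, as a closed neighborhood of the union of the two curves, and derive all constants from it. The exact-step case of Step 1, where the iterates remain on the segment, I expect to be routine.
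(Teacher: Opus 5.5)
Your treatment of the exact trust-region and proximal step is correct and follows the paper: KKT exactness keeps $\eta_{k+1}$ on $[\eta_k,\eta^\star]$, each edge has arc at most $\ell_k(1+C\varepsilon)$, and the tail is $O(\varepsilon)$. Your Step~3 bound on $N_{\mathrm{geo}}$ is also fine. One caution on Step~1: exactness relies on the specific orientation $\mathrm{KL}(p_{x_k}\Vert p_x)\le\delta$, whose $\theta$-gradient is exactly $\eta(\theta)-\eta_k$. A Fisher-quadratic or reversed-KL constraint does not keep the iterate on the segment, so ``Bregman/Fisher'' is too loose.

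The genuine gap is the $\beta$-consistent case, which the statement explicitly covers.

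\emph{You misread the definition.} Definition~\ref{app:def-consistent} says $\|e_k\|_{x_k}\le\beta\ell_k^2$, with $\beta$ a constant rather than an exponent. Under your reading, $O(\varepsilon^{1+\beta})$ per step over $O(1/\varepsilon)$ steps gives drift $O(\varepsilon^\beta)$, which is not $O(\varepsilon)$ when $\beta<1$. With the correct reading, naive summation does give $O(\varepsilon)$ drift. However, you must first dispose of the case $N\ge 2L/\varepsilon$ separately, because nothing bounds $N$ a priori. The paper avoids this: the contraction $\vec d_{k+1}=(1-t_k)\vec d_k+Pe_k$ gives $\max_k\|\vec d_k\|_p=O(\varepsilon)$ independently of $N$.

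\emph{A drift bound does not yield the length bound.} This is the more serious problem. Your Step~2 presupposes that the iterates lie on $\gamma$ at increasing parameters $s_0<\dots<s_N$, so the edge arcs tile $\gamma([0,s_N])$. For a consistent scheme the iterates are off the segment, and the $h$-component of $e_k$ can push them backward. What you actually need is a no-corner-cutting estimate: a chord polygon inside an $O(\varepsilon)$ Fisher tube around $\gamma$, ending within $\varepsilon$ of $\eta^\star$, satisfies $\sum_k\ell_k\ge L-O(\varepsilon)$. This is not automatic. $\gamma$ is not a Levi--Civita geodesic, so a nearby curve on the inside of its bend is shorter by an amount of order the tube radius times the total geodesic curvature of $\gamma$.

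Lemma~\ref{app:lem-shadow} proves this estimate as follows:
\begin{itemize}
\item It writes $\eta_k=\eta(p)+\sigma_kh+\vec d_k$, with $\vec d_k$ in the $g_p$-orthogonal complement of $h$.
\item It bounds each chord below by its projection $\langle\Delta\eta_k,h\rangle_{x_k}/\|h\|_{x_k}$.
\item It controls the cross terms via the telescoped bound $\sum_k t_k\|\vec d_k\|_p\le B\beta\varepsilon\sum_k\ell_k$.
\item It compares the Riemann sum $\sum_k\Delta\sigma_k\,\varphi(\sigma_k)$, with $\varphi(s)=\|h\|_{\gamma(s)}$, to $\int\varphi$.
\item It charges backtracking to the summed $h$-components of the $e_k$.
\end{itemize}
Without an argument of this kind, your proof establishes the proposition only for the exact scheme.
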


Thus, asymptotically under the formal Fisher-chord cap, NGD has a one-sided
step-count lower bound of $R(1-o(1))$ relative to the geodesic oracle. Replacing the formal Fisher-chord cap
with a fixed KL cap requires a uniform third-order remainder bound and remains
open. Fixed-rate NGD, which does not obey the chord budget, lies outside the
proposition's scope.
\section{Empirical $R$ Study}
\label{sec:empirics}

The preceding sections established worst-case theoretical bounds; we now examine how $R$ behaves on randomly sampled endpoint pairs. The experiments
ask three questions, one per demonstrated regime:
\begin{enumerate}
\item[(Q1)] \emph{Fisher-efficient regime (I).} Is the $m$-geodesic indeed optimal, i.e.\ is
$R=1$ realized numerically?
\item[(Q2)] \emph{Categorical example within Regime II.} How often, and under
what configurations, is the categorical worst-case bound
$\pi/(2\sqrt2)$ actually approached, and is the bound really independent of dimension?
\item[(Q3)] \emph{Dimensionally unbounded examples (Regime III).} Does the observed inefficiency grow with
dimension (as the theory predicts), how large is it in absolute terms, and how does it compare with the
elementary $\sqrt d$ bracket?
\end{enumerate}

\subsection{Experimental setup}
\label{sec:emp-setup}

For each family we draw $N=2{,}000$ source/target pairs $(p,q)$ per dimension
$d\in\{2,4,8,16\}$. For each pair we record the geodesic length
$\delta_{\mathrm{geo}}(p,q)$, the ratio $R(p,q)$, and a scalar dispersion statistic $H$ of the move.

\paragraph{Sampling of $(p,q)$.}
Pairs are drawn to span a range of both length and spectral shape, so that no
single regime dominates the sampled batch.
\begin{itemize}
\item \emph{Categorical.} $p$ and $q$ are independent draws from a symmetric
Dirichlet distribution whose concentration is drawn per pair on a log scale in
$[0.05,5]$. Small concentration places mass near the simplex boundary
(near-deterministic distributions); large concentration yields interior,
near-uniform distributions.
\item \emph{SPD.} By congruence invariance we fix $\Sigma_p=I_d$; a raw relative
log-spectrum $z\in\mathbb{R}^d$ is drawn from three equally weighted regimes:
near-isotropic ($z=c\mathbf{1}$ plus small noise), log-normal
($z_i\sim\mathcal N(0,\sigma^2)$ with $\sigma$ drawn in $[0.2,2.5]$), and
heavy-tailed (log-uniform magnitudes spanning one to eight decades with random
signs). We then set $\Sigma_q=\exp(\operatorname{diag}z)$, so that the relative
eigenvalues are $e^{z_i}$.
\item \emph{Gamma product.} A sampled Fisher-normalized displacement $w$ is
mapped to raw log expectation ratios $z_i=w_i/\sqrt\alpha$ (equivalently,
opposite log rate ratios) at a fixed common shape $\alpha$ with unit reference
rates. Thus, if $v$ denotes a common
numerical draw and $F$ the shared reduced functional of
Lemma~\ref{lem:scale-reduction}, the plotted protocols use $F(v)$ for SPD but
$F(v/\sqrt\alpha)$ for Gamma. Exact equality is predicted
only when the same \emph{raw} log-ratio vector $z$ is supplied to both models,
not when the same Fisher-normalized displacement is supplied.
\item \emph{Gaussian location} (sanity $R=1$, results omitted below). Random fixed
covariance $\Sigma_0$ and random means $\mu_p,\mu_q$.
\end{itemize}

\paragraph{Geodesic length.}
Each family uses its closed-form Fisher--Rao distance:
categorical, $\delta_{\mathrm{geo}}=2\arccos\sum_i\sqrt{p_iq_i}$;
SPD, $\delta_{\mathrm{geo}}=2^{-1/2}\lVert\log(\Sigma_p^{-1/2}\Sigma_q\Sigma_p^{-1/2})\rVert_F$;
Gamma product, $\delta_{\mathrm{geo}}=\lVert w\rVert_2$ (a product of
one-dimensional scale families);
Gaussian location, the Mahalanobis distance
$\big((\mu_p-\mu_q)^\top\Sigma_0^{-1}(\mu_p-\mu_q)\big)^{1/2}$.
The SPD panels in Figure~\ref{fig:R-vs-length} used the common unnormalized affine-invariant
convention, omitting the factor $2^{-1/2}$ on its horizontal axis; the displayed
SPD distances should therefore be divided by $\sqrt2$. The ratio $R$ is
unchanged because the same factor multiplies both path length and distance.
The $m$-geodesic length is computed by the stabilized log-substitution
integrator of Appendix~\ref{apx:sec:classIII}, and
$R=L(\gamma^{(m)})/\delta_{\mathrm{geo}}$.

\paragraph{Dispersion entropy.}
To quantify how a sampled move is distributed across coordinates, we use a
normalized entropy of the absolute coordinate-difference vector. This is the
raw relative log-spectrum $z$ for SPD and the Fisher-normalized displacement
$w$ for Gamma. Writing either vector generically as $v\in\mathbb{R}^N$,
\[
H(v)=-\frac{1}{\log N}\sum_{i=1}^{N}\hat v_i\log\hat v_i,
\qquad \hat v_i=\frac{|v_i|}{\sum_{j}|v_j|},
\]
and for the categorical family $v_i$ is replaced by $q_i-p_i$ (so $N=k=d+1$). Here $H\in[0,1]$:
$H=0$ when one coordinate dominates and $H=1$ when all coordinates contribute
equally.

\subsection{Results}
\label{sec:emp-results}

Figure~\ref{fig:R-vs-length} plots $R$ against $\delta_{\mathrm{geo}}$, with
color encoding $H$. The Gaussian location regime is omitted: numerically it
returns $R=1$ to machine precision ($|R-1|<10^{-12}$) for every pair and every
$d$, positively answering (Q1) and matching Theorem~\ref{thm:classI}.

\begin{figure*}[t]
\centering
\includegraphics[trim={0 0 0 8cm}, clip, width=\textwidth]{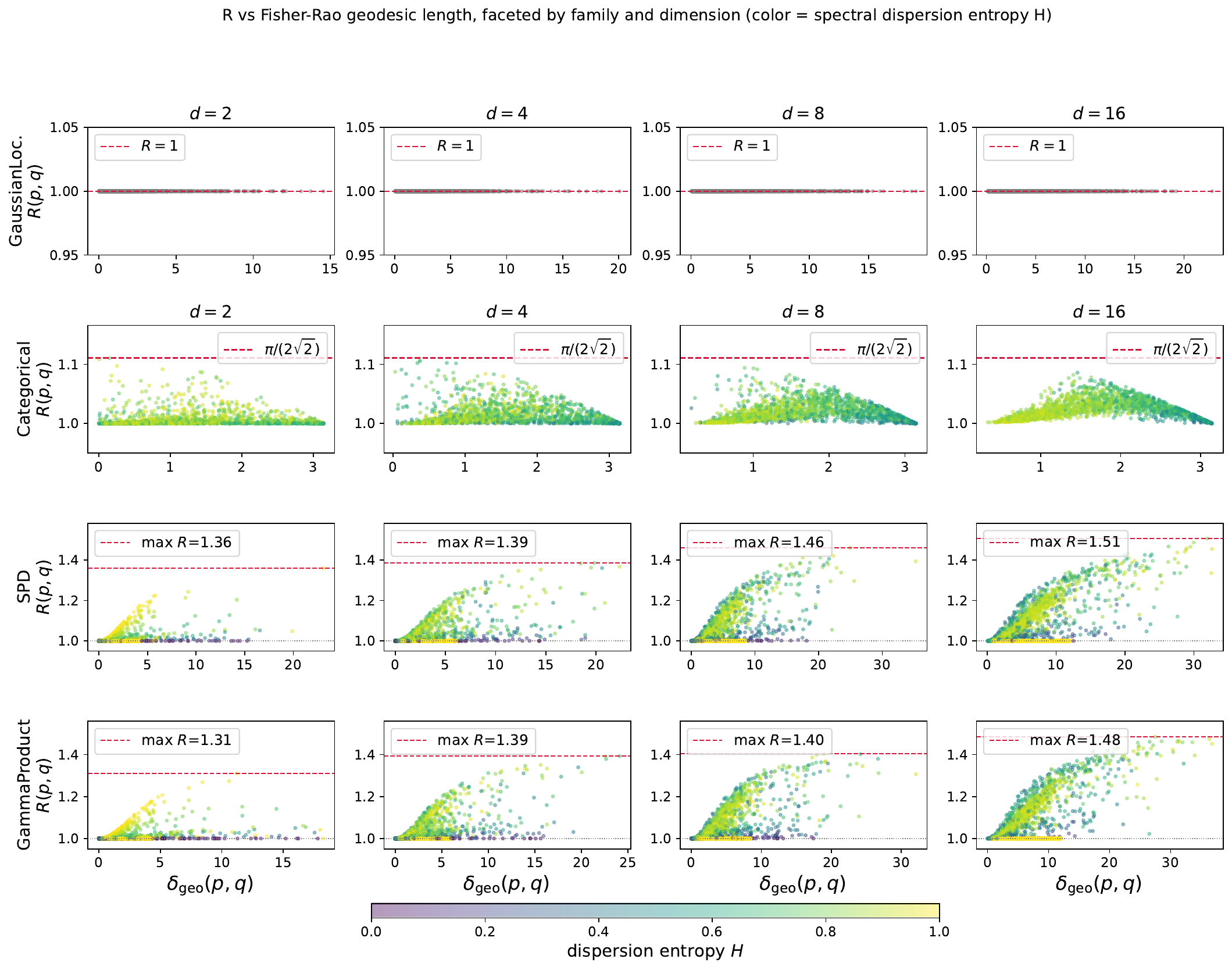}
\caption{Fisher inefficiency $R$ versus Fisher--Rao length
$\delta_{\mathrm{geo}}$, faceted by family and dimension; color encodes
dispersion entropy $H$. The Fisher-efficient example is omitted (verified $R=1$). Categorical panels
mark the theoretical cap $\pi/(2\sqrt2)$; SPD and Gamma panels mark the sampled
maximum. $H$ is computed on a different underlying vector per family (the raw
relative log-spectrum for SPD, the Fisher-normalized displacement for Gamma;
Section~\ref{sec:emp-setup}), so $H$ values should be compared within a family,
not across the SPD and Gamma panels.}
\label{fig:R-vs-length}
\end{figure*}

\paragraph{Categorical (Q2).}
For the categorical family the column index $d$ is the manifold dimension, so a
panel at dimension $d$ has $k=d+1$ outcomes. In every panel $R$ stays strictly
below the $k$-independent cap $\pi/(2\sqrt2)\approx1.1107$
(Theorem~\ref{thm:categorical}), and the observed maxima are close to it and
nearly constant across $d$: $1.110$, $1.106$, $1.094$, and $1.086$ for
$d=2,4,8,16$ (that is, $k=3,5,9,17$). The small decrease with $d$ is consistent
with a sampling artifact rather than a geometric effect: the extremal
configuration changes only two coordinates while the rest are equal in $p$ and
$q$, and such configurations are drawn less often under symmetric Dirichlet
sampling as $k$ grows. Within each
panel, $\delta_{\mathrm{geo}}(p,q)$ and $H$ tend to be inversely related. Thus
Q2 has a mixed answer: the cap $\pi/(2\sqrt2)$ is rarely reached under random
sampling, but low-dimensional short moves and higher-dimensional
intermediate-length moves approach the ${\sim}11\%$ inefficiency bound more often.

\paragraph{SPD and Gamma (Q3).}
The Riemannian-curved SPD family and Riemannian-flat Gamma product exhibit the same
qualitative trend. Their reported maxima need not coincide because, under the
stated normalization, the panels do not use identical raw $z$ vectors. For very short  
$\delta_{\mathrm{geo}}$ the ratio is pinned at $R\approx1$ regardless of
dimension. The upper envelope lifts only at larger $\delta_{\mathrm{geo}}$, and
it rises with $d$: observed maxima grow from $R\approx1.36$ ($d=2$) to
$R\approx1.51$ ($d=16$) for SPD, and from $1.31$ to $1.48$ for Gamma. These
values are below the elementary bracket $\sqrt d$ (which reaches $4$ at
$d=16$), and the growth with $d$ is consistent with the theoretical rate
$\Theta(\sqrt{\log d})$ of Theorem~\ref{thm:class3}. Since $R\ge1$ by
definition, the natural check is an affine fit $R\approx a+b\sqrt{\log_2d}$
rather than a bare multiple of $\sqrt{\log_2d}$; Table~\ref{tab:scale-fit}
shows this fit tracks both families closely, with nearly identical slopes
($b\approx0.15$--$0.16$) despite SPD's Hadamard curvature and Gamma's
flatness. Because these scale-family products have bounded $\kappa_T$ but
infinite diameter, the trend is also evidence that bounded skewness alone
cannot replace the diameter hypothesis in Theorem~\ref{thm:bounded}.

\begin{table}[b]
\centering
\small
\begin{tabular}{@{}lcccc@{}}
\toprule
$d$ & 2 & 4 & 8 & 16 \\
$\sqrt{\log_2 d}$ & $1.00$ & $1.41$ & $1.73$ & $2.00$ \\
\midrule
SPD, observed & $1.36$ & $1.39$ & $1.46$ & $1.51$ \\
SPD, fit $1.19+0.15\sqrt{\log_2 d}$ & $1.35$ & $1.41$ & $1.46$ & $1.50$ \\
\midrule
Gamma, observed & $1.31$ & $1.39$ & $1.40$ & $1.48$ \\
Gamma, fit $1.16+0.16\sqrt{\log_2 d}$ & $1.31$ & $1.38$ & $1.43$ & $1.47$ \\
\bottomrule
\end{tabular}
\caption{Sampled maxima of $R$ against a least-squares affine fit in
$\sqrt{\log_2 d}$. Both families are matched to within $0.03$ at every
dimension by a fit with nearly the same slope, supporting the
$\Theta(\sqrt{\log n})$ rate of Theorem~\ref{thm:class3}.}
\label{tab:scale-fit}
\end{table}

Among the sampled pairs at $d=2$, the larger observed ratios co-occur with both
large $\delta_{\mathrm{geo}}(p,q)$ and two comparable-magnitude coordinates
(high $H$). Sampled moves with a single dominant coordinate (low $H$) were
near-collinear and had $R\approx1$, while sampled high-$H$ pairs at very short
distance ($\delta_{\mathrm{geo}}(p,q)<1$) also remained near $R=1$. These finite samples suggest an interaction
between dispersion and length; they do not establish high $H$ as a necessary
condition.
For $d\ge4$ no single-axis relationship between $H$ and $R$ is
visible, since at higher dimension $H$ alone no longer determines the spectral
configuration that governs $R$.
For both scale families, sampled long, high-dimensional moves can reach
$R\approx1.3$--$1.5$, corresponding to roughly $30$--$50\%$ excess Fisher
length; this pattern is concentrated among pairs with $\delta_{\mathrm{geo}}>20$.

\paragraph{Empirical conclusions.}
The results answer the three questions empirically. The Fisher-efficient
regime attains $R=1$ exactly (Q1). The categorical family is bounded by an
exact, dimension-free constant. Although this bound is tight in the worst
case, it is rarely approached under random sampling (Q2). The sampled scale
products show increasing inefficiency with
dimension, but slowly and at modest absolute values below the $\sqrt d$
bracket (Q3). Across these dually nontrivial families, $R$ is not determined by geodesic length
alone.
The practical message is conditional but concrete. If an optimizer follows the ideal natural-gradient trajectory, then the combination of a
dimensionally unbounded scale family, high dimension, and long geodesic displacement can produce meaningful overhead relative to the shortest Fisher path.

\section{Summary}
\label{sec:summary}

Taken together, the theoretical and empirical results paint a consistent picture. We studied how far natural gradient's idealized trajectory---the mixture
geodesic---strays from the shortest Fisher path, measured by the inefficiency
ratio $R=L(\gamma^{(m)})/\delta_{\mathrm{geo}}\ge1$. A single pointwise
condition, proportionality of the Amari--Chentsov tensor to the metric along the
path, governs local optimality, and demanding it in all directions
characterizes the Fisher-efficient families as exactly those with a quadratic
potential. Away from that boundary, three demonstrated regimes emerge:
(I) Fisher efficient ($R^\ast=1$); (II) uniformly bounded inefficiency, controlled by the dimension-free
bound $R^\ast\le e^{\kappa_TD/2}$ and realized sharply by the categorical
constant $\pi/(2\sqrt2)$; and (III) dimensionally unbounded inefficiency
$R^\ast=\Theta(\sqrt{\log d})$ for scale-family products. Because the Hadamard
covariance manifold and the flat Gamma-rate product both diverge, curvature
alone does not decide the regime---infinite-diameter scale stratification is the
operative mechanism.

Under a per-step Fisher-chord budget, $R$ gives a one-sided
lower bound on natural gradient's step count against a geodesic oracle
(Proposition~\ref{prop:steps}). This is a one-sided statement under the formal
Fisher-chord budget, with a geodesic oracle as comparator.
Within these limits, the demonstrated regimes correspond to (I) no overhead, (II)
uniformly bounded overhead, and (III) dimensionally unbounded overhead. Two
theoretical directions remain open: (1) a
converse deciding whether diameter alone controls the regime, and (2) a version of
the operational bound that holds under a fixed KL cap via a uniform third-order
remainder estimate.
From a practical perspective, these results motivate optimizers that more
closely track Fisher--Rao geodesics where $R$ is large, especially for long,
high-dimensional moves in scale or covariance parameters.


\appendix
\onecolumn

\section{Standing facts about dually flat manifolds}
\label{apx:sec:facts}

We collect the external structural facts used repeatedly. All are standard
\cite{Amari2000MethodsOI,amari2016information,shima2007geometry}; we state them
so that the proofs invoke nothing unstated.

\begin{description}
\item[(F1) Legendre duality.] There is a strictly convex potential $\psi$ on
the natural chart with $\eta=\nabla\psi(\theta)$ and a Legendre-dual potential
$\psi^\ast$ with $\theta=\nabla\psi^\ast(\eta)$. The Fisher metric is the
Hessian $g=\nabla^2\psi$ in the $\theta$-chart, so $\partial\eta/\partial\theta
=\nabla^2\psi=g$.
\item[(F2) Flatness of the dual charts.] $\nabla^{(e)}$ has vanishing
Christoffel symbols in $\theta$; $\nabla^{(m)}$ has vanishing Christoffel
symbols in $\eta$. Hence $e$-geodesics are $\theta$-affine segments and
$m$-geodesics are $\eta$-affine segments:
\[
\theta(\gamma^{(e)}(t))=(1-t)\theta(p)+t\theta(q),\qquad
\eta(\gamma^{(m)}(t))=(1-t)\eta(p)+t\eta(q).
\]
\item[(F3) Duality identity.] For all vector fields $X,Y,Z$,
\[
X\,g(Y,Z)=g\!\left(\nabla^{(m)}_XY,\,Z\right)+g\!\left(Y,\,\nabla^{(e)}_XZ\right).
\]
\item[(F4) Levi--Civita is the average.]
$\nabla^{\mathrm{LC}}=\tfrac12\!\left(\nabla^{(m)}+\nabla^{(e)}\right)$, and its
autoparallel curves are the length-minimizing (Fisher--Rao) geodesics.
\item[(F5) The tensor identity.] Lowering the third index with $g$,
$T(X,Y,Z)=g\!\left(\nabla^{(e)}_XY-\nabla^{(m)}_XY,\,Z\right)$, and $T$ is fully
symmetric. In the $\theta$-chart $T_{ijk}=-\partial_i g_{jk}
=-\partial_i\partial_j\partial_k\psi$.
\end{description}

We also use the elementary vector-integral (triangle) inequality
$\big\|\int_0^1 V(s)\,\mathrm{d} s\big\|\le\int_0^1\|V(s)\|\,\mathrm{d} s$ for any Bochner-integrable
$T_xM$-valued map at a fixed base point, and Cauchy--Schwarz in $(T_xM,g_x)$.


\section{The pointwise criterion and Class I}
\label{apx:sec:classI}

This appendix supplies complete proofs of the pointwise pregeodesy criterion
(Theorem~\ref{thm:pointwise}), the tensor lemma (Lemma~\ref{lem:tensor}), the
Fisher-efficient characterization (Theorem~\ref{thm:classI}), and the
one-dimensional case (Lemma~\ref{lem:dim-one}).

\subsection{Pointwise pregeodesy criterion}

\begin{theorem}[Pointwise pregeodesy criterion (Restated from Theorem~\ref{thm:pointwise})]
\label{apx:thm:pointwise}
Let $p,q\in M$ be distinct and let $\dot\gamma$ be the velocity of their
$m$-geodesic $\gamma$. Then the image of $\gamma$ is a Levi--Civita pregeodesic
if and only if there is a scalar field $\mu(s)$ along $\gamma$ with
\begin{equation}
\label{apx:eq:pointwise}
T(\dot\gamma,\dot\gamma,Z)=\mu\,g(\dot\gamma,Z)\qquad\text{for all vector fields }Z.
\end{equation}
Consequently $R(p,q)=1$ implies \eqref{apx:eq:pointwise}; and \eqref{apx:eq:pointwise}
implies $R(p,q)=1$ whenever the image of $\gamma$ is globally
length-minimizing (e.g.\ if $M$ is Hadamard, or the image lies in a strongly
convex normal neighborhood).
\end{theorem}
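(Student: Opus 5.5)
The plan is to compute the Levi--Civita acceleration of the $m$-geodesic explicitly and to reduce pregeodesy to a proportionality condition on that acceleration.

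\textbf{Step 1: the acceleration.} Parametrize $\gamma$ by $s\in[0,1]$ as in (F2). Then $\eta(\gamma(s))$ is affine, so $\nabla^{(m)}_{\dot\gamma}\dot\gamma=0$ and the speed is nonvanishing, since $\eta(p)\neq\eta(q)$ and $\eta$ is a global chart. By (F4),
\[
\nabla^{\mathrm{LC}}_{\dot\gamma}\dot\gamma=\tfrac12\bigl(\nabla^{(m)}_{\dot\gamma}\dot\gamma+\nabla^{(e)}_{\dot\gamma}\dot\gamma\bigr)=\tfrac12\nabla^{(e)}_{\dot\gamma}\dot\gamma=\tfrac12\bigl(\nabla^{(e)}_{\dot\gamma}\dot\gamma-\nabla^{(m)}_{\dot\gamma}\dot\gamma\bigr).
\]
These covariant derivatives along a curve depend only on $\dot\gamma$ along $\gamma$. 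Pairing with any $Z$ and using (F5) gives
\[
g\bigl(\nabla^{\mathrm{LC}}_{\dot\gamma}\dot\gamma,Z\bigr)=\tfrac12T(\dot\gamma,\dot\gamma,Z).
\]
If one wants to avoid extending $\dot\gamma$ to a vector field, the same identity can be checked directly in the $\theta$-chart. There $\dot\eta=g\dot\theta$ is constant, so $\ddot\theta=-g^{-1}(\partial g[\dot\theta])\dot\theta$. The LC Christoffel symbols of a Hessian metric are $\tfrac12\partial_i\partial_j\partial_k\psi$ raised, and combining these yields the stated identity.

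\textbf{Step 2: the pregeodesic characterization.} Recall the standard fact that a regular curve is a reparametrized geodesic if and only if its covariant acceleration is everywhere parallel to its velocity, i.e.\ $\nabla^{\mathrm{LC}}_{\dot\gamma}\dot\gamma=\tilde\mu(s)\dot\gamma$.

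For the forward direction, write $\gamma(s)=\sigma(\phi(s))$ with $\sigma$ a geodesic and $\phi'>0$. Then $\nabla_{\dot\gamma}\dot\gamma=(\phi''/\phi')\dot\gamma$.

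For the converse, choose $\phi$ solving $\phi''=\tilde\mu\phi'$, that is, $\phi'=\exp\int\tilde\mu$. A direct computation then shows that $\gamma\circ\phi^{-1}$ has zero acceleration. A smoothness point needs checking here: $\tilde\mu=g(\nabla\dot\gamma,\dot\gamma)/g(\dot\gamma,\dot\gamma)$ is smooth because $\dot\gamma\neq0$.

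Combining this fact with Step 1, pregeodesy holds if and only if $\tfrac12T(\dot\gamma,\dot\gamma,\cdot)=\tilde\mu\,g(\dot\gamma,\cdot)$. By nondegeneracy of $g$, this is exactly \eqref{apx:eq:pointwise} with $\mu=2\tilde\mu$.

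\textbf{Step 3: the consequences for $R$.} Suppose first that $R(p,q)=1$. Then $\gamma$ is a regular curve whose length equals $\delta_{\mathrm{geo}}(p,q)$. A length-minimizing regular curve, reparametrized by arclength, is a geodesic; this follows from the first variation formula, or from the fact that locally minimizing curves are geodesics in normal neighborhoods. Hence $\gamma$ is a pregeodesic, and Step 2 gives the condition.

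Conversely, suppose the condition holds. Then $\gamma$ is a reparametrized LC geodesic with $\phi$ monotone, so $L(\gamma)$ equals the length of the geodesic segment it traces. If that image is globally minimizing, $L(\gamma)=\delta_{\mathrm{geo}}(p,q)$ and so $R=1$. In a Hadamard manifold, geodesics between two points are unique and minimizing, so this applies. The same holds in a strongly convex normal neighborhood, where the unique geodesic in the neighborhood is minimizing.

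\textbf{Main obstacle.} I expect the main care to be needed in Step 3's first direction: making rigorous that a curve realizing the distance is a geodesic up to reparametrization. I would cite the standard Riemannian result that minimizers parametrized proportionally to arclength are smooth geodesics. This applies because $\gamma$ is already smooth and regular, so arclength reparametrization is a diffeomorphism.
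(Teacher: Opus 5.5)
Your proposal is correct and follows the paper's proof essentially step for step. The three steps match: you derive $g(\nabla^{\mathrm{LC}}_{\dot\gamma}\dot\gamma,Z)=\tfrac12T(\dot\gamma,\dot\gamma,Z)$ from $\nabla^{(m)}_{\dot\gamma}\dot\gamma=0$, (F4) and (F5); you reduce pregeodesy to acceleration parallel to velocity with $\mu=2\tilde\mu$; and you use the same minimizer and global-minimality argument for the two consequences about $R$. Your proof of the reparametrization fact and your $\theta$-chart check of the acceleration identity are correct, and they are details the paper takes as standard.
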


\begin{proof}
\emph{Step 1: the covariant acceleration of an $m$-geodesic under
$\nabla^{\mathrm{LC}}$.}
Because $\gamma$ is an $m$-geodesic, (F2) gives
$\nabla^{(m)}_{\dot\gamma}\dot\gamma=0$. Using (F4),
\[
\nabla^{\mathrm{LC}}_{\dot\gamma}\dot\gamma
=\tfrac12\nabla^{(m)}_{\dot\gamma}\dot\gamma
+\tfrac12\nabla^{(e)}_{\dot\gamma}\dot\gamma
=\tfrac12\nabla^{(e)}_{\dot\gamma}\dot\gamma .
\]
By (F5), for any $Z$,
$g\!\left(\nabla^{(e)}_{\dot\gamma}\dot\gamma-\nabla^{(m)}_{\dot\gamma}\dot\gamma,\,Z\right)
=T(\dot\gamma,\dot\gamma,Z)$, and since the $m$-term vanishes,
$g\!\left(\nabla^{(e)}_{\dot\gamma}\dot\gamma,\,Z\right)=T(\dot\gamma,\dot\gamma,Z)$.
Therefore
\begin{equation}
\label{apx:eq:LCacc}
g\!\left(\nabla^{\mathrm{LC}}_{\dot\gamma}\dot\gamma,\,Z\right)
=\tfrac12\,T(\dot\gamma,\dot\gamma,Z)\qquad\text{for all }Z.
\end{equation}

\emph{Step 2: pregeodesy is \eqref{apx:eq:pointwise}.}
The image of $\gamma$ is a Levi--Civita pregeodesic exactly when the
acceleration is proportional to the velocity, i.e.\ there is a scalar
$\tilde\mu(s)$ with $\nabla^{\mathrm{LC}}_{\dot\gamma}\dot\gamma=\tilde\mu\,\dot\gamma$.
Pairing with an arbitrary $Z$ and using \eqref{apx:eq:LCacc},
\[
\nabla^{\mathrm{LC}}_{\dot\gamma}\dot\gamma=\tilde\mu\,\dot\gamma
\iff \tfrac12 T(\dot\gamma,\dot\gamma,Z)=\tilde\mu\,g(\dot\gamma,Z)\ \ \forall Z
\iff T(\dot\gamma,\dot\gamma,Z)=\mu\,g(\dot\gamma,Z)\ \ \forall Z,
\]
where $\mu:=2\tilde\mu$. (The middle equivalence is because a covector is
determined by its pairing with all $Z$, and $g$ is nondegenerate.) This proves
\eqref{apx:eq:pointwise} $\iff$ pregeodesy.

\emph{Step 3: necessity and the conditional converse.}
If $R(p,q)=1$ then $L(\gamma^{(m)})=\delta_{\mathrm{geo}}(p,q)$, so the image of $\gamma$
attains the infimal length between $p$ and $q$ and is therefore a minimizing
geodesic up to reparametrization; in particular it is a Levi--Civita
pregeodesic, so \eqref{apx:eq:pointwise} holds. Conversely, assume
\eqref{apx:eq:pointwise}; by Step~2 the image is a Levi--Civita pregeodesic, i.e.\ a
geodesic up to reparametrization. Under either global hypothesis (Hadamard, or a
strongly convex normal neighborhood) a geodesic segment between its endpoints is
length-minimizing, so its length equals $\delta_{\mathrm{geo}}(p,q)$ and $R(p,q)=1$.
\end{proof}

\subsection{The tensor lemma}

\begin{lemma}[Tensor lemma (Restated from Lemma~\ref{lem:tensor})]
\label{apx:lem:tensor}
Let $A$ be a fully symmetric $(0,3)$-tensor on a Riemannian manifold of
dimension $\ge 2$. If for every tangent vector $v$ there is a scalar
$\lambda(v)$ with
\begin{equation}
\label{apx:eq:tensorhyp}
A(v,v,Z)=\lambda(v)\,g(v,Z)\qquad\text{for all }Z,
\end{equation}
then $A\equiv 0$.
\end{lemma}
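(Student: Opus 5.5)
The argument is pointwise, so fix a point $x$ and work in $(T_xM,g_x)$ with an orthonormal basis. The plan is first to show that $\lambda$ is forced to be a linear functional, and then to show that this functional must vanish.

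\emph{Step 1: $\lambda$ comes from a covector.} Setting $Z=v$ in the hypothesis gives $\lambda(v)=A(v,v,v)/g(v,v)$ for $v\neq0$. Hence the identity $A(v,v,\cdot)=\lambda(v)\,g(v,\cdot)$ says that the gradient of the cubic $f(v)=A(v,v,v)$ is parallel to $v$; indeed $\nabla f(v)=3A(v,v,\cdot)^\sharp=3\lambda(v)v$. A function whose gradient is everywhere radial is constant on each sphere $\{|v|=r\}$, since the derivative of $f$ along any curve tangent to the sphere is $3\lambda(v)\,g(v,\dot v)=0$. Spheres are connected because $\dim\ge2$.

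\emph{Step 2: the cubic vanishes.} Since $f$ is odd, $f(-v)=-f(v)$. But $-v$ lies on the same sphere as $v$, so $f(-v)=f(v)$. Therefore $f\equiv0$, and consequently $\lambda\equiv0$.

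\emph{Step 3: polarization.} The hypothesis now reads $A(v,v,Z)=0$ for all $v$ and $Z$. Full symmetry together with polarization in the first two slots, via $A(u+w,u+w,Z)-A(u,u,Z)-A(w,w,Z)=2A(u,w,Z)$, gives $A\equiv0$ at $x$. Since $x$ was arbitrary, $A\equiv0$.

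The main obstacle is Step 1. One must be careful that $\lambda$ is only defined for $v\neq0$ and may not be smooth a priori. The expression $\lambda(v)=A(v,v,v)/g(v,v)$ shows that it is in fact smooth away from zero, so the sphere argument goes through. The dimension hypothesis enters exactly here, through connectedness of the unit sphere: in dimension one, the sphere $\{\pm v\}$ is disconnected and nonzero $A$ is allowed, which is consistent with Lemma~\ref{lem:dim-one}.

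If one prefers to avoid calculus, there is a fallback for Step 2. Take orthonormal $e_1,e_2$ and expand the hypothesis for $v=e_1+te_2$ with $Z=e_2-te_1$ (a vector orthogonal to $v$). This yields a polynomial in $t$ that vanishes identically. Its coefficients force $A_{112}=A_{122}=0$ and $A_{111}=A_{122}$, $A_{222}=A_{112}$, so all components vanish. Running this over all pairs of orthonormal vectors, and using arbitrary rotations of them, covers every component.
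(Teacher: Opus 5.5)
Your argument is correct, and it takes a genuinely different route from the paper.

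\textbf{The paper's route.} It works with components in an orthonormal frame. Setting $v=e_i$ gives $A_{iim}=0$ for $m\ne i$. In $\dim\ge3$, the test vector $v=e_i+e_j$ read in a third slot kills $A_{ijm}$ for pairwise distinct indices. The test vector $v=e_i+2e_j$, read in the $e_i$- and $e_j$-slots, gives $A_{iii}=2A_{jjj}$; swapping gives $A_{jjj}=2A_{iii}$, so the diagonal vanishes. The case $\dim=2$ is handled by a separate but analogous computation.

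\textbf{Your route.} You note that the hypothesis makes the differential of the cubic $f(v)=A(v,v,v)$ radial. Hence $f$ is constant along the great circle $c(t)=\cos t\,v+\sin t\,w$ (with $w\perp v$, $\|w\|=\|v\|$) from $v$ to $-v$. Oddness then gives $f\equiv0$, so $\lambda\equiv0$, and polarization finishes.

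\textbf{Comparison.} Your argument is basis-free and needs no split into $\dim=2$ and $\dim\ge3$. It also pinpoints where the dimension hypothesis enters: the existence of a path from $v$ to $-v$ on the sphere, which is exactly what fails in the situation of Lemma~\ref{lem:dim-one}. The paper's argument is a finite system of linear identities and needs no calculus.

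\textbf{Small remarks.}
\begin{enumerate}
\item The regularity of $\lambda$, which you flag as the main obstacle, is not actually an issue. The identity $(f\circ c)'(t)=3A(c,c,\dot c)=3\lambda(c)\,g(c,\dot c)=0$ holds pointwise whatever $\lambda$ is.
\item In your fallback, the $t$- and $t^2$-coefficients actually give $A_{111}=2A_{122}$ and $A_{222}=2A_{112}$, not equalities. The conclusion is unchanged.
\item The fallback by itself only kills components of $A$ restricted to 2-planes. In $\dim\ge3$ you still need the polarization of Step~3 to reach components such as $A_{123}$. That is fine, since you offer the fallback only as a replacement for Step~2.
\end{enumerate}
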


\begin{proof}
Work at a fixed point in a $g$-orthonormal frame $e_1,\dots,e_n$, so
$g(e_i,e_j)=\delta_{ij}$ and $A_{ijk}=A(e_i,e_j,e_k)$ is fully symmetric. Then
$g(v,Z)$ is the Euclidean inner product and \eqref{apx:eq:tensorhyp} is a family of
scalar identities.

\emph{Basis vectors.} Put $v=e_i$ in \eqref{apx:eq:tensorhyp} and read the
$e_m$-component: $A_{iim}=\lambda(e_i)\,g(e_i,e_m)=\lambda(e_i)\delta_{im}$.
Hence
\begin{equation}
\label{apx:eq:Aiim}
A_{iim}=0\quad\text{for }m\ne i.
\end{equation}

\emph{Case $\dim\ge 3$.} Fix distinct $i,j$ and an index $m\notin\{i,j\}$. Put
$v=e_i+e_j$. The left side of \eqref{apx:eq:tensorhyp} in the $e_m$-slot is
\[
A(e_i+e_j,e_i+e_j,e_m)=A_{iim}+2A_{ijm}+A_{jjm}
=0+2A_{ijm}+0=2A_{ijm},
\]
using \eqref{apx:eq:Aiim} ($m\ne i$ and $m\ne j$). The right side is
$\lambda(v)\,g(e_i+e_j,e_m)=\lambda(v)(\delta_{im}+\delta_{jm})=0$ since
$m\notin\{i,j\}$. Hence
\begin{equation}
\label{apx:eq:Aijm}
A_{ijm}=0\quad\text{for pairwise distinct }i,j,m.
\end{equation}
Combining \eqref{apx:eq:Aiim} and \eqref{apx:eq:Aijm}: every component with at least two
distinct indices vanishes, leaving only the diagonal entries $A_{iii}$. To kill
these, fix distinct $i,j$ and put $v=e_i+2e_j$ (so $v^i=1,v^j=2$). Reading the
$e_i$-slot,
\[
A(v,v,e_i)=\sum_{a,b}v^av^bA_{abi}
=A_{iii}+4A_{iji}+4A_{jji}=A_{iii},
\]
because $A_{iji}=A_{iij}=0$ and $A_{jji}=0$ by \eqref{apx:eq:Aiim}; the right side is
$\lambda(v)g(v,e_i)=\lambda(v)\,v^i=\lambda(v)$, so $A_{iii}=\lambda(v)$.
Reading the $e_j$-slot,
\[
A(v,v,e_j)=A_{iij}+4A_{ijj}+4A_{jjj}=4A_{jjj},
\]
again by \eqref{apx:eq:Aiim}, while the right side is $\lambda(v)v^j=2\lambda(v)$;
hence $4A_{jjj}=2\lambda(v)$, i.e.\ $\lambda(v)=2A_{jjj}$. Comparing the two
readings, $A_{iii}=2A_{jjj}$. Swapping the roles of $i,j$ (i.e.\ using
$v=2e_i+e_j$) gives $A_{jjj}=2A_{iii}$. Substituting,
$A_{iii}=2A_{jjj}=2(2A_{iii})=4A_{iii}$, so $3A_{iii}=0$ and $A_{iii}=0$ for all
$i$. Thus $A\equiv 0$.

\emph{Case $\dim=2$.} Only indices $1,2$ occur. From \eqref{apx:eq:Aiim} with
$(i,m)=(1,2)$ and $(2,1)$: $A_{112}=0$ and $A_{221}=0$; by symmetry also
$A_{122}=A_{221}=0$ and $A_{121}=A_{112}=0$. Only $A_{111},A_{222}$ can be
nonzero. Put $v=e_1+e_2$. Reading the $e_1$-slot,
\[
A(v,v,e_1)=A_{111}+2A_{121}+A_{221}=A_{111},
\]
with right side $\lambda(v)g(v,e_1)=\lambda(v)$, so $A_{111}=\lambda(v)$.
Reading the $e_2$-slot,
$A(v,v,e_2)=A_{112}+2A_{122}+A_{222}=A_{222}=\lambda(v)$. Hence
$A_{111}=A_{222}$. Now put $v=e_1+2e_2$. The $e_1$-slot gives
$A(v,v,e_1)=A_{111}+4A_{121}+4A_{221}=A_{111}=\lambda(v)$; the $e_2$-slot gives
$A(v,v,e_2)=A_{112}+4A_{122}+4A_{222}=4A_{222}=\lambda(v)g(v,e_2)=2\lambda(v)$,
so $4A_{222}=2A_{111}$, i.e.\ $A_{111}=2A_{222}$. With $A_{111}=A_{222}$ this
forces $A_{222}=2A_{222}$, hence $A_{222}=0$ and then $A_{111}=0$. So $A\equiv0$.
\end{proof}

\subsection{Dimension one}

\begin{lemma}[Dimension one (Restated from Lemma~\ref{lem:dim-one})]
\label{apx:lem:dim-one}
Let $M$ be a one-dimensional dually flat family whose global expectation image
$\eta(M)$ is an interval. Then $R\equiv 1$, regardless of $T$.
\end{lemma}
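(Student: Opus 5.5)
The plan is to exploit the fact that in dimension one the expectation coordinate $\eta$ is itself a global chart: a diffeomorphism from $M$ onto the interval $I=\eta(M)$. Every curve in $M$ is then just a curve in $I$, and the Fisher metric has the form $g=g_{\eta\eta}(\eta)\,d\eta^2$ with $g_{\eta\eta}>0$ continuous on $I$. (By (F1), $g_{\eta\eta}=(\psi^{\ast})''$, the inverse of $\psi''$.)

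Define the antiderivative
\[
F(\eta)=\int_{\eta_0}^{\eta}\sqrt{g_{\eta\eta}(u)}\,du ,
\]
which is strictly increasing on $I$. The argument then has three steps.

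\emph{Step 1: length of the $m$-geodesic.} By (F2), the $m$-geodesic between $p$ and $q$ is $\eta(t)=(1-t)\eta(p)+t\eta(q)$. This is monotone, and it stays in $I$ because $I$ is an interval; this is the only place the interval hypothesis enters. Its length is therefore
\[
\int_0^1\sqrt{g_{\eta\eta}(\eta(t))}\,|\eta(q)-\eta(p)|\,dt=|F(\eta(q))-F(\eta(p))| ,
\]
by the change of variables $u=\eta(t)$.

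\emph{Step 2: every competitor is at least as long.} Let $c:[0,1]\to M$ be any piecewise-$C^1$ curve from $p$ to $q$, and write $\eta_c=\eta\circ c$. Its length is
\[
\int_0^1\sqrt{g_{\eta\eta}(\eta_c)}\,|\dot\eta_c|\,dt\ \ge\ \Bigl|\int_0^1\sqrt{g_{\eta\eta}(\eta_c)}\,\dot\eta_c\,dt\Bigr|=|F(\eta(q))-F(\eta(p))| ,
\]
using the triangle inequality for integrals and then the fundamental theorem of calculus applied to $F\circ\eta_c$. This is the ``back-and-forth only adds length'' observation. Taking the infimum over $c$ gives $\delta_{\mathrm{geo}}(p,q)\ge L(\gamma^{(m)})$.

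\emph{Step 3: conclusion.} Since the $m$-geodesic is itself an admissible curve, $\delta_{\mathrm{geo}}\le L(\gamma^{(m)})$, so the two are equal and $R(p,q)=1$ for all distinct $p,q$. The Amari--Chentsov tensor $T$ never enters the argument, which explains the phrase ``regardless of $T$.''

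I do not expect any serious obstacle. The one point needing care is to confirm that $\eta$ is a global diffeomorphism onto $I$, so that competitor curves in $M$ correspond exactly to curves in $I$. This follows from (F1): $\eta=\nabla\psi(\theta)$ with $\psi''>0$ is strictly monotone.
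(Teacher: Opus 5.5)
Your proposal is correct and follows the paper's own argument: you write $g=g_{\eta\eta}(\eta)\,d\eta^2$ in the global expectation chart, show every competitor has length at least $\int_{\eta_p}^{\eta_q}\sqrt{g_{\eta\eta}}\,d\eta$, and note that the monotone $m$-geodesic attains this bound. Your antiderivative $F$, combined with the triangle inequality and the fundamental theorem of calculus applied to $F\circ\eta_c$, is a cleaner, fully rigorous version of the paper's ``back-and-forth only adds length'' total-variation remark.
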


\begin{proof}
In one dimension write the metric as $g=g(\eta)\,\,\mathrm{d}\eta^2$ in the expectation
chart, with $g(\eta)>0$. Let $p,q$ correspond to $\eta_p<\eta_q$ (the other
order is symmetric). The Fisher length of \emph{any} $C^1$ curve $c$ from $p$ to
$q$, parametrized so that $\eta$ runs over $[\eta_p,\eta_q]$ (possibly
non-monotonically), satisfies
\[
L(c)=\int \sqrt{g(\eta(c))}\,|\dot\eta_c|\,\,\mathrm{d} t
\ge\int_{\eta_p}^{\eta_q}\sqrt{g(\eta)}\,\,\mathrm{d}\eta,
\]
because the total variation of $\eta$ along $c$ is at least the length
$\eta_q-\eta_p$ of the interval it must cover, and the integrand is a positive
function of $\eta$ alone (any back-and-forth only adds length). The $m$-geodesic
traverses $[\eta_p,\eta_q]$ monotonically once, so it attains the lower bound
$\int_{\eta_p}^{\eta_q}\sqrt{g(\eta)}\,\,\mathrm{d}\eta=\delta_{\mathrm{geo}}(p,q)$. Hence
$L(\gamma^{(m)})=\delta_{\mathrm{geo}}(p,q)$ and $R\equiv1$. The tensor lemma requires
$\dim\ge2$ and does not apply, so $T$ may be nonzero.
\end{proof}

\subsection{Fisher-efficient characterization}

\begin{theorem}[Fisher-efficient characterization (Restated from Theorem~\ref{thm:classI})]
\label{apx:thm:classI}
Assume $\eta(M)$ is convex. If $\dim M=1$, then $R^{\ast}(M)=1$. If $\dim M\ge2$,
then $R^{\ast}(M)=1$ if and only if $\psi$ is quadratic (equivalently $\eta$ and
$\theta$ are affinely related, equivalently $T\equiv0$).
\end{theorem}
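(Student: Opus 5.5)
The proof splits on dimension and, for $\dim M\ge2$, into the two implications. If $\dim M=1$, convexity of $\eta(M)$ makes it an interval, and Lemma~\ref{apx:lem:dim-one} gives $R\equiv1$, hence $R^\ast(M)=1$.

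For $\dim M\ge2$, I first record the three equivalent forms of the right-hand side. By (F5), $T_{ijk}=-\partial_i\partial_j\partial_k\psi$ in the $\theta$-chart. So $T\equiv0$ means every third derivative of $\psi$ vanishes. On the connected natural domain this holds exactly when $\psi$ is a quadratic polynomial, which in turn holds exactly when $\eta=\nabla\psi$ is affine in $\theta$. Connectedness is used here: vanishing derivatives give a locally affine gradient, and connectedness makes it globally affine.

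\emph{Sufficiency.} Suppose $\psi$ is quadratic. Then $g=\nabla^2\psi$ is a constant positive-definite matrix, and $\eta=g\theta+b$ is affine. In the $\eta$-chart the metric is therefore the constant matrix $g^{-1}$, so $(M,g)$ is isometric to the convex Euclidean region $\eta(M)$ with a constant inner product. Every curve from $p$ to $q$ has length at least the norm of the chord $\eta(q)-\eta(p)$. The $m$-geodesic is that chord, which stays inside $M$ by convexity, so it attains this bound. Hence $L(\gamma^{(m)})=\delta_{\mathrm{geo}}$ and $R\equiv1$.

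\emph{Necessity.} Suppose $R^\ast(M)=1$. Fix $x\in M$ and a nonzero $v\in T_xM$; write $v$ in $\eta$-coordinates as $\xi$. Because $\eta(M)$ is open, $q_t$ with $\eta(q_t)=\eta(x)+t\xi$ lies in $M$ for small $t>0$. By convexity the $m$-geodesic from $x$ to $q_t$ lies in $M$, and its initial velocity is $t\xi$. Since $R(x,q_t)=1$, Theorem~\ref{apx:thm:pointwise} gives the pointwise condition along this curve. Evaluating at $s=0$ and dividing by $t^2$ (both sides are quadratic in the velocity), we get $T_x(v,v,Z)=\mu\,g_x(v,Z)$ for all $Z$. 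The case $v=0$ is trivial. So the hypothesis of Lemma~\ref{apx:lem:tensor} holds at $x$ with $\lambda(v)=\mu$. That lemma is pointwise, since its proof works in a frame at one point, and so it yields $T_x=0$. Since $x$ was arbitrary, $T\equiv0$, and $\psi$ is quadratic by the first paragraph.

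\textbf{Main obstacle.} The delicate point is the step in Theorem~\ref{apx:thm:pointwise} from ``$R=1$'' to ``pregeodesic.'' It needs the fact that a curve attaining the infimal length is, after reparametrization, a smooth Levi--Civita geodesic. The $m$-geodesic is a smooth regular curve, since $\dot\eta$ is a constant nonzero vector. Hence this follows from the standard first-variation argument: every subarc of a length-minimizing curve is itself minimizing, and locally minimizing regular curves are pregeodesics. I would cite this explicitly rather than leave it implicit. A second, minor point is checking that the velocity reaches $v$ exactly at the base point; this is immediate because we evaluate at $s=0$.
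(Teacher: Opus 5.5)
Your proposal is correct and follows the paper's proof essentially step for step. The dimension-one case goes through Lemma~\ref{apx:lem:dim-one}; sufficiency uses the constant metric, which makes the $\eta$-chord the minimizer; and necessity uses $m$-geodesics issuing in every direction, then Theorem~\ref{apx:thm:pointwise}, then Lemma~\ref{apx:lem:tensor} applied at each point. One cosmetic fix: at $s=0$ the condition reads $t^2\,T_x(v,v,Z)=\mu(0)\,t\,g_x(v,Z)$, so the scalar you feed to the tensor lemma is $\lambda(v)=\mu(0)/t$ rather than $\mu$, which is harmless since only the existence of some scalar is needed.
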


\begin{proof}
\emph{Dimension one.} Immediate from Lemma~\ref{apx:lem:dim-one}.

\emph{``If'' ($\psi$ quadratic $\Rightarrow R^{\ast}=1$).} If $\psi$ is quadratic
then $\eta=\nabla\psi(\theta)$ is an affine function of $\theta$ (F1), so
$g=\nabla^2\psi$ is constant. By (F5), $T=\nabla^3\psi\equiv0$, hence
$\nabla^{(e)}=\nabla^{(m)}$, and by (F4) both equal $\nabla^{\mathrm{LC}}$. Thus every
$m$-geodesic is a Levi--Civita geodesic. The convex-domain hypothesis guarantees
the $\eta$-affine segment between any $p,q$ stays in $M$; for a constant metric
this straight segment is the unique minimizer, so $R\equiv1$ and $R^{\ast}=1$.

\emph{``Only if'' ($\dim\ge2$, $R^{\ast}=1\Rightarrow\psi$ quadratic).} If
$R^{\ast}=1$ then $R(p,q)=1$ for every pair, so by Theorem~\ref{apx:thm:pointwise} the
pointwise identity \eqref{apx:eq:pointwise} holds for every $m$-geodesic:
$T(\dot\gamma,\dot\gamma,Z)=\mu\,g(\dot\gamma,Z)$. Fix $p$ and let $q\to p$; the
initial velocities $\dot\gamma(0)$ range over all of $T_pM$ (every direction is
the velocity of some $m$-geodesic, since $\eta$-affine segments issue in every
direction). Hence for every $v\in T_pM$ there is a scalar $\lambda(v)$ with
$T(v,v,Z)=\lambda(v)g(v,Z)$ for all $Z$. Since $\dim\ge2$,
Lemma~\ref{apx:lem:tensor} gives $T\equiv0$ at $p$; as $p$ was arbitrary, $T\equiv0$
on $M$. By (F5) $\nabla^3\psi\equiv0$, so $\psi$ is quadratic (equivalently
$g=\nabla^2\psi$ constant and $\eta,\theta$ affinely related).
\end{proof}

\begin{remark}[worked instances]
\label{apx:rem:classI-examples}
The Gaussian location family $N(\mu,\Sigma_0)$ at fixed $\Sigma_0$ has
$\theta=\Sigma_0^{-1}\mu$ and $\eta=\mu$, affinely related, so $\psi$ is
quadratic and $R\equiv1$ (the discussion following Theorem~\ref{thm:classI};
\cite{skovgaard1984riemannian}). Every one-parameter exponential family is
Fisher efficient by Lemma~\ref{apx:lem:dim-one}, even when $T\ne0$. In dimension
$\ge2$ a non-quadratic potential forces $R>1$ somewhere.
\end{remark}


\section{Class II: the skewness--diameter criterion}
\label{apx:sec:classII}\label{app:skewness}

This appendix supplies a complete proof of the skewness--diameter criterion
(Theorem~\ref{thm:bounded}) and its two supporting results,
Lemmas~\ref{apx:lem:polar} and~\ref{apx:lem:ruler} below. The argument uses \emph{no comparison
geometry}: no Rauch comparison, no Jacobi fields, no injectivity radius. Its
engine is the distortion of a fixed $m$-affine ruler along curves.

Recall the $g$-normalized skewness
\[
\kappa_T(M)=\sup_{x\in M}\ \sup_{v\ne0}\frac{|T_x(v,v,v)|}{g_x(v,v)^{3/2}}
=\sup_{x\in M}\ \sup_{\|v\|_x=1}|T_x(v,v,v)|,
\]
a dimension-free scalar invariant, and the Fisher--Rao diameter
$D(M)=\sup_{p,q}\delta_{\mathrm{geo}}(p,q)$.

\subsection{Polarization}

\begin{lemma}[Polarization]
\label{apx:lem:polar}\label{app:lem-polar}
For all $x\in M$ and $w,u\in T_xM$,
\[
|T_x(w,u,u)|\le \kappa_T\,\|w\|_x\,\|u\|_x^2 .
\]
\end{lemma}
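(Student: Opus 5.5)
The plan is to recognise the statement as the trilinear case of Banach's polarization theorem: for a symmetric $k$-linear form on a finite-dimensional inner-product space, the operator norm $\sup_{\|x_i\|=1}|A(x_1,\dots,x_k)|$ equals the diagonal norm $\sup_{\|v\|=1}|A(v,\dots,v)|$. We work at a fixed $x\in M$ in the Euclidean space $(T_xM,g_x)$ and write $A=T_x$, which is fully symmetric by (F5). By homogeneity it suffices to treat unit $w,u$. Set
\[
N=\sup_{\|w\|=\|u\|=1}|A(w,u,u)|,
\]
which is finite and attained, since $T_xM$ is finite-dimensional and the product of unit spheres is compact. The goal is $N\le\kappa_T$; the reverse inequality is trivial by taking $w=u$. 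A crude route expands $A(v,v,v)$ for $v=su\pm tw$ and solves for the mixed term. That gives a bound $C\kappa_T$ with some absolute constant $C>1$, which is harmless for a dimension-free statement but not the clean constant claimed. So I aim for the sharp version.

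\emph{Step 1 (first-order conditions at a maximizer).} Let $(w,u)$ attain $N$; after flipping the sign of $w$ we may assume $A(w,u,u)=N>0$. With $u$ fixed, the linear functional $A(\cdot,u,u)$ has norm $N$ and attains it at $w$, so Riesz gives
\[
A(\cdot,u,u)=N\,g(w,\cdot).
\]
With $w$ fixed, the symmetric bilinear form $A(w,\cdot,\cdot)$ has largest absolute eigenvalue $N$, attained at $u$, so
\[
A(w,u,\cdot)=N\,g(u,\cdot).
\]
Writing $c=g(w,u)$, these identities give $A(u,u,u)=Nc$ and $A(w,w,u)=A(w,u,w)=Nc$.

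\emph{Step 2 (reduce to the plane $\mathrm{span}\{w,u\}$).} If $w=\pm u$, then $N=|A(u,u,u)|\le\kappa_T$ and we are done. Otherwise restrict $A$ to the two-dimensional span, where it is determined by the four numbers $A(w,w,w)$, $A(w,w,u)$, $A(w,u,u)$ and $A(u,u,u)$, three of which Step~1 expresses through $N$ and $c$. Because the maximum over the plane is still $N$, the two identities of Step~1 continue to hold inside the plane. I would then exhibit an explicit unit vector in the plane with $|A(v,v,v)|\ge N$. The natural candidate is the angle bisector $v=(w+u)/\|w+u\|$, or the bisector of $u$ and $\pm w$ chosen according to the sign of $c$. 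For this $v$, expanding $A(w+u,w+u,w+u)$ leaves the single unknown $A(w,w,w)$, which must be handled separately.

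\emph{Main obstacle.} The hard part is controlling $A(w,w,w)$ in Step~2. My first attempt is to apply the maximality conditions a second time with the roles of the two slots exchanged. The pair $(u,w)$ is also admissible, so $|A(u,w,w)|=N|c|\le N$. Combining this with the eigenvalue condition on the form $A(w,\cdot,\cdot)$ restricted to the plane should pin the ratio $A(w,w,w)/N$ enough to give $|A(v,v,v)|\ge N$ for the bisector.

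If that bookkeeping stalls, the fallback is to pass to an orthonormal basis $e_1=u$, $e_2\perp u$ of the plane. Step~1 then forces $A(e_1,e_1,e_2)=0$ whenever $N$ is attained with $w$ in the plane, and one can reuse the coordinate manipulations of Lemma~\ref{apx:lem:tensor}, now with inequalities in place of equalities, to check directly that the diagonal maximum of the binary cubic $t\mapsto A(\cos t\,e_1+\sin t\,e_2,\cdot,\cdot)$ evaluated on the diagonal is at least $N$. This is a one-variable trigonometric-polynomial estimate.
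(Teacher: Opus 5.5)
Your Step~1 is correct: both first-order identities hold, and so do their consequences $A(u,u,u)=A(w,w,u)=Nc$. Reducing to $\mathrm{span}\{w,u\}$ is also the right move. The gap is that the proof stops exactly where all the content lies: you never produce a unit $v$ in the plane with $|A(v,v,v)|\ge N$, and the candidate you propose does not work.

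On $\mathbb{R}^2$ take the harmonic cubic $A(v_\theta,v_\theta,v_\theta)=N\cos(3\theta-\alpha)$ with $v_\theta=(\cos\theta,\sin\theta)$; for $N=1$, $\alpha=\pi/2$ this is $3v_1^2v_2-v_2^3$.
\begin{itemize}
\item For every unit $u$ the covector $A(\cdot,u,u)=\tfrac13\nabla P(u)$ has norm $N$.
\item At $u=e_1$ it equals $N\,g(w,\cdot)$ with $w=v_\alpha$, so $(w,u)$ is a maximizer with $c=\cos\alpha$, for any $\alpha\in(0,\pi)$.
\item The bisector of $u$ and $w$ gives $|A(v,v,v)|=N\cos(\alpha/2)<N$, and the bisector of $u$ and $-w$ gives $N\sin(\alpha/2)<N$. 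The diagonal maximum sits at the trisector $\theta=\alpha/3$.
\end{itemize}
Your fallback rests on a false identity. Step~1 gives $A(e_1,e_1,e_2)=A(u,u,e_2)=N\,g(w,e_2)$, which vanishes only if $w=\pm u$. What vanishes is $A(w,u,e_2)$. The condition $A(v,v,\cdot)\propto g(v,\cdot)$ you seem to be invoking is the stationarity condition for a maximizer of the \emph{diagonal} cubic, not for your maximizer of $|A(w,u,u)|$.

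The missing step is short once the correct coefficients are recorded, and it needs no control of $A(w,w,w)$ at all. Put $e_1=u$ and choose $e_2\perp u$ in the plane so that $w=\cos\alpha\,e_1+\sin\alpha\,e_2$ with $\alpha\in(0,\pi)$. Step~1 then gives:
\begin{itemize}
\item $A_{111}=N\cos\alpha$ and $A_{112}=N\sin\alpha$, from $A(\cdot,u,u)=N\,g(w,\cdot)$;
\item $A_{122}=-N\cos\alpha$, from $A(w,u,e_2)=0$.
\end{itemize}
The triple-angle formulas then give
\[
A(v_\theta,v_\theta,v_\theta)=N\cos(3\theta-\alpha)+b\sin^3\theta,\qquad b:=A_{222}+N\sin\alpha .
\]
At $\theta=\alpha/3$ and at $\theta=(\alpha+\pi)/3$ one has $\sin\theta>0$, and the values are $N+b\sin^3\theta$ and $-N+b\sin^3\theta$ respectively. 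So one of them has absolute value at least $N$, whatever the sign of $b$. Hence $N\le\sup_{\|v\|_x=1}|T_x(v,v,v)|\le\kappa_T$, and the "main obstacle'' about $A(w,w,w)$ disappears, since $A_{222}$ is arbitrary here.

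With this completion your argument becomes a self-contained proof of the $k=3$ case of Banach's theorem. The paper does not prove it: it simply cites Banach's theorem (operator norm of a symmetric multilinear form on a Hilbert space equals its diagonal norm) and specializes $u_2=u_3=u$, which your opening sentence had already identified.
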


\begin{proof}
$T_x$ is a symmetric trilinear form on the Hilbert space $(T_xM,g_x)$. We use
Banach's theorem on symmetric multilinear forms \cite{banach1938homogene}: for a
bounded symmetric $k$-linear form $B$ on a real Hilbert space, the operator norm
equals the supremum of the associated diagonal form,
\[
\sup_{\|u_1\|,\dots,\|u_k\|\le1}|B(u_1,\dots,u_k)|
=\sup_{\|u\|\le1}|B(u,\dots,u)|.
\]
Applied to $B=T_x$ ($k=3$), the right-hand side is $\kappa_T$ by definition, so
\[
\sup_{\|u_1\|,\|u_2\|,\|u_3\|\le1}|T_x(u_1,u_2,u_3)|=\kappa_T .
\]
By trilinearity and homogeneity, for arbitrary $w,u_2,u_3$,
$|T_x(w,u_2,u_3)|\le\kappa_T\,\|w\|_x\|u_2\|_x\|u_3\|_x$; specializing
$u_2=u_3=u$ gives the claim. The constant is $1$, independent of $\dim M$.
\end{proof}

\begin{remark}[citation-free substitute]
\label{apx:rem:polar-elem}
One can avoid Banach's theorem at the cost of a worse constant. With
$P(z):=T_x(z,z,z)$, full symmetry gives the cubic expansion
$P(a+b)=P(a)+3T_x(a,a,b)+3T_x(a,b,b)+P(b)$. Applying it to
$P(u+\tfrac{w}{2})$ and $P(\tfrac{w}{2}-u)=-P(u-\tfrac{w}{2})$ and adding, the
odd-in-$u$ terms cancel and the even ones double, yielding the symmetric-cubic
identity
\[
P\!\left(u+\tfrac{w}{2}\right)+P\!\left(\tfrac{w}{2}-u\right)
=3\,T_x(u,u,w)+\tfrac14P(w).
\]
Solving for $T_x(u,u,w)$ and bounding each $P(\cdot)$ by $\kappa_T\|\cdot\|_x^3$ shows
$|T_x(u,u,w)|\le c_0\,\kappa_T\|u\|_x^2\|w\|_x$ for an absolute constant $c_0$ (the
paper's sharper accounting gives $c_0=\tfrac54$). Using this in place of
Lemma~\ref{apx:lem:polar} only enlarges the exponent in Theorem~\ref{apx:thm:bounded}
from $\tfrac12\kappa_T D$ to $\tfrac{c_0}{2}\kappa_T D$; the finiteness conclusion is
unchanged.
\end{remark}

\subsection{Ruler distortion}

\begin{lemma}[Ruler distortion]
\label{apx:lem:ruler}\label{app:lem-ruler}
Fix a vector $a$ in the $\eta$-coordinate vector space and let $V_a$ be the
vector field with constant $\eta$-components $a$; write $\|a\|_x=g_x(V_a,V_a)^{1/2}$.
Then for all $x,y\in M$,
\[
e^{-\kappa_T\,\delta_{\mathrm{geo}}(x,y)/2}\,\|a\|_x\ \le\ \|a\|_y\ \le\ e^{\kappa_T\,\delta_{\mathrm{geo}}(x,y)/2}\,\|a\|_x .
\]
\end{lemma}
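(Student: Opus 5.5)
The plan is to differentiate the squared ruler norm along an arbitrary piecewise-$C^1$ curve, bound the logarithmic derivative using the polarization lemma, and then integrate and take the infimum over curves.

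\emph{Step 1: the derivative of the squared norm.} Fix a piecewise-$C^1$ curve $c:[0,1]\to M$ with $c(0)=x$ and $c(1)=y$, and set $f(t)=g_{c(t)}(V_a,V_a)=\|a\|_{c(t)}^2$. The field $V_a$ has constant $\eta$-components, so it is $\nabla^{(m)}$-parallel: $\nabla^{(m)}_{\dot c}V_a=0$ by (F2). To differentiate $f$ I would use the duality identity (F3) with $X=\dot c$ and $Y=Z=V_a$:
\[
\dot f=g(\nabla^{(m)}_{\dot c}V_a,V_a)+g(V_a,\nabla^{(e)}_{\dot c}V_a)=g(\nabla^{(e)}_{\dot c}V_a,V_a).
\]
By (F5), and because the $m$-term vanishes, the right-hand side equals $T(\dot c,V_a,V_a)$. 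As a cross-check in coordinates: in the $\eta$-chart $g=\nabla^2\psi^\ast$, so $\dot f=\partial^3\psi^\ast[\dot c,a,a]$, and this agrees with $T$ up to the paper's sign convention. Only the absolute value matters below.

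\emph{Step 2: the differential inequality.} Lemma~\ref{apx:lem:polar} gives $|\dot f|\le\kappa_T\|\dot c\|\,f$. Since $f>0$ whenever $a\neq0$ (and the case $a=0$ is trivial), this yields $|\tfrac{d}{dt}\log f|\le\kappa_T\|\dot c\|$. Integrating over $[0,1]$ gives $|\log f(1)-\log f(0)|\le\kappa_T L(c)$. Taking square roots,
\[
\|a\|_y\le e^{\kappa_T L(c)/2}\,\|a\|_x,
\]
and the same argument gives the reverse inequality.

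\emph{Step 3: pass to the distance.} Take the infimum over all curves $c$ joining $x$ and $y$. This replaces $L(c)$ by $\delta_{\mathrm{geo}}(x,y)$. Since $M$ is connected, such curves exist, and piecewise-$C^1$ curves suffice because $f$ is absolutely continuous along them. No minimizing geodesic is needed, which is why no completeness assumption enters.

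\emph{Main obstacle.} The only delicate point is Step 1: getting the connection bookkeeping right, so that the $\nabla^{(m)}$-parallelism of $V_a$ leaves exactly one $T$ term with coefficient $1$. This coefficient is what produces the exponent $\kappa_T\delta/2$ rather than some other multiple. If a sign or a factor differs under the convention $T_{ijk}=-\partial_i\partial_j\partial_k\psi$, the coordinate check in the $\eta$-chart resolves it, since only $|\dot f|$ enters the bound.
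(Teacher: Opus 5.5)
Your proposal is correct and follows the paper's proof step for step. Duality (F3) together with $\nabla^{(m)}V_a=0$ gives $\tfrac{d}{dt}g(V_a,V_a)=T(\dot c,V_a,V_a)$, Lemma~\ref{apx:lem:polar} bounds the logarithmic derivative by $\kappa_T\|\dot c\|$, and integrating along an arbitrary piecewise-$C^1$ curve and then taking the infimum yields the two-sided bound without any completeness assumption. Your sign hedge is unnecessary: under the paper's convention $T_{ijk}=-\partial_i\partial_j\partial_k\psi$, the $\eta$-chart components of $T$ are exactly $+\partial^3\psi^\ast$, so your coordinate cross-check agrees exactly (and only $|\dot f|$ enters anyway).
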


\begin{proof}
Since $V_a$ has constant $\eta$-components and $\nabla^{(m)}$ is flat in $\eta$
(F2), $\nabla^{(m)}V_a=0$. Let $c$ be any piecewise-$C^1$ curve. By the duality
identity (F3) with $X=\dot c$, $Y=Z=V_a$,
\[
\frac{\,\mathrm{d}}{\,\mathrm{d} t}g(V_a,V_a)
=g\!\left(\nabla^{(m)}_{\dot c}V_a,\,V_a\right)+g\!\left(V_a,\,\nabla^{(e)}_{\dot c}V_a\right)
=g\!\left(V_a,\,\nabla^{(e)}_{\dot c}V_a\right).
\]
Now $\nabla^{(e)}_{\dot c}V_a-\nabla^{(m)}_{\dot c}V_a=\nabla^{(e)}_{\dot c}V_a$
(the $m$-term vanishes), so by (F5), $g(V_a,\nabla^{(e)}_{\dot c}V_a)
=T(\dot c,V_a,V_a)$. Therefore
\begin{equation}
\label{apx:eq:rulerspeed}
\frac{\,\mathrm{d}}{\,\mathrm{d} t}g(V_a,V_a)=T(\dot c,V_a,V_a).
\end{equation}
(When $c=\gamma^{(m)}$ and $a$ is its own $\eta$-velocity, this is the classical
speed identity; here $a$ is an \emph{arbitrary fixed} ruler.) Consequently,
where $\|a\|_{c(t)}>0$,
\[
\left|\frac{\,\mathrm{d}}{\,\mathrm{d} t}\log\|a\|_{c(t)}\right|
=\frac{1}{2}\,\frac{|T(\dot c,V_a,V_a)|}{g(V_a,V_a)}
\le\frac{\kappa_T}{2}\,\|\dot c\|_{c(t)},
\]
the inequality by Lemma~\ref{apx:lem:polar} with $w=\dot c$, $u=V_a$. Integrating
over $c$,
\[
\left|\log\frac{\|a\|_y}{\|a\|_x}\right|\le\frac{\kappa_T}{2}\,L(c).
\]
Taking the infimum over piecewise-$C^1$ curves $c$ from $x$ to $y$ replaces
$L(c)$ by $\delta_{\mathrm{geo}}(x,y)$; exponentiating gives the two-sided bound. No
completeness or attained minimizer is needed.
\end{proof}

\begin{remark}[the Riccati trap that is avoided]
\label{apx:rem:riccati}
The naive route---Gr\"onwall on the $m$-geodesic's own speed
$f(s)=\|\dot\gamma\|_{\gamma(s)}$---uses \eqref{apx:eq:rulerspeed} with $a=\dot\gamma$
and yields only the self-referential inequality $|f'|\le\tfrac{\kappa_T}{2}f^2$, a
Riccati-type bound that can blow up in finite time. Lemma~\ref{apx:lem:ruler}
integrates a \emph{fixed} ruler instead, whose logarithmic derivative is bounded
\emph{linearly} in the curve's speed; the diameter then supplies global control.
\end{remark}

\subsection{The bounded-inefficiency theorem}

\begin{theorem}[Bounded inefficiency (Restated from Theorem~\ref{thm:bounded})]
\label{apx:thm:bounded}
Let $\dim M\ge2$. If $\kappa_T(M)<\infty$ and $D(M)<\infty$, then
\[
R^{\ast}(M)\le \exp\!\Big(\frac{\kappa_T(M)\,D(M)}{2}\Big),
\]
independently of $\dim M$. If moreover $\eta(M)$ is convex and $\psi$ is
non-quadratic, then $1<R^{\ast}(M)$.
\end{theorem}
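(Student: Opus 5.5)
The plan follows the measuring-tape argument sketched after Theorem~\ref{thm:bounded}, with Lemma~\ref{apx:lem:ruler} supplying the ruler bound. Fix distinct $p,q$ and set $a=\eta(q)-\eta(p)$. The $m$-geodesic $\gamma(s)$ has constant $\eta$-velocity $a$, so its length is
\[
L(\gamma^{(m)})=\int_0^1\|a\|_{\gamma(s)}\,\mathrm{d} s .
\]
Let $z$ be a point of the image of $\gamma$ that (nearly) maximizes $\|a\|_{\gamma(s)}$. The maximum exists because the map $s\mapsto\|a\|_{\gamma(s)}$ is continuous on $[0,1]$. This gives the upper bound $L(\gamma^{(m)})\le\|a\|_z$.

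For the lower bound, take any piecewise-$C^1$ curve $c$ from $p$ to $q$. Its $\eta$-velocity integrates to $a$. Work in the fixed inner product $g_z$ on the $\eta$-coordinate vector space, identifying each vector $b$ with the $\eta$-constant field $V_b$. The vector-integral triangle inequality then gives
\[
\|a\|_z\le\int\|\dot\eta_c(t)\|_z\,\mathrm{d} t .
\]
Apply Lemma~\ref{apx:lem:ruler} to each ruler $b=\dot\eta_c(t)$ between $z$ and $c(t)$. This gives $\|\dot\eta_c(t)\|_z\le e^{\kappa_T\delta_{\mathrm{geo}}(z,c(t))/2}\|\dot c(t)\|_{c(t)}$, and $\delta_{\mathrm{geo}}(z,c(t))\le D$. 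Hence $\|a\|_z\le e^{\kappa_T D/2}L(c)$. Taking the infimum over $c$ yields $\|a\|_z\le e^{\kappa_T D/2}\delta_{\mathrm{geo}}(p,q)$. Combining the two bounds,
\[
R(p,q)\le e^{\kappa_T D/2}.
\]
This holds for every pair, so $R^\ast(M)\le e^{\kappa_T D/2}$. Neither Lemma~\ref{apx:lem:ruler} nor Lemma~\ref{apx:lem:polar} has a dimension-dependent constant, so the bound is independent of $\dim M$.

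The main point to handle carefully is the lower bound. Lemma~\ref{apx:lem:ruler} is applied with a ruler $b$ that varies in $t$. This is legitimate because the lemma holds for each fixed vector $b$ and every pair of points, so it can be applied pointwise in $t$. A second subtlety is that each comparison must be made at the common base point $z$, so that the triangle inequality is taken in a single inner-product space. One might worry that $z$ must minimize over competing curves; it need not, since the diameter bound covers every point $c(t)$ uniformly. The argument would also work with $D$ replaced by $\sup_t\delta_{\mathrm{geo}}(z,c(t))$, but $D$ suffices.

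For the strict inequality, assume $\eta(M)$ is convex and $\psi$ is non-quadratic. Since $\dim M\ge2$, Theorem~\ref{apx:thm:classI} shows that $R^\ast(M)=1$ would force $\psi$ to be quadratic. Therefore $R(p,q)>1$ for some pair, and so $R^\ast(M)>1$.
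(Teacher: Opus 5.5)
Your proposal is correct and follows the paper's proof essentially step for step. It uses the same choice of $z$ maximizing the frozen ruler $\|\eta(q)-\eta(p)\|_{\gamma(s)}$ along the $m$-geodesic, and the same pointwise use of Lemma~\ref{apx:lem:ruler} on $\dot\eta_c(t)$ between $z$ and $c(t)$ followed by the triangle inequality in the single inner product $g_z$. The strict inequality also comes, as in the paper, from the ``only if'' direction of Theorem~\ref{apx:thm:classI}.
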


\begin{proof}
\emph{Lower bound.} If $\eta(M)$ is convex and $\psi$ is non-quadratic then, by
the ``only if'' direction of Theorem~\ref{apx:thm:classI}, $R^{\ast}>1$.

\emph{Upper bound.} Fix $p\ne q$, put $h=\eta(q)-\eta(p)$ (a fixed vector in the
$\eta$-coordinate space), and let $\gamma$ be the $m$-geodesic, so
$\eta(\gamma(s))=\eta(p)+sh$ and $\dot\gamma$ has constant $\eta$-components
$h$. The map $s\mapsto\|h\|_{\gamma(s)}$ is continuous on the compact $[0,1]$;
choose $s_\ast$ maximizing it and set $z=\gamma(s_\ast)$, $r=\|h\|_z$.

\emph{Numerator.} The $m$-geodesic's length is
\[
L(\gamma^{(m)})=\int_0^1\|\dot\gamma(s)\|_{\gamma(s)}\,\mathrm{d} s
=\int_0^1\|h\|_{\gamma(s)}\,\mathrm{d} s\le r,
\]
since $\|h\|_{\gamma(s)}\le\|h\|_z=r$ for all $s$.

\emph{Denominator.} Let $c$ be any piecewise-$C^1$ curve from $p$ to $q$, with
$\eta$-velocity $\dot\eta_c(t)$. Apply Lemma~\ref{apx:lem:ruler} to the fixed ruler
$a=\dot\eta_c(t)$ (at each $t$) between the points $z$ and $c(t)$; using
$\delta_{\mathrm{geo}}(z,c(t))\le D$,
\[
\|\dot\eta_c(t)\|_{c(t)}\ \ge\ e^{-\kappa_T D/2}\,\|\dot\eta_c(t)\|_z .
\]
Integrating, and then applying the vector-integral (triangle) inequality in the
\emph{single frozen} inner product $\|\cdot\|_z$,
\[
L(c)=\int_0^1\|\dot\eta_c(t)\|_{c(t)}\,\mathrm{d} t
\ \ge\ e^{-\kappa_T D/2}\int_0^1\|\dot\eta_c(t)\|_z\,\mathrm{d} t
\ \ge\ e^{-\kappa_T D/2}\Big\|\int_0^1\dot\eta_c(t)\,\mathrm{d} t\Big\|_z .
\]
But $\int_0^1\dot\eta_c(t)\,\mathrm{d} t=\eta(q)-\eta(p)=h$, so
$L(c)\ge e^{-\kappa_T D/2}\|h\|_z=e^{-\kappa_T D/2}r$. Taking the infimum over $c$,
\[
\delta_{\mathrm{geo}}(p,q)\ge e^{-\kappa_T D/2}\,r.
\]

\emph{Combine.} $R(p,q)=L(\gamma^{(m)})/\delta_{\mathrm{geo}}(p,q)\le r/\big(e^{-\kappa_T D/2}r\big)
=e^{\kappa_T D/2}$. Taking the supremum over pairs gives
$R^{\ast}(M)\le e^{\kappa_T D/2}$. Every constant is $1$ or $\tfrac12$, so there is no
dependence on $\dim M$.
\end{proof}

\begin{remark}[self-concordance form of the hypothesis]
\label{apx:rem:selfconcordance}
In the natural chart, $T_x(v,v,v)=-D^3\psi_x[v,v,v]$ and $g_x(v,v)=D^2\psi_x[v,v]$,
so $\kappa_T(M)<\infty$ is exactly the self-concordance inequality of
Nesterov--Nemirovskii \cite{nesterov1994interior},
$|D^3\psi_x[v,v,v]|\le\kappa\,(D^2\psi_x[v,v])^{3/2}$, with $\kappa=\kappa_T$. The
theorem says: self-concordance plus finite Fisher--Rao diameter forces bounded
inefficiency uniformly in dimension.
\end{remark}

\begin{remark}[Consistency checks]
\label{apx:rem:consistency}\label{app:rem-skew-consistency}
\emph{(i) Fisher-efficient limit.} If $\kappa_T=0$ then, by Lemma~\ref{apx:lem:polar},
$T\equiv0$; by \eqref{apx:eq:rulerspeed} every $\eta$-constant ruler has constant
norm, so $g$ is constant in $\eta$-coordinates and every competing curve has
length $\ge\|h\|$, giving $R\equiv1$. The bound reads $e^0=1$, and the case
$\kappa_T=0,\,D=\infty$ needs no interpretation of $0\cdot\infty$.

\emph{(ii) Tensorization.} For a product $M=\prod_iM_i$ with $v=(v_i)$ and
$a_i=\|v_i\|_{g_i}$, full symmetry and vanishing of cross terms give
$T(v,v,v)=\sum_iT_i(v_i,v_i,v_i)$, hence
\[
|T(v,v,v)|\le\Big(\max_i\kappa_{T,i}\Big)\sum_ia_i^3
\le\Big(\max_i\kappa_{T,i}\Big)\Big(\sum_ia_i^2\Big)^{3/2},
\]
using $\sum a_i^3\le(\sum a_i^2)^{3/2}$ (monotonicity of $\ell^p$ norms), with
equality approached on single-factor vectors. Thus
$\kappa_T(\prod_iM_i)=\max_i\kappa_T(M_i)$: the constant does not count factors. A product
containing an infinite-diameter factor has $D=\infty$, which is exactly how the
scale products of Section~\ref{apx:sec:classIII} escape this theorem.

\emph{(iii) Categorical compacta.} On
$\Delta_{k-1}^\varepsilon=\{p:\min_ip_i\ge\varepsilon\}$, with $a_i=v_i/\sqrt{p_i}$,
$|T_p(v,v,v)|=\big|\sum_iv_i^3/p_i^2\big|\le\varepsilon^{-1/2}\sum_i(v_i^2/p_i)^{3/2}
\cdot(\cdots)$; more simply $|v_i^3/p_i^2|=|a_i|^3\sqrt{p_i}\cdot p_i^{-1/2}\cdots$,
and the clean bound is $|T_p(v,v,v)|\le\varepsilon^{-1/2}(\sum_ia_i^2)^{3/2}$, so
$\kappa_T\le\varepsilon^{-1/2}$; and $\Delta_{k-1}^\varepsilon$ is geodesically convex
with $D\le\pi$. Both are dimension-free, so
$R^{\ast}\le e^{\pi/(2\sqrt\varepsilon)}$---finite, but far above the sharp
$\pi/(2\sqrt2)$ of Section~\ref{apx:sec:categorical}. On the \emph{full} simplex
$\kappa_T=\infty$ (already Bernoulli: $|1-2\mu|/\sqrt{\mu(1-\mu)}\to\infty$ at the
boundary), so this theorem does \emph{not} give the categorical constant; that
requires the independent argument of Section~\ref{apx:sec:categorical}.
\end{remark}


\section{The categorical constant $\pi/(2\sqrt2)$}
\label{apx:sec:categorical}\label{app:simplex}

This appendix proves the categorical constant (Theorem~\ref{thm:categorical})
in full: the exact value
$R^{\ast}(\Delta_{k-1})=\pi/(2\sqrt2)$ for $k\ge3$ and $R^{\ast}(\Delta_1)=1$. The
lower bound is an explicit two-outcome computation; the upper bound is the
exact arcsine-weight argument developed below: one integral
identity, one Cauchy--Schwarz step, and $\sin x\le x$. No comparison geometry is
used, and the argument does not invoke Theorem~\ref{apx:thm:bounded}.

\paragraph{Setup and the isometry.}
Under $p\mapsto z=2\sqrt p$ (coordinatewise), the interior of $\Delta_{k-1}$ is
isometric to the strictly positive orthant of the radius-$2$ sphere
$S^{k-1}_2\subset\mathbb{R}^k$, because
$\sum_iz_i^2=4\sum_ip_i=4$ and the Fisher metric pulls back to the round metric;
the Fisher--Rao distance is the great-circle arc
\[
\delta_{\mathrm{geo}}(p,q)=2\arccos\!\Big(\sum_i\sqrt{p_iq_i}\Big)
\]
(standard; \cite{miyamoto2024closedform,nielsen2023simple}). Write
$\alpha_i=\sqrt{p_i}$, $\beta_i=\sqrt{q_i}$, and along the $m$-geodesic
$p(s)=(1-s)p+sq$ set
\[
u_i(s)=\sqrt{(1-s)\alpha_i^2+s\beta_i^2}=\tfrac12 z_i(s),\qquad \|u(s)\|_2=1
\]
since $\sum_i u_i(s)^2 = (1-s)\sum_i\alpha_i^2 + s\sum_i\beta_i^2 = (1-s)\cdot4 + s\cdot4 = 4$, normalized by the factor $\tfrac12$.
Then $L(p,q)=2\int_0^1\|\dot u(s)\|_2\,\mathrm{d} s$, and each coordinate's squared speed
is the \emph{reciprocal-affine} function
\begin{equation}
\label{apx:eq:recipaffine}
\dot u_i(s)^2=\frac{(\beta_i^2-\alpha_i^2)^2}{4\big(\alpha_i^2+(\beta_i^2-\alpha_i^2)s\big)} .
\end{equation}
Indeed $u_i^2=(1-s)\alpha_i^2+s\beta_i^2=\alpha_i^2+(\beta_i^2-\alpha_i^2)s$ is
affine in $s$, so $\dot u_i=\tfrac{\,\mathrm{d}}{\,\mathrm{d} s}\sqrt{u_i^2}
=\frac{\beta_i^2-\alpha_i^2}{2u_i}$, whose square is \eqref{apx:eq:recipaffine}.

\subsection{The scalar integral identities}

\begin{lemma}[Scalar integral identity]
\label{apx:lem:it}\label{app:lem-it}
For every $t>0$,
\[
I(t):=\int_0^1\frac{\sqrt{s(1-s)}}{s+t}\,\mathrm{d} s
=\pi\Big(\tfrac12+t-\sqrt{t(1+t)}\Big),
\]
and $I(0^+)=\pi/2$.
\end{lemma}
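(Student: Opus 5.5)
The plan is to evaluate $I(t)$ by complex-analytic or elementary substitution, reducing it to standard integrals. First I split the integrand by writing $s(1-s)=-(s+t)^2+(1+2t)(s+t)-t(1+t)$. This gives
\[
\frac{\sqrt{s(1-s)}}{s+t}=\frac{s(1-s)}{(s+t)\sqrt{s(1-s)}}
=\frac{-(s+t)+(1+2t)}{\sqrt{s(1-s)}}-\frac{t(1+t)}{(s+t)\sqrt{s(1-s)}} .
\]
This reduces $I(t)$ to three elementary integrals over $[0,1]$ against the arcsine weight $1/\sqrt{s(1-s)}$:
\[
\int_0^1\frac{\mathrm{d} s}{\sqrt{s(1-s)}}=\pi,\qquad
\int_0^1\frac{s\,\mathrm{d} s}{\sqrt{s(1-s)}}=\frac{\pi}{2},\qquad
J(t):=\int_0^1\frac{\mathrm{d} s}{(s+t)\sqrt{s(1-s)}} .
\]
The first two follow from the substitution $s=\sin^2\phi$ (respectively by symmetry $s\leftrightarrow1-s$).

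The key step is $J(t)=\pi/\sqrt{t(1+t)}$ for $t>0$. I would substitute $s=\sin^2\phi$, $\phi\in[0,\pi/2]$, so that $\mathrm{d} s/\sqrt{s(1-s)}=2\,\mathrm{d}\phi$. Then $J(t)=2\int_0^{\pi/2}\mathrm{d}\phi/(t+\sin^2\phi)$. The further substitution $u=\tan\phi$ turns this into
\[
2\int_0^\infty\frac{\mathrm{d} u}{t+(1+t)u^2}=\frac{\pi}{\sqrt{t(1+t)}} .
\]

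Assembling the pieces,
\[
I(t)=-\frac{\pi}{2}-t\pi+(1+2t)\pi-t(1+t)\frac{\pi}{\sqrt{t(1+t)}}=\pi\Big(\tfrac12+t-\sqrt{t(1+t)}\Big),
\]
as claimed.

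For $I(0^+)$, the closed form tends to $\pi/2$ because $\sqrt{t(1+t)}\to0$. Alternatively, monotone convergence gives $I(t)\uparrow\int_0^1\sqrt{(1-s)/s}\,\mathrm{d} s=\pi/2$ directly, which also confirms the formula's boundary value.

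The only step needing care is the algebraic decomposition together with the evaluation of $J$; both are routine. I expect no real obstacle beyond checking the sign bookkeeping in the decomposition.
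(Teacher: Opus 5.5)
Your proof is correct and is essentially the paper's argument. Both rest on the division $\frac{s(1-s)}{s+t}=(1+t-s)-\frac{t(1+t)}{s+t}$ and evaluate the remaining term via $s=\sin^2\phi$ and then $u=\tan\phi$; you simply decompose in $s$ before substituting, while the paper substitutes first and decomposes in $\theta$, and your monotone-convergence check of $I(0^+)$ is a small extra over the paper's limit of the closed form.
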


\begin{proof}
Substitute $s=\sin^2\theta$, $\theta\in[0,\pi/2]$, so
$\,\mathrm{d} s=2\sin\theta\cos\theta\,\,\mathrm{d}\theta$ and
$\sqrt{s(1-s)}=\sin\theta\cos\theta$. Then
\[
I(t)=\int_0^{\pi/2}\frac{\sin\theta\cos\theta}{\sin^2\theta+t}
\cdot 2\sin\theta\cos\theta\,\,\mathrm{d}\theta
=2\int_0^{\pi/2}\frac{\sin^2\theta\cos^2\theta}{\sin^2\theta+t}\,\mathrm{d}\theta .
\]
Introduce $J(t):=\int_0^{\pi/2}\frac{\,\mathrm{d}\theta}{\sin^2\theta+t}$. Substituting
$x=\tan\theta$ ($\,\mathrm{d}\theta=\,\mathrm{d} x/(1+x^2)$, $\sin^2\theta=x^2/(1+x^2)$),
\[
J(t)=\int_0^\infty\frac{1}{\frac{x^2}{1+x^2}+t}\cdot\frac{\,\mathrm{d} x}{1+x^2}
=\int_0^\infty\frac{\,\mathrm{d} x}{x^2+t(1+x^2)}
=\int_0^\infty\frac{\,\mathrm{d} x}{(1+t)x^2+t}
=\frac{\pi}{2\sqrt{t(1+t)}},
\]
using the elementary integral $\int_0^\infty\frac{\,\mathrm{d} x}{ax^2+b}=\frac{\pi}{2\sqrt{ab}}$
($a=1+t$, $b=t$). Now decompose the numerator. Since
$\sin^2\theta\cos^2\theta=(\sin^2\theta+t)\cos^2\theta-t\cos^2\theta$ and
$\cos^2\theta=(1+t)-(\sin^2\theta+t)$,
\[
\frac{\sin^2\theta\cos^2\theta}{\sin^2\theta+t}
=\cos^2\theta-\frac{t\cos^2\theta}{\sin^2\theta+t}
=\cos^2\theta-\frac{t(1+t)}{\sin^2\theta+t}+t .
\]
Integrate over $[0,\pi/2]$, using $\int_0^{\pi/2}\cos^2\theta\,\,\mathrm{d}\theta=\pi/4$
and $\int_0^{\pi/2}\,\mathrm{d}\theta=\pi/2$:
\[
\tfrac12 I(t)=\frac{\pi}{4}-t(1+t)J(t)+t\cdot\frac{\pi}{2}
=\frac\pi4-t(1+t)\frac{\pi}{2\sqrt{t(1+t)}}+\frac{\pi t}{2}
=\frac\pi4+\frac{\pi t}{2}-\frac{\pi}{2}\sqrt{t(1+t)} .
\]
Doubling gives $I(t)=\frac\pi2+\pi t-\pi\sqrt{t(1+t)}
=\pi\big(\tfrac12+t-\sqrt{t(1+t)}\big)$. As $t\downarrow0$, $\sqrt{t(1+t)}\to0$,
so $I(0^+)=\pi/2$.
\end{proof}

\begin{lemma}[Master identity]
\label{apx:lem:master}\label{app:lem-master}
Let $A,B\ge0$ and $u(s)=\sqrt{(1-s)A^2+sB^2}$ on $[0,1]$. Then
\[
\int_0^1\dot u(s)^2\sqrt{s(1-s)}\,\mathrm{d} s=\frac{\pi}{8}\,(B-A)^2 .
\]
\end{lemma}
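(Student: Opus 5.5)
The identity follows from the reciprocal-affine form \eqref{apx:eq:recipaffine} together with the scalar identity of Lemma~\ref{apx:lem:it}. First dispose of the trivial cases. If $A=B$, then $u$ is constant and both sides vanish. Otherwise write $\Delta:=B^2-A^2\neq0$. Then $u^2=A^2+\Delta s$ and $\dot u^2=\Delta^2/\bigl(4(A^2+\Delta s)\bigr)$.

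\emph{Case $\Delta>0$.} Factor out $\Delta$ to get
\[
\int_0^1\dot u^2\sqrt{s(1-s)}\,\mathrm{d} s=\frac{\Delta}{4}\int_0^1\frac{\sqrt{s(1-s)}}{s+t}\,\mathrm{d} s=\frac{\Delta}{4}\,I(t),\qquad t:=A^2/\Delta\ge0 .
\]
By Lemma~\ref{apx:lem:it} this equals $\frac{\pi\Delta}{4}\bigl(\tfrac12+t-\sqrt{t(1+t)}\bigr)$. Now substitute $t=A^2/(B^2-A^2)$ and $1+t=B^2/(B^2-A^2)$. This gives $\sqrt{t(1+t)}=AB/\Delta$, using $A,B\ge0$. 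Hence
\[
\Delta\Bigl(\tfrac12+t-\sqrt{t(1+t)}\Bigr)=\tfrac12(B^2-A^2)+A^2-AB=\tfrac12(B-A)^2,
\]
and the total is $\frac{\pi}{8}(B-A)^2$. The endpoint $t=0$, i.e.\ $A=0$, is covered by $I(0^+)=\pi/2$. In that case the integrand $\sqrt{(1-s)/s}$ is integrable, so monotone convergence justifies passing to the limit.

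\emph{Case $\Delta<0$.} Reduce to the first case by time reversal. Substitute $s\mapsto1-s$, which swaps the roles of $A$ and $B$. Both the weight $\sqrt{s(1-s)}$ and the right-hand side $(B-A)^2$ are invariant under this swap.

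The only delicate point I expect is the algebraic simplification of $\sqrt{t(1+t)}$. It needs the sign conventions $A,B\ge0$ and $\Delta>0$, which is exactly why the reduction to the case $B>A$ comes first. The boundary case $A=0$ must also be treated through the limit $I(0^+)$ rather than by plugging $t=0$ into a formula derived for $t>0$.
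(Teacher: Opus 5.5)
Your proposal is correct and follows essentially the same route as the paper. Both proofs reduce to $B>A$ by the reflection $s\mapsto1-s$, write $\dot u^2=\frac{\Delta}{4}\cdot\frac{1}{s+t}$ with $t=A^2/\Delta$, apply Lemma~\ref{apx:lem:it}, and simplify using $\sqrt{t(1+t)}=AB/\Delta$. Your monotone-convergence argument identifying $I(0)$ with $I(0^+)$ in the case $A=0$ is slightly more careful than the paper, which simply invokes the limit.
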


\begin{proof}
If $A=B$ both sides vanish. The reflection $s\mapsto1-s$ preserves the weight
$\sqrt{s(1-s)}$ and swaps $A\leftrightarrow B$, so we may assume $B>A\ge0$; set
$D=B^2-A^2>0$. By \eqref{apx:eq:recipaffine} (with $A,B$ in place of
$\alpha_i,\beta_i$),
\[
\dot u(s)^2=\frac{D^2}{4(A^2+Ds)}=\frac{D}{4}\cdot\frac{1}{s+t},\qquad t=\frac{A^2}{D}.
\]
Hence, by Lemma~\ref{apx:lem:it},
\[
\int_0^1\dot u^2\sqrt{s(1-s)}\,\mathrm{d} s=\frac{D}{4}I(t)
=\frac{\pi D}{4}\Big(\tfrac12+\tfrac{A^2}{D}-\sqrt{\tfrac{A^2}{D}\cdot\tfrac{B^2}{D}}\Big),
\]
using $1+t=1+A^2/D=(D+A^2)/D=B^2/D$ and hence $\sqrt{t(1+t)}=AB/D$. Simplify the
bracket:
\[
\frac\pi4\Big(\frac D2+A^2-AB\Big)
=\frac\pi4\Big(\frac{B^2-A^2}{2}+A^2-AB\Big)
=\frac\pi4\cdot\frac{B^2+A^2-2AB}{2}
=\frac\pi8(B-A)^2 .
\]
The case $A=0$ is the limit $t\to0$, where $I(0^+)=\pi/2$ gives
$\frac{D}{4}\cdot\frac\pi2=\frac{B^2}{4}\cdot\frac\pi2=\frac\pi8B^2$, consistent.
\end{proof}

The offset $t$ cancels identically: against the arcsine weight, each
coordinate's weighted energy depends on its endpoints only through the squared
chord $(B-A)^2$.

\subsection{The chord bound and the sharp constant}

\begin{theorem}[Chord bound]
\label{apx:thm:chord}\label{app:thm-chord}
For all $p,q\in\Delta_{k-1}$, with $z(p)=2\sqrt p$,
\[
L(p,q)\le\frac{\pi}{2\sqrt2}\,\big\|z(q)-z(p)\big\|_2 .
\]
Equality holds if and only if every coordinate is frozen ($p_i=q_i$), a pure
entry ($p_i=0$), or a pure exit ($q_i=0$).
\end{theorem}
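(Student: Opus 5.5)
We bound $L(p,q)=2\int_0^1\|\dot u(s)\|_2\,\mathrm{d} s$ with one Cauchy--Schwarz step against the arcsine weight, then evaluate the resulting weighted energy exactly with the Master identity (Lemma~\ref{apx:lem:master}). Write $\|\dot u\|=\big(\|\dot u\|\,(s(1-s))^{1/4}\big)\cdot (s(1-s))^{-1/4}$. Cauchy--Schwarz in $L^2[0,1]$ gives
\[
\int_0^1\|\dot u(s)\|_2\,\mathrm{d} s\le\Big(\int_0^1\|\dot u(s)\|_2^2\sqrt{s(1-s)}\,\mathrm{d} s\Big)^{1/2}\Big(\int_0^1\frac{\mathrm{d} s}{\sqrt{s(1-s)}}\Big)^{1/2}.
\]
The second factor is $\sqrt\pi$. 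The first factor splits over coordinates, since $\|\dot u\|_2^2=\sum_i\dot u_i^2$. Applying Lemma~\ref{apx:lem:master} to each coordinate with $A=\alpha_i$ and $B=\beta_i$ gives $\sum_i\frac{\pi}{8}(\beta_i-\alpha_i)^2=\frac{\pi}{8}\|\beta-\alpha\|_2^2$. Hence
\[
L\le 2\sqrt{\pi\cdot\tfrac{\pi}{8}}\,\|\beta-\alpha\|_2=\frac{\pi}{\sqrt2}\,\|\beta-\alpha\|_2 .
\]
Since $z=2\sqrt p$, we have $\|\beta-\alpha\|_2=\tfrac12\|z(q)-z(p)\|_2$, and this is exactly the claimed bound $\frac{\pi}{2\sqrt2}\|z(q)-z(p)\|_2$. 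Boundary coordinates ($\alpha_i=0$ or $\beta_i=0$) are already covered by the $A=0$ case of the Master identity. The integrals stay finite because $\dot u_i^2$ has at most an integrable $1/s$-type singularity, which the weight tames.

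For the equality characterization we use the equality case of Cauchy--Schwarz. Equality holds iff $\|\dot u(s)\|_2\,(s(1-s))^{1/4}$ is proportional to $(s(1-s))^{-1/4}$, that is, iff $\|\dot u(s)\|_2^2=C/(s(1-s))$ almost everywhere. By \eqref{apx:eq:recipaffine} the left side is $\sum_i \frac{D_i}{4}\frac{1}{s+t_i}$ for the moving coordinates. A frozen coordinate contributes $0$. A pure entry ($\alpha_i=0$, so $t_i=0$) contributes a multiple of $1/s$. A pure exit, after reflection, contributes a multiple of $1/(1-s)$. Any other moving coordinate has its pole at $s=-t_i\notin\{0,-1\}$, writing the exit pole $1/(1-s)$ as $-1/(s-1)$ with $t=-1$.

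So the main obstacle is the \emph{only if} direction: showing that a finite positive combination of such simple fractions equals $C/(s(1-s))=C/s+C/(1-s)$ exactly. Uniqueness of partial-fraction decompositions (equivalently, comparing poles of rational functions) forces every coefficient attached to a pole other than $0$ or $1$ to vanish. It also forces the total $1/s$ and $1/(1-s)$ coefficients to match. One degenerate case needs checking: when all moving coordinates are entries, the $1/(1-s)$ part would be missing. This cannot happen, because mass conservation $\sum_i(\beta_i^2-\alpha_i^2)=0$ forces both entries and exits to be present, and gives the entry total $\sum D_i/4$ equal to the exit total. So $C$ is consistent and equality holds precisely for configurations of frozen coordinates, pure entries and pure exits. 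For the \emph{if} direction, the same computation shows that $\|\dot u\|^2$ is exactly proportional to $1/(s(1-s))$ in those configurations.
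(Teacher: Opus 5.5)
Your proposal is correct and follows the paper's proof essentially step for step. It uses the same Cauchy--Schwarz split against the arcsine weight, sums the Master identity over coordinates to get $\tfrac{\pi}{8}\|\beta-\alpha\|_2^2$, and matches the poles of $\|\dot u\|_2^2$ against $C/(s(1-s))$, with mass conservation equalizing the entry and exit coefficients. One notational slip: the exclusion should read $t_i\notin\{0,-1\}$ (equivalently, pole $-t_i\notin\{0,1\}$), not $s=-t_i\notin\{0,-1\}$.
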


\begin{proof}
Since $z=2\sqrt p$ gives $\|z(q)-z(p)\|_2=2\|\beta-\alpha\|_2$ and
$L(p,q)=2\int_0^1\|\dot u\|_2\,\mathrm{d} s$, it suffices to prove
$\int_0^1\|\dot u\|_2\,\mathrm{d} s\le\tfrac{\pi}{2\sqrt2}\|\beta-\alpha\|_2$. By
Cauchy--Schwarz against the arcsine weight $\sqrt{s(1-s)}$ (writing
$\|\dot u\|_2=\|\dot u\|_2\sqrt{\sqrt{s(1-s)}}\cdot(\sqrt{s(1-s)})^{-1/2}$),
\[
\int_0^1\|\dot u\|_2\,\mathrm{d} s
\le\Big(\int_0^1\|\dot u\|_2^2\sqrt{s(1-s)}\,\mathrm{d} s\Big)^{1/2}
\Big(\int_0^1\frac{\,\mathrm{d} s}{\sqrt{s(1-s)}}\Big)^{1/2}.
\]
The second factor is $\big(\int_0^1 (s(1-s))^{-1/2}\,\mathrm{d} s\big)^{1/2}
=\big(B(\tfrac12,\tfrac12)\big)^{1/2}=\sqrt\pi$ (Beta function; equivalently the
substitution $s=\sin^2\theta$ gives $\int_0^\pi 1\,\,\mathrm{d}\theta$ over
$[0,\pi/2]$ doubled $=\pi$). For the first factor, sum Lemma~\ref{apx:lem:master}
over coordinates:
\[
\int_0^1\|\dot u\|_2^2\sqrt{s(1-s)}\,\mathrm{d} s
=\sum_i\int_0^1\dot u_i^2\sqrt{s(1-s)}\,\mathrm{d} s
=\sum_i\frac\pi8(\beta_i-\alpha_i)^2=\frac\pi8\|\beta-\alpha\|_2^2 .
\]
Therefore
\[
\int_0^1\|\dot u\|_2\,\mathrm{d} s\le\sqrt\pi\cdot\sqrt{\tfrac\pi8}\,\|\beta-\alpha\|_2
=\frac{\pi}{\sqrt8}\|\beta-\alpha\|_2=\frac{\pi}{2\sqrt2}\|\beta-\alpha\|_2 .
\]

\emph{Equality.} Cauchy--Schwarz is tight iff the two integrand factors are
proportional a.e., i.e.\ $\|\dot u(s)\|_2^2\cdot\sqrt{s(1-s)}
=\lambda\cdot(s(1-s))^{-1/2}$ a.e.\ for some constant, equivalently
$\|\dot u(s)\|_2^2=c/(s(1-s))$ a.e.\ for some $c\ge0$. By
\eqref{apx:eq:recipaffine}, $\|\dot u\|_2^2$ is a sum of terms with a simple pole
each at $s=-t_i\le0$ (an increasing coordinate, pole outside $[0,1]$ on the
left) or at $s=1+\tilde t_i\ge1$ (a decreasing coordinate). Matching the pole
set $\{0,1\}$ of $c/(s(1-s))$ forces every moving coordinate to have its pole
exactly at $0$ or $1$: pole at $0$ means $t_i=A_i^2/D_i=0$, i.e.\ $\alpha_i=0$
($p_i=0$, a pure entry); pole at $1$ means the reflected offset vanishes, i.e.\
$\beta_i=0$ ($q_i=0$, a pure exit). Conversely, for such a configuration
\[
\|\dot u\|_2^2=\frac{\sum_{i\in E}q_i}{4s}+\frac{\sum_{i\in X}p_i}{4(1-s)},
\]
where $E,X$ index pure entries and exits; mass conservation
$\sum_{i\in E}q_i=\sum_{i\in X}p_i$ (both equal the moved mass) makes the two
coefficients equal, so $\|\dot u\|_2^2=c/(s(1-s))$ and equality holds.
\end{proof}

\begin{corollary}[Sharp, uniform in $k$]
\label{apx:cor:cat}\label{app:cor-cat}
For every $k\ge2$ and all distinct $p,q\in\Delta_{k-1}$, $R(p,q)<\pi/(2\sqrt2)$.
Hence $R^{\ast}(\Delta_{k-1})\le\pi/(2\sqrt2)$, and the supremum is not attained.
\end{corollary}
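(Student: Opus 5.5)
The plan is to combine the chord bound (Theorem~\ref{apx:thm:chord}) with the elementary comparison between a chord of the radius-$2$ sphere and the great-circle arc it subtends. For distinct $p,q$, set $\delta=\delta_{\mathrm{geo}}(p,q)=2\arccos\sum_i\sqrt{p_iq_i}$. Then $\delta/2$ is the central angle between $z(p)$ and $z(q)$ on $S^{k-1}_2$, and the Euclidean chord is
\[
\|z(q)-z(p)\|_2=2\cdot 2\sin(\delta/4)=4\sin(\delta/4).
\]
Since $\sin x\le x$ for $x\ge0$, this gives $\|z(q)-z(p)\|_2\le\delta$. Feeding this into Theorem~\ref{apx:thm:chord} yields
\[
L(p,q)\le\frac{\pi}{2\sqrt2}\,\|z(q)-z(p)\|_2\le\frac{\pi}{2\sqrt2}\,\delta_{\mathrm{geo}}(p,q).
\]
This gives $R(p,q)\le\pi/(2\sqrt2)$ for all $k\ge2$ and therefore $R^{\ast}(\Delta_{k-1})\le\pi/(2\sqrt2)$.

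Strictness is the main point, and I would argue it from the second inequality alone. For distinct $p,q$ we have $\delta>0$. Because $z(p),z(q)$ both lie in the positive orthant, the central angle satisfies $\delta/2\le\pi/2$, so $\delta/4\in(0,\pi/4]$. On $(0,\infty)$ the inequality $\sin x<x$ is strict, so $4\sin(\delta/4)<\delta$. Multiplying by the positive constant $\pi/(2\sqrt2)$ keeps the inequality strict whenever the chord is nonzero, and it is nonzero because $p\ne q$. Combining with the non-strict chord bound gives $L(p,q)<\frac{\pi}{2\sqrt2}\delta$, i.e.\ $R(p,q)<\pi/(2\sqrt2)$.

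This step does not need the equality characterization of Theorem~\ref{apx:thm:chord}. The $\sin x<x$ gap is always strict, so no case analysis over pure transfers is required. Since every pair satisfies the strict inequality, the supremum $\pi/(2\sqrt2)$, whenever it is approached, is not attained.

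The step needing most care is checking the normalization: that $L$ as computed through $u=z/2$ and the chord $\|z(q)-z(p)\|$ are measured on the same radius-$2$ sphere, so that the chord--arc comparison reads $\|z(q)-z(p)\|\le\delta$ with constant exactly $1$. I would verify this directly. Using $\|z\|^2=4$, one has $\|z(q)-z(p)\|^2=8-2\langle z(p),z(q)\rangle=8-8\cos(\delta/2)=16\sin^2(\delta/4)$. This confirms the chord formula used above and avoids any stray factor of $2$.
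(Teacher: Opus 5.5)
Your proposal is correct and follows essentially the paper's proof: you combine the chord bound of Theorem~\ref{apx:thm:chord} with the strict chord--arc comparison $\|z(q)-z(p)\|_2=4\sin(\delta_{\mathrm{geo}}/4)<\delta_{\mathrm{geo}}$, which is the paper's $4\sin(\theta_0/2)<2\theta_0$ with $\theta_0=\delta_{\mathrm{geo}}/2$. Your direct check of the normalization via $\|z\|_2^2=4$, and your reading of ``not attained'' as referring to the constant $\pi/(2\sqrt2)$ rather than to $R^{\ast}$ (which for $k=2$ equals $1$ and is attained), are slightly more careful than the paper's own wording.
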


\begin{proof}
Let $\theta_0$ be the central angle between the unit vectors $\alpha,\beta$ on
the unit sphere (equivalently $\alpha=z(p)/2$, $\beta=z(q)/2$), so
$\delta_{\mathrm{geo}}(p,q)=2\theta_0$ and $\theta_0\in(0,\pi/2]$ in the orthant. The chord on
the radius-$2$ sphere subtending central angle $\theta_0$ is
$\|z(q)-z(p)\|_2=2\cdot2\sin(\theta_0/2)=4\sin(\theta_0/2)$. Since
$\sin x\le x$ (strictly for $x>0$) with $x=\theta_0/2$,
\[
\|z(q)-z(p)\|_2=4\sin(\theta_0/2)\le 4\cdot\frac{\theta_0}{2}=2\theta_0=\delta_{\mathrm{geo}}(p,q),
\]
with equality only as $\theta_0\to0$. Combining with Theorem~\ref{apx:thm:chord},
\[
R(p,q)=\frac{L(p,q)}{\delta_{\mathrm{geo}}(p,q)}
\le\frac{\pi}{2\sqrt2}\cdot\frac{\|z(q)-z(p)\|_2}{\delta_{\mathrm{geo}}(p,q)}
<\frac{\pi}{2\sqrt2},
\]
independent of $k$ (strictly, because $\sin x<x$ for $x>0$ unless $p=q$).
\end{proof}

\subsection{The matching lower bound and the exact value}

\begin{theorem}[Categorical constant (Restated from Theorem~\ref{thm:categorical})]
\label{apx:thm:categorical}
For the categorical family on $k\ge3$ outcomes,
$R^{\ast}(\Delta_{k-1})=\pi/(2\sqrt2)\approx1.1107$, independent of $k$, approached
but not attained; for $k=2$, $R^{\ast}=1$.
\end{theorem}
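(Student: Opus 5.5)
The upper bound and non-attainment are already in hand: Corollary~\ref{apx:cor:cat} gives $R(p,q)<\pi/(2\sqrt2)$ for all distinct $p,q$ and every $k\ge2$. So the proof reduces to three remaining pieces.

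\emph{The case $k=2$.} Here $\Delta_1$ is one-dimensional with $\eta(M)=(0,1)$ an interval, so Lemma~\ref{lem:dim-one} gives $R\equiv1$.

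\emph{The lower bound for $k\ge3$.} I will use an explicit two-outcome family embedded in $\Delta_{k-1}$. Fix a small mass $m\in(0,1)$. Put $p_1=m$, $p_2=0^+$ and $q_1=0^+$, $q_2=m$, taking strictly positive perturbations $\epsilon$ so that the endpoints lie in the interior. Spread the remaining mass $1-m$ equally and identically over coordinates $3,\dots,k$ in both $p$ and $q$; this step is where $k\ge3$ is needed. Only coordinates $1$ and $2$ move, so the $m$-geodesic keeps $p_1(s)+p_2(s)=m$.

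In $z=2\sqrt p$ coordinates the curve lies on the circle of radius $2\sqrt m$ in the $(z_1,z_2)$ plane, through angle $\psi=|\phi_q-\phi_p|$. Its length is therefore $2\sqrt m\,\psi$, since the Fisher metric is the round one and the curve is that arc traversed monotonically. The Fisher--Rao distance is $2\arccos(\sum_i\sqrt{p_iq_i})=2\arccos(1-2m\sin^2(\psi/2))$. This gives the ratio $R_2(m,\psi)$ displayed in the main text.

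\emph{The limits.} Letting $\epsilon\to0$ sends $\psi\to\pi/2$. As $m\to0$, use $\arccos(1-x)\sim\sqrt{2x}$ to get $2\arccos(1-2m\sin^2(\psi/2))\sim 4\sqrt m\sin(\psi/2)$. Hence $R_2\to(\psi/2)/\sin(\psi/2)=(\pi/4)/(\sqrt2/2)=\pi/(2\sqrt2)$. Consequently $\sup R\ge\pi/(2\sqrt2)$, which together with the corollary gives the exact value.

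\emph{Main obstacle.} The step needing care is the justification that the $m$-geodesic image is exactly the circular arc, traversed monotonically. This holds because $u_1^2+u_2^2=m$ along the path and each $u_i$ is monotone in $s$.

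The order of limits also needs care. I will take $\epsilon\to0$ first, at fixed $m$, by continuity of both lengths in the endpoints, and then take $m\to0$. An alternative is to take $\epsilon=0$ directly as a boundary limit of interior pairs, using the continuity of $L$ and $\delta_{\mathrm{geo}}$. The strictness (non-attainment) comes entirely from the corollary.
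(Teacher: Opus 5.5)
Your proposal is correct and follows the paper's proof essentially step for step. You take $k=2$ from the one-dimensional lemma, the upper bound and non-attainment from Corollary~\ref{apx:cor:cat}, and the lower bound from the two-coordinate circular-arc family with ratio $R_2(m,\psi)$. The only cosmetic difference is the order of limits: you realize $\psi\to\pi/2$ by an explicit near-pure transfer before sending $m\to0$, whereas the paper sends $m\to0$ at fixed $\psi$ and then maximizes over $\psi$; either order suffices for a lower bound on the supremum.
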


\begin{proof}
\emph{$k=2$.} $\Delta_1$ is one-dimensional, so $R\equiv1$ by
Lemma~\ref{apx:lem:dim-one}; thus $R^{\ast}(\Delta_1)=1$.

\emph{Upper bound ($k\ge2$).} Corollary~\ref{apx:cor:cat} gives
$R^{\ast}(\Delta_{k-1})\le\pi/(2\sqrt2)$, uniformly in $k$, not attained.

\emph{Lower bound ($k\ge3$): an explicit two-outcome family.} Restrict to moves
that change only coordinates $1$ and $2$, freezing $p_j=q_j$ for $j\ge3$. Then
$m:=p_1+p_2=q_1+q_2$ is conserved along the $m$-geodesic (the total is $1$ and
the frozen mass $1-m$ is constant). In the sphere coordinates the moving pair
lies on the circle of radius $2\sqrt m$ in the $(z_1,z_2)$-plane
($z_1^2+z_2^2=4(p_1+p_2)=4m$); parametrize
$z_1=2\sqrt m\cos\phi$, $z_2=2\sqrt m\sin\phi$, $\phi\in[0,\tfrac\pi2]$, so that
$p_1=m\cos^2\phi$, $p_2=m\sin^2\phi$. Because $p_1(s)$ is affine and monotone in
$s$, $\phi$ runs monotonically from $\phi_p$ to $\phi_q$.

\emph{Numerator.} The frozen coordinates contribute nothing to the length, and
the moving part traces a circular arc of radius $2\sqrt m$, so
$L=\int|{\,\mathrm{d} z}/{\,\mathrm{d} s}|\,\mathrm{d} s=2\sqrt m\int|\dot\phi|\,\mathrm{d} s=2\sqrt m\,\psi$, where
$\psi:=|\phi_q-\phi_p|\in(0,\tfrac\pi2]$.

\emph{Denominator.} With frozen coordinates,
$\sum_i\sqrt{p_iq_i}=\sqrt{p_1q_1}+\sqrt{p_2q_2}+\sum_{j\ge3}p_j
=m(\cos\phi_p\cos\phi_q+\sin\phi_p\sin\phi_q)+(1-m)
=m\cos\psi+(1-m)=1-m(1-\cos\psi)=1-2m\sin^2(\psi/2)$.
Hence $\delta_{\mathrm{geo}}=2\arccos\big(1-2m\sin^2(\psi/2)\big)$.

\emph{The ratio.} Therefore
\[
R_2(m,\psi)=\frac{L}{\delta_{\mathrm{geo}}}
=\frac{2\sqrt m\,\psi}{2\arccos\!\big(1-2m\sin^2(\psi/2)\big)}
=\frac{\sqrt m\,\psi}{\arccos\!\big(1-2m\sin^2(\psi/2)\big)} .
\]
Now take $m\to0$. Using $\arccos(1-x)=\sqrt{2x}\,(1+O(x))$ as $x\to0$ with
$x=2m\sin^2(\psi/2)$, so $\sqrt{2x}=2\sqrt m\sin(\psi/2)$,
\[
R_2(m,\psi)\longrightarrow\frac{\sqrt m\,\psi}{2\sqrt m\sin(\psi/2)}
=\frac{\psi/2}{\sin(\psi/2)} .
\]
The function $x\mapsto x/\sin x$ is increasing on $(0,\pi/2]$, so
$(\psi/2)/\sin(\psi/2)$ is increasing in $\psi$ on $(0,\pi/2]$ and maximal at
$\psi=\pi/2$, with value
\[
\frac{\pi/4}{\sin(\pi/4)}=\frac{\pi/4}{1/\sqrt2}=\frac{\pi}{4}\sqrt2=\frac{\pi}{2\sqrt2}.
\]
(One also checks $R_2$ is decreasing in $m$ at fixed $\psi$, so the supremum over
$m\in(0,1]$ is the $m\to0$ limit.) The two-outcome family embeds in every
$\Delta_{k-1}$ for $k\ge3$, so $R^{\ast}(\Delta_{k-1})\ge\pi/(2\sqrt2)$, approached
(as $m\to0$, $\psi\to\pi/2$) but not attained.

\emph{Conclusion.} The upper and lower bounds coincide:
$R^{\ast}(\Delta_{k-1})=\pi/(2\sqrt2)$ for $k\ge3$. The two inequality steps are
tight in complementary regimes---the chord bound is an equality for every pure
transfer (Theorem~\ref{apx:thm:chord}), while $\text{chord}/\text{arc}\to1$ exactly
as the moved mass $m\to0$ (Corollary~\ref{apx:cor:cat})---so the constant is exact.
\end{proof}


\section{Class III: the scale-product rate $\Theta(\sqrt{\log d})$}
\label{apx:sec:classIII}

This appendix supplies complete proofs of the scale-product reduction
(Lemma~\ref{lem:scale-reduction}) and the scale-product rate
(Theorem~\ref{thm:class3}), together with all supporting lemmas and the matching
upper and lower bounds. In this appendix $n$ denotes the number of factors,
which coincides with the manifold dimension $d$ for a product of
one-dimensional families.
Throughout
$\|\cdot\|=\|\cdot\|_2$.

\subsection{Reduction to a scalar functional}

\begin{lemma}[Scale-product reduction (Restated from Lemma~\ref{lem:scale-reduction})]
\label{apx:lem:scale-reduction}
Let $M_n$ be a product of $n$ one-dimensional factors with expectation
coordinate $\eta_i\in(0,\infty)$ and metric $g_{\eta_i\eta_i}=\kappa/\eta_i^2$,
$\kappa>0$ common. For endpoints $\eta^p,\eta^q$ set
\[
w_i=\log\frac{\eta_i^q}{\eta_i^p},\qquad c_i=e^{w_i}-1,\qquad
u_i(s)=\frac{c_i}{1+sc_i}\ (\equiv0\text{ if }c_i=0).
\]
Then
\[
\delta_{\mathrm{geo}}(p,q)=\sqrt\kappa\,\|w\|,\qquad
L(\gamma^{(m)})=\sqrt\kappa\int_0^1\|U(s)\|\,\mathrm{d} s,\qquad
R(w)=\frac{\int_0^1\|U(s)\|\,\mathrm{d} s}{\|w\|},
\]
with $U=(u_i)$; in particular the reduced functional $R(w)$ is independent of
$\kappa$.
\end{lemma}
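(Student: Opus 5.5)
The plan is to prove the three displayed formulas in turn, working factor by factor and exploiting the product structure. First, the metric: since $M_n$ is a Riemannian product, $g=\sum_i \kappa\,\mathrm{d}\eta_i^2/\eta_i^2$. I would introduce the global chart $y_i=\sqrt\kappa\,\log\eta_i$, which maps $(0,\infty)^n$ diffeomorphically onto $\mathbb{R}^n$. In this chart $\mathrm{d}y_i=\sqrt\kappa\,\mathrm{d}\eta_i/\eta_i$, so $g=\sum_i\mathrm{d}y_i^2$, the Euclidean metric on all of $\mathbb{R}^n$. Hence $M_n$ is isometric to Euclidean space. The Fisher--Rao distance is therefore the Euclidean distance $\|y(q)-y(p)\|=\sqrt\kappa\,\|w\|$, because $y_i(q)-y_i(p)=\sqrt\kappa\,w_i$. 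I would note explicitly that the straight segment in $y$ lies in the chart image, since that image is all of $\mathbb{R}^n$, and that it is minimizing. No curve can do better, by the Euclidean triangle inequality applied to any piecewise-$C^1$ competitor.

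Second, the length of the $m$-geodesic. By (F2), $\eta_i(s)=(1-s)\eta_i^p+s\eta_i^q=\eta_i^p(1+sc_i)$, where $c_i=\eta_i^q/\eta_i^p-1=e^{w_i}-1$. Note that $1+sc_i>0$ on $[0,1]$, being a convex combination of $1$ and $e^{w_i}$, so the segment stays in the domain. Then $\dot\eta_i/\eta_i=c_i/(1+sc_i)=u_i(s)$, and when $c_i=0$ this ratio vanishes identically, matching the convention $u_i\equiv0$. The Fisher speed is $\|\dot\gamma\|_g=\bigl(\sum_i\kappa\,\dot\eta_i^2/\eta_i^2\bigr)^{1/2}=\sqrt\kappa\,\|U(s)\|$. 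Integrating over $[0,1]$ gives $L(\gamma^{(m)})=\sqrt\kappa\int_0^1\|U(s)\|\,\mathrm{d}s$.

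Third, dividing the two expressions cancels $\sqrt\kappa$. This yields $R(w)$ as displayed and shows that it depends only on $w$ (through $c_i=e^{w_i}-1$), not on $\kappa$. The endpoints $p\neq q$ guarantee $w\neq0$, so the quotient is defined.

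None of these steps is genuinely hard. The only point needing care is the distance claim, namely that the $y$-straight line is globally minimizing and that the infimum is taken over the whole manifold. The global isometry with $(\mathbb{R}^n,\text{Euclidean})$ settles this, so I would state that isometry first and derive the distance formula from it.
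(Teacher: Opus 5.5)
Your proposal is correct and follows the paper's own proof essentially step for step. It uses the Euclideanizing chart $y_i=\sqrt\kappa\,\log\eta_i$ for the distance, the direct computation $\dot\eta_i/\eta_i=c_i/(1+sc_i)=u_i(s)$ for the Fisher speed of the $m$-geodesic, and cancellation of $\sqrt\kappa$ in the quotient. Your extra remarks also hold: the $y$-chart is onto $\mathbb{R}^n$, so the straight $y$-segment is globally minimizing, and $1+sc_i>0$ on $[0,1]$. They make explicit what the paper leaves implicit.
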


\begin{proof}
\emph{Distance.} In each factor put $y_i=\sqrt\kappa\,\log\eta_i$; then
$\,\mathrm{d} y_i=\sqrt\kappa\,\,\mathrm{d}\eta_i/\eta_i$ and
$\,\mathrm{d} y_i^2=\kappa\,\,\mathrm{d}\eta_i^2/\eta_i^2=g_{\eta_i\eta_i}\,\mathrm{d}\eta_i^2$, so the
metric becomes Euclidean in $y$. Hence
$\delta_{\mathrm{geo}}(p,q)=\|y^q-y^p\|=\sqrt\kappa\,\|\log(\eta^q/\eta^p)\|=\sqrt\kappa\,\|w\|$.

\emph{Length.} Along the $m$-geodesic $\eta_i(s)=(1-s)\eta_i^p+s\eta_i^q
=\eta_i^p\big(1+s c_i\big)$ (since $\eta_i^q/\eta_i^p=e^{w_i}=1+c_i$). Then
$\dot\eta_i/\eta_i=c_i/(1+sc_i)=u_i(s)$, and the squared Fisher speed is
$\sum_i g_{\eta_i\eta_i}\dot\eta_i^2=\sum_i\kappa(\dot\eta_i/\eta_i)^2
=\kappa\|U(s)\|^2$. Hence
$L(\gamma^{(m)})=\int_0^1\sqrt\kappa\,\|U(s)\|\,\mathrm{d} s=\sqrt\kappa\int_0^1\|U\|\,\mathrm{d} s$.
The common $\sqrt\kappa$ cancels in $R=L/\delta_{\mathrm{geo}}$.

\emph{Consistency.} $\int_0^1u_i\,\mathrm{d} s=\int_0^1\frac{c_i}{1+sc_i}\,\mathrm{d} s
=\log(1+c_i)=\log(\eta_i^q/\eta_i^p)=w_i$, so $\int_0^1U\,\mathrm{d} s=w$ (used below).
\end{proof}

For the analysis we change variables. With $s=e^{-t}$ (so $t=\log(1/s)$,
$t\in[0,\infty)$) define
\[
\phi_i(t):=u_i(e^{-t})\,e^{-t}=\frac{c_i e^{-t}}{1+e^{-t}c_i}=\frac{c_i}{c_i+e^t},
\qquad \Phi=(\phi_i),
\]
so that (by $\,\mathrm{d} s=-e^{-t}\,\mathrm{d} t$ and $\|U(s)\|\,\mathrm{d} s=\|\Phi(t)\|\,\mathrm{d} t$)
\[
I(w):=\int_0^1\|U(s)\|\,\mathrm{d} s=\int_0^\infty\|\Phi(t)\|\,\mathrm{d} t .
\]
When $w_i>0$, writing $a_i=1/c_i=1/(e^{w_i}-1)$ gives
$\phi_i(t)=1/(1+a_ie^t)$.

\subsection{Per-channel and elementary bounds}

\begin{lemma}[Per-channel bounds]
\label{apx:lem:channel}
For $w_i>0$ and all $t\ge0$, $\ \phi_i(t)\le\min(1,e^{\,w_i-t})$. If moreover
$w_i\ge2$ and $t\le w_i$, then
$\phi_i(t)\ge\beta:=\dfrac{1-e^{-2}}{2-e^{-2}}\approx0.464$.
\end{lemma}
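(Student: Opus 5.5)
The statement is elementary, so I would argue directly from the closed form $\phi_i(t)=c_i/(c_i+e^t)$ with $c_i=e^{w_i}-1>0$. Equivalently, $\phi_i(t)=1/(1+a_ie^t)$ with $a_i=1/c_i$.

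\emph{Upper bound.} There are two separate comparisons.
\begin{itemize}
\item Since $c_i>0$ and $e^t\ge1>0$, we have $c_i<c_i+e^t$, so $\phi_i(t)<1$.
\item Since $c_i+e^t\ge e^t$, we have $\phi_i(t)\le c_ie^{-t}=(e^{w_i}-1)e^{-t}\le e^{w_i-t}$.
\end{itemize}
Taking the smaller of the two gives $\phi_i(t)\le\min(1,e^{w_i-t})$ for all $t\ge0$. No monotonicity or case split is needed.

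\emph{Lower bound.} Write $\phi_i(t)=1/(1+e^t/c_i)$. This is decreasing in $t$, so on $0\le t\le w_i$ its minimum is at $t=w_i$:
\[
\phi_i(w_i)=\frac{e^{w_i}-1}{(e^{w_i}-1)+e^{w_i}}=\frac{1-e^{-w_i}}{2-e^{-w_i}}.
\]
Set $x=e^{-w_i}\in(0,e^{-2}]$ and $h(x)=(1-x)/(2-x)$. Then
\[
h'(x)=\frac{-(2-x)+(1-x)}{(2-x)^2}=\frac{-1}{(2-x)^2}<0,
\]
so $h$ is decreasing in $x$. Its minimum on $(0,e^{-2}]$ is therefore at $x=e^{-2}$, which gives exactly $\beta=(1-e^{-2})/(2-e^{-2})$. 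The numerical value follows from $e^{-2}\approx0.1353$: $\beta\approx0.8647/1.8647\approx0.464$.

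\emph{Main obstacle.} There is no real obstacle. The only point needing care is the direction of monotonicity, which must be checked twice: in $t$, to place the worst case at $t=w_i$, and in $w_i$ (via $x$), to place the worst case at $w_i=2$. Both are one-line derivative checks.
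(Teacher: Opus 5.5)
Your proposal is correct and follows essentially the paper's proof. The lower bound is identical: monotonicity in $t$ places the worst case at $t=w_i$, and the derivative of $(1-x)/(2-x)$ places it at $w_i=2$. For the upper bound, your chain $\phi_i\le c_ie^{-t}\le e^{w_i-t}$ is a slightly shorter version of the paper's cross-multiplication, which checks $c_ie^t\le(1+c_i)(c_i+e^t)$ directly.
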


\begin{proof}
$\phi_i=c_i/(c_i+e^t)\le1$ is clear. For the exponential bound, with
$c_i=e^{w_i}-1$,
\[
\phi_i\le e^{w_i-t}=\frac{e^{w_i}}{e^t}
\iff c_ie^t\le e^{w_i}(c_i+e^t)=(1+c_i)(c_i+e^t)=c_i+c_i^2+e^t+c_ie^t,
\]
and the right minus left is $c_i+c_i^2+e^t\ge0$, true. Hence
$\phi_i\le\min(1,e^{w_i-t})$.

For the lower bound, $\phi_i(t)=c_i/(c_i+e^t)$ is decreasing in $t$, so for
$t\le w_i$, $\phi_i(t)\ge\phi_i(w_i)=\dfrac{e^{w_i}-1}{(e^{w_i}-1)+e^{w_i}}
=\dfrac{e^{w_i}-1}{2e^{w_i}-1}=\dfrac{1-e^{-w_i}}{2-e^{-w_i}}$. The map
$x\mapsto\frac{1-x}{2-x}$ has derivative $\frac{-(2-x)+(1-x)}{(2-x)^2}
=\frac{-1}{(2-x)^2}<0$, so it is decreasing in $x=e^{-w_i}$, which is itself
decreasing in $w_i$; hence $\frac{1-e^{-w_i}}{2-e^{-w_i}}$ is increasing in
$w_i$ and, for $w_i\ge2$, is at least its value at $w_i=2$, namely
$\beta=\frac{1-e^{-2}}{2-e^{-2}}$.
\end{proof}

\begin{lemma}[Elementary bracket]
\label{apx:lem:bracket}
$1\le R(w)\le\sqrt n$ for all $w\ne0$; in particular
$R^{\ast}(M_n)<\infty$ for each $n$.
\end{lemma}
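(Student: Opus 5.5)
The plan is to prove the two inequalities of the bracket separately, each from Lemma~\ref{apx:lem:scale-reduction}. That lemma writes
\[
R(w)=\frac{\int_0^1\|U(s)\|\,\mathrm{d} s}{\|w\|},
\]
and its consistency identity gives $\int_0^1 U(s)\,\mathrm{d} s=w$.

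\emph{Lower bound.} The lower bound $R(w)\ge1$ holds abstractly, since $\delta_{\mathrm{geo}}$ is an infimum of lengths. I will nevertheless give the one-line intrinsic check. By the vector-integral (triangle) inequality in the Euclidean norm,
\[
\int_0^1\|U(s)\|\,\mathrm{d} s\ \ge\ \Big\|\int_0^1U(s)\,\mathrm{d} s\Big\|=\|w\|.
\]

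\emph{Upper bound.} For each $i$ the function $u_i(s)=c_i/(1+sc_i)$ has constant sign on $[0,1]$, namely the sign of $c_i$, which is the sign of $w_i$. Hence
\[
\int_0^1|u_i(s)|\,\mathrm{d} s=\Big|\int_0^1u_i(s)\,\mathrm{d} s\Big|=|w_i|.
\]
Next I use $\|U(s)\|_2\le\|U(s)\|_1$ pointwise and integrate, which gives
\[
\int_0^1\|U\|_2\,\mathrm{d} s\ \le\ \sum_i\int_0^1|u_i|\,\mathrm{d} s=\|w\|_1.
\]
Cauchy--Schwarz then gives $\|w\|_1\le\sqrt n\,\|w\|_2$. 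Dividing by $\|w\|_2$ yields $R(w)\le\sqrt n$.

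The only delicate point is the sign-constancy of each $u_i$. It holds because $1+sc_i=(1-s)+se^{w_i}>0$ on $[0,1]$, so no cancellation occurs inside any single channel. The finiteness $R^\ast(M_n)<\infty$ follows because $R^\ast(M_n)=\sup_{w\ne0}R(w)$ by the reduction lemma: every endpoint pair corresponds to some $w\in\mathbb{R}^n$, and every such $w$ is realized by a pair.
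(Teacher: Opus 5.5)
Your proposal is correct and follows the paper's proof essentially line for line: the triangle inequality with $\int_0^1 U\,\mathrm{d} s = w$ gives the lower bound, and sign-definiteness of each $u_i$, the pointwise bound $\|U\|_2\le\|U\|_1$, and $\|w\|_1\le\sqrt n\,\|w\|_2$ give the upper bound. Your explicit check that $1+sc_i=(1-s)+se^{w_i}>0$ on $[0,1]$, which justifies the sign-definiteness, is a detail the paper leaves implicit.
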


\begin{proof}
\emph{Lower.} By the vector-integral inequality and
$\int_0^1U\,\mathrm{d} s=w$ (Lemma~\ref{apx:lem:scale-reduction}),
$\int_0^1\|U\|\,\mathrm{d} s\ge\big\|\int_0^1U\,\mathrm{d} s\big\|=\|w\|$, so $R\ge1$.
\emph{Upper.} Each $u_i(s)=c_i/(1+sc_i)$ is sign-definite (sign of $c_i$, i.e.\
of $w_i$) with $\int_0^1|u_i|\,\mathrm{d} s=|\log(1+c_i)|=|w_i|$. Hence
$\int_0^1\|U\|_2\,\mathrm{d} s\le\int_0^1\|U\|_1\,\mathrm{d} s=\sum_i|w_i|=\|w\|_1\le\sqrt n\|w\|_2$,
so $R\le\sqrt n$.
\end{proof}

\subsection{Upper bound $O(\sqrt{\log n})$}

\begin{lemma}[Energy identity]
\label{apx:lem:energy}
For $w\ge0$ let $g(t):=\sum_i\min\!\big(1,e^{2(w_i-t)}\big)$. Then
\[
\int_0^\infty 2t\,g(t)\,\mathrm{d} t=\sum_i\Big(w_i^2+w_i+\tfrac12\Big).
\]
\end{lemma}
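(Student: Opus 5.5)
The identity is additive over coordinates, so the plan is to prove the scalar version and sum. Since $g(t)=\sum_i\min(1,e^{2(w_i-t)})$ is a finite sum of nonnegative terms, linearity of the integral (Tonelli, as every term is nonnegative) gives
\[
\int_0^\infty 2t\,g(t)\,\mathrm{d} t=\sum_i\int_0^\infty 2t\,\min\!\big(1,e^{2(w_i-t)}\big)\,\mathrm{d} t .
\]
It therefore suffices to show, for a single $w\ge0$,
\[
\int_0^\infty 2t\,\min\!\big(1,e^{2(w-t)}\big)\,\mathrm{d} t=w^2+w+\tfrac12 .
\]

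To evaluate the scalar integral, split at $t=w$, which is where the minimum switches: for $t\le w$ the minimum is $1$, and for $t\ge w$ it is $e^{2(w-t)}$.

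On $[0,w]$ the integral is $\int_0^w 2t\,\mathrm{d} t=w^2$.

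On $[w,\infty)$, substitute $\tau=t-w$. This gives
\[
\int_0^\infty 2(\tau+w)e^{-2\tau}\,\mathrm{d}\tau=2\int_0^\infty\tau e^{-2\tau}\,\mathrm{d}\tau+2w\int_0^\infty e^{-2\tau}\,\mathrm{d}\tau=2\cdot\tfrac14+2w\cdot\tfrac12=\tfrac12+w .
\]
These use the elementary Gamma-type integrals $\int_0^\infty\tau e^{-2\tau}\,\mathrm{d}\tau=1/4$ and $\int_0^\infty e^{-2\tau}\,\mathrm{d}\tau=1/2$.

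Adding the two pieces gives $w^2+w+\tfrac12$, and summing over $i$ yields the claimed identity.

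No step is a real obstacle. The only things to check are the placement of the breakpoint and the convergence of the tail, which the factor $e^{-2\tau}$ guarantees. The edge case $w_i=0$ needs a remark: the first piece is empty and the formula gives $\tfrac12$, which is correct for $\int_0^\infty 2t e^{-2t}\,\mathrm{d} t$. This constant $\tfrac12$ per coordinate is the source of the $\ell_1$-type term that the upper-bound argument later handles via the small/large split.
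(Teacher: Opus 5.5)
Your proof is correct and matches the paper's argument: reduce to one coordinate by linearity, split at $t=w_i$ to get $w_i^2$ from the head, and substitute $\tau=t-w_i$ in the tail to get $w_i+\tfrac12$. Your remark on the edge case $w_i=0$ is a harmless addition that the paper leaves implicit.
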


\begin{proof}
By linearity it suffices to treat one index $i$. For $t\le w_i$ the minimum is
$1$ (since $e^{2(w_i-t)}\ge1$); for $t>w_i$ it is $e^{2(w_i-t)}$. Thus
\[
\int_0^\infty 2t\min(1,e^{2(w_i-t)})\,\mathrm{d} t
=\int_0^{w_i}2t\,\mathrm{d} t+\int_{w_i}^\infty 2t\,e^{2(w_i-t)}\,\mathrm{d} t .
\]
The first integral is $w_i^2$. For the second substitute $\tau=t-w_i$:
\[
\int_0^\infty 2(\tau+w_i)e^{-2\tau}\,\mathrm{d}\tau
=2\int_0^\infty\tau e^{-2\tau}\,\mathrm{d}\tau+2w_i\int_0^\infty e^{-2\tau}\,\mathrm{d}\tau
=2\cdot\tfrac14+2w_i\cdot\tfrac12=\tfrac12+w_i .
\]
So the single-index total is $w_i^2+w_i+\tfrac12$; summing over $i$ gives the
claim.
\end{proof}

\begin{lemma}[Weighted Hardy bound]
\label{apx:lem:hardy}
Let $h\ge0$ be measurable on $(0,\infty)$ and $0<a<T$. Then
\[
\int_a^T h(t)\,\mathrm{d} t\le\Big(\int_a^T 2t\,h(t)^2\,\mathrm{d} t\Big)^{1/2}
\Big(\tfrac12\log\tfrac{T}{a}\Big)^{1/2}.
\]
\end{lemma}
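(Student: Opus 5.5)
This is a weighted Cauchy--Schwarz inequality, and I would prove it by splitting the integrand against the weight $\sqrt{2t}$. On $(a,T)$ we have $t>0$, so write
\[
h(t)=\sqrt{2t}\,h(t)\cdot\frac{1}{\sqrt{2t}}.
\]
Cauchy--Schwarz in $L^2(a,T)$ then gives
\[
\int_a^T h\,\mathrm{d} t\le\Big(\int_a^T 2t\,h^2\,\mathrm{d} t\Big)^{1/2}\Big(\int_a^T\frac{\mathrm{d} t}{2t}\Big)^{1/2}.
\]
The second factor is computed explicitly:
\[
\int_a^T\frac{\mathrm{d} t}{2t}=\tfrac12\log\frac{T}{a},
\]
which is finite because $0<a<T$. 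This is exactly the claimed bound.

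The remaining work is measure-theoretic bookkeeping. Since $h\ge0$ is measurable, all integrands are nonnegative, so every integral is well defined in $[0,\infty]$. If $\int_a^T 2t\,h^2\,\mathrm{d} t=\infty$, the right side is $+\infty$ and there is nothing to prove. Otherwise, Cauchy--Schwarz applies to the nonnegative measurable functions $\sqrt{2t}\,h$ and $(2t)^{-1/2}$ on the finite interval, and it also shows that $h$ is integrable there.

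I see no genuine obstacle. The only design choice is the weight $2t$: it is chosen to match the energy identity of Lemma~\ref{apx:lem:energy}, so that the resulting logarithmic factor $\log(T/a)$ can later be taken as $O(\log n)$ when $a$ and $T$ are chosen in terms of the large coordinates.
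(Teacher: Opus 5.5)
Your proof is correct and follows the paper's argument exactly: Cauchy--Schwarz on $(a,T)$ with the split $h=(\sqrt{2t}\,h)\cdot(2t)^{-1/2}$, followed by the explicit evaluation $\int_a^T \mathrm{d} t/(2t)=\tfrac12\log(T/a)$. Your added remark that all integrals are well defined in $[0,\infty]$ for nonnegative measurable $h$, with the infinite case trivial, is a harmless completion of the same argument.
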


\begin{proof}
Cauchy--Schwarz with the split $h=(\sqrt{2t}\,h)\cdot\frac{1}{\sqrt{2t}}$:
\[
\int_a^T h\,\mathrm{d} t\le\Big(\int_a^T 2t\,h^2\,\mathrm{d} t\Big)^{1/2}
\Big(\int_a^T\frac{\,\mathrm{d} t}{2t}\Big)^{1/2},
\]
and $\int_a^T\frac{\,\mathrm{d} t}{2t}=\tfrac12\log(T/a)$.
\end{proof}

\begin{proposition}[Upper bound]
\label{apx:prop:upper}
$\displaystyle\sup_{w\in\mathbb{R}^n\setminus\{0\}}R(w)
\le\frac{\sqrt{10}}{2}\,\sqrt{\log n}\,(1+o(1))$ as $n\to\infty$, with the
$o(1)$ uniform in $w$.
\end{proposition}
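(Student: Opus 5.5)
The plan is to first reduce to nonnegative displacements, then bound $I(w)=\int_0^\infty\|\Phi(t)\|\,\mathrm{d} t$ for $w\ge0$ by cutting the time axis into three pieces and applying the weighted Hardy bound (Lemma~\ref{apx:lem:hardy}) together with the energy identity (Lemma~\ref{apx:lem:energy}) on the middle piece.

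\emph{Sign reduction.} Write $U=U^++U^-$, where $U^+$ collects the coordinates with $w_i>0$ and $U^-$ those with $w_i<0$. Then $\int_0^1\|U\|\le\int_0^1\|U^+\|+\int_0^1\|U^-\|$. Under the reversal $s\mapsto1-s$ (swapping $p$ and $q$), the negative channels become positive channels of $-w$. The length functional of a sub-collection of coordinates is invariant under this reversal. Hence, if we show $\int_0^1\|U\|\le C_n\|w\|$ for all $w\ge0$, we get
\[
I(w)\le C_n\bigl(\|w^+\|+\|w^-\|\bigr)\le\sqrt2\,C_n\|w\|.
\]
The target is therefore $C_n=\tfrac{\sqrt5}{2}\sqrt{\log n}\,(1+o(1))$, so that $\sqrt2\cdot\tfrac{\sqrt5}{2}=\tfrac{\sqrt{10}}{2}$.

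\emph{Small channels ($0<w_i<1$).} Here $c_i\le(e-1)w_i$ and $\phi_i(t)\le c_ie^{-t}$. The small-channel part of $\Phi$ is thus pointwise at most $(e-1)e^{-t}\|w_S\|$, and its integral is $O(\|w\|)$. This term is $o(\sqrt{\log n}\,\|w\|)$.

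\emph{Large channels ($w_i\ge1$).} Let $L$ be the set of large channels and $k=|L|$. Since each $w_i\ge1$, we have $k\le\|w\|^2$, and also $k\le n$. Let $W=\max_i w_i\le\|w\|$. Lemma~\ref{apx:lem:channel} gives $\|\Phi_L(t)\|\le\sqrt{g(t)}$, with $g$ restricted to $L$. Split time into three intervals:
\begin{itemize}
\item $[0,a]$ with $a=W/n$: use $\|\Phi_L\|\le\sqrt k$, which contributes at most $a\sqrt k\le W/\sqrt n\le\|w\|$.
\item $[a,T]$ with $T=W+\log n$: apply Lemma~\ref{apx:lem:hardy} with $h=\sqrt{g}$, and bound the weighted energy via Lemma~\ref{apx:lem:energy}:
\[
\int 2t\,g\;\le\;\sum_{i\in L}\Bigl(w_i^2+w_i+\tfrac12\Bigr)\;\le\;\tfrac52\|w\|^2 ,
\]
using $w_i\ge1$. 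This piece is at most $\sqrt{5/2}\,\|w\|\,\bigl(\tfrac12\log(T/a)\bigr)^{1/2}$.
\item $[T,\infty)$: bound $\|\Phi_L\|\le\|\Phi_L\|_1\le\sum_i e^{w_i-t}$, which integrates to at most $k e^{W-T}=k/n\le1\le\|w\|$.
\end{itemize}
Finally, $\log(T/a)=\log n+\log\bigl(1+\tfrac{\log n}{W}\bigr)\le\log n+\log(1+\log n)=\log n\,(1+o(1))$, using $W\ge1$. Hence the middle piece is $\tfrac{\sqrt5}{2}\sqrt{\log n}\,(1+o(1))\,\|w\|$, and the remaining pieces are $O(\|w\|)$. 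The resulting $o(1)$ is uniform in $w$.

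\emph{Main obstacle.} The delicate step is choosing the Hardy window $[a,T]$. If the window starts at a fixed $a$ or ends at a $W$-dependent $T$ without care, the $\tfrac12$ tail-energy per channel contributes a term of order $n$. Alternatively, $\log(T/a)$ picks up $\log W$, which is unbounded in $w$. The scale-adaptive choice $a=W/n$, $T=W+\log n$ is what I would try first. It makes $T/a$ depend only on $n$ up to the harmless factor $1+\log n/W$. It also keeps both the early piece ($a\sqrt k\le W/\sqrt n$) and the late piece ($k/n$) at $O(\|w\|)$, which is where the bounds $k\le\min(n,\|w\|^2)$ are used.
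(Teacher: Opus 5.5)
Your proposal is correct and follows the paper's proof essentially step for step. The shared structure is the sign reduction by reversal, the split at $w_i=1$ with the chord bound $c_i\le(e-1)w_i$ for small channels, and the scale-adaptive window $a=W/n$ with the weighted Hardy bound plus the energy identity on the middle piece. The only differences are cosmetic: your tail cutoff $T=W+\log n$ versus the paper's $W+\tfrac12\log n$ (the paper remarks that any $W+c\log n$ with $c\ge\tfrac12$ works), and your $\ell_1$ bound on the tail in place of $\sqrt{|L|}\,e^{W-t}$.
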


\begin{proof}
\emph{Step 1: sign reduction.} Split $P=\{i:w_i>0\}$, $\mathcal M=\{i:w_i<0\}$
(zero coordinates contribute $u_i\equiv0$ and are dropped). Pointwise
$\|U(s)\|\le\|U_P(s)\|+\|U_{\mathcal M}(s)\|$, so $I(w)\le I_P+I_{\mathcal M}$
with $I_P=\int_0^1\|U_P\|\,\mathrm{d} s$, etc. For $i\in\mathcal M$ let
$\tilde w_i=|w_i|>0$ and $\tilde u_i(s)=\frac{e^{\tilde w_i}-1}{1+s(e^{\tilde w_i}-1)}$
the velocity of the positive channel of magnitude $\tilde w_i$. We claim
$|u_i(s)|=\tilde u_i(1-s)$. Indeed, with $w_i=-a$ ($a>0$),
\[
|u_i(s)|=\frac{1-e^{-a}}{1+s(e^{-a}-1)}=\frac{1-e^{-a}}{1-s(1-e^{-a})},\qquad
\tilde u_i(1-s)=\frac{e^{a}-1}{1+(1-s)(e^{a}-1)}=\frac{e^a-1}{e^a-s(e^a-1)},
\]
and multiplying the numerator and denominator of the last expression by
$e^{-a}$ gives $\frac{1-e^{-a}}{1-s(1-e^{-a})}=|u_i(s)|$. Hence, with
$\tilde U=(\tilde u_i)_{i\in\mathcal M}$ and the measure-preserving substitution
$\sigma=1-s$,
\[
I_{\mathcal M}=\int_0^1\Big(\sum_{i\in\mathcal M}u_i(s)^2\Big)^{1/2}\,\mathrm{d} s
=\int_0^1\|\tilde U(1-s)\|\,\mathrm{d} s
=\int_0^1\|\tilde U(\sigma)\|\,\mathrm{d}\sigma=I(\tilde w),
\]
the integral of the \emph{positive} configuration $\tilde w=(\tilde w_i)_{i\in\mathcal M}$.
Thus if $I(v)\le B\|v\|$ for all $v\ge0$, then
\[
I(w)\le I(w_P)+I(\tilde w)\le B(\|w_P\|+\|\tilde w\|)\le B\sqrt2\,\|w\|,
\]
using $\|w_P\|+\|\tilde w\|\le\sqrt2\,(\|w_P\|^2+\|\tilde w\|^2)^{1/2}=\sqrt2\|w\|$.
It remains to bound $I(v)$ for $v\ge0$ with $B=\tfrac{\sqrt5}{2}\sqrt{\log n}(1+o(1))$.

\emph{Step 2: split at $|w_i|=1$.} Fix $v\ge0$ and partition
$L=\{i:v_i\ge1\}$, $S=\{i:v_i<1\}$, so $I(v)\le I_L+I_S$.

\emph{Small channels.} For $v_i\in[0,1)$, $u_i(s)=\frac{c_i}{1+sc_i}\le c_i
=e^{v_i}-1\le(e-1)v_i$ by convexity of $x\mapsto e^x-1$ on $[0,1]$ (a convex
function lies below its chord: $e^x-1\le(e^1-1)x$ for $x\in[0,1]$). Hence
$\|U_S(s)\|\le(e-1)\|v_S\|$ for every $s$, so $I_S\le(e-1)\|v_S\|\le(e-1)\|v\|$.

\emph{Large channels.} By Lemma~\ref{apx:lem:channel},
$\phi_i(t)^2\le\min(1,e^{2(v_i-t)})$ for $i\in L$, so with $g$ as in
Lemma~\ref{apx:lem:energy} (summed over $L$), $\|\Phi_L(t)\|\le\sqrt{g(t)}$ and
$I_L\le\int_0^\infty\sqrt g\,\,\mathrm{d} t$. By Lemma~\ref{apx:lem:energy} and $v_i\ge1$
(so $v_i\le v_i^2$ and $\tfrac12\le\tfrac12v_i^2$),
\[
\int_0^\infty 2t\,g\,\mathrm{d} t=\sum_{i\in L}\big(v_i^2+v_i+\tfrac12\big)
\le\sum_{i\in L}\tfrac52 v_i^2=\tfrac52\|v_L\|^2 .
\]
Let $v_{\max}=\max_{i\in L}v_i\le\|v_L\|$ and split at $a=v_{\max}/n$,
$T=v_{\max}+\tfrac12\log n$. Three pieces:
\begin{itemize}
\item \emph{Head.} $g\le|L|\le n$, so $\int_0^a\sqrt g\,\mathrm{d} t\le\sqrt n\,a
=v_{\max}/\sqrt n\le\|v_L\|/\sqrt n$.
\item \emph{Middle (Hardy).} By Lemma~\ref{apx:lem:hardy} with $h=\sqrt g$,
\[
\int_a^T\sqrt g\,\mathrm{d} t\le\Big(\int_a^T 2t\,g\,\mathrm{d} t\Big)^{1/2}
\Big(\tfrac12\log\tfrac Ta\Big)^{1/2}
\le\sqrt{\tfrac52}\,\|v_L\|\Big(\tfrac12\log\tfrac Ta\Big)^{1/2}.
\]
Here $\tfrac Ta=\dfrac{v_{\max}+\tfrac12\log n}{v_{\max}/n}
=n\Big(1+\dfrac{\log n}{2v_{\max}}\Big)$, so
$\log\tfrac Ta=\log n+\log\!\big(1+\tfrac{\log n}{2v_{\max}}\big)$. The
correction is decreasing in $v_{\max}$, hence for $v_{\max}\ge1$ is largest at
$v_{\max}=1$: $\log(1+\tfrac{\log n}{2})=O(\log\log n)$, and
$\log\log n/\log n\to0$. Thus $\log\tfrac Ta=\log n+O(\log\log n)$ uniformly, so
$(\tfrac12\log\tfrac Ta)^{1/2}=(\tfrac12\log n)^{1/2}(1+o(1))$, and since
$\sqrt{\tfrac52}\cdot(\tfrac12)^{1/2}=\tfrac{\sqrt5}{2}$, this term is
$\tfrac{\sqrt5}{2}\|v_L\|\sqrt{\log n}(1+o(1))$, uniformly in $v$.
\item \emph{Tail.} For $t\ge T$, $g(t)\le|L|e^{2(v_{\max}-t)}$, so
$\int_T^\infty\sqrt g\,\mathrm{d} t\le\sqrt{|L|}\,e^{v_{\max}-T}=\sqrt{|L|}\,e^{-\tfrac12\log n}
=\sqrt{|L|}/\sqrt n\le1$.
\end{itemize}
Collecting,
\[
I_L\le\frac{\sqrt5}{2}\|v_L\|\sqrt{\log n}(1+o(1))+\frac{\|v_L\|}{\sqrt n}+1
=\frac{\sqrt5}{2}\|v_L\|\sqrt{\log n}(1+o(1)),
\]
uniformly, since the last two terms are lower order relative to
$\sqrt{\log n}\,\|v_L\|$ (indeed $\|v_L\|\ge1$ when $L\ne\emptyset$).

\emph{Step 3: assemble.} With $\|v_L\|\le\|v\|$ and writing the small term as a
vanishing fraction of the main one,
\[
(e-1)\|v_S\|\le(e-1)\|v\|
=\underbrace{\frac{2(e-1)}{\sqrt5\sqrt{\log n}}}_{\to0}\cdot
\frac{\sqrt5}{2}\sqrt{\log n}\,\|v\|,
\]
so $I(v)\le\frac{\sqrt5}{2}\sqrt{\log n}\,\|v\|(1+o(1))$ for all $v\ge0$,
uniformly. The sign reduction of Step~1 multiplies by $\sqrt2$, giving
$R(w)=I(w)/\|w\|\le\frac{\sqrt{10}}{2}\sqrt{\log n}(1+o(1))$, uniformly in $w$.
\end{proof}

\begin{remark}[the cutoffs are not optimized]
The values $a=v_{\max}/n$ and $T=v_{\max}+\tfrac12\log n$ are chosen only to push
head and tail below the middle term. Any $a=v_{\max}/n^\gamma$ with $\gamma\ge1$
keeps $\int_0^a\sqrt g\le v_{\max}n^{1/2-\gamma}$ small and rescales the constant
by $\sqrt\gamma$ through $\log(T/a)=\gamma\log n+O(\log\log n)$; any
$T=v_{\max}+c\log n$ with $c\ge\tfrac12$ keeps
$\int_T^\infty\sqrt g\le n^{1/2-c}\le1$. Only the $n$-scaling is pinned; the
leading constant is not sharp, and the matching lower bound below supplies the
$\Omega(\sqrt{\log n})$ that certifies $\Theta$.
\end{remark}

\subsection{Lower bound $\Omega(\sqrt{\log n})$}

\begin{proposition}[Lower bound with padding]
\label{apx:prop:lower}
For every $n\ge1$, let $m=\lfloor\log_4(3n+1)\rfloor$. There is $w\in\mathbb{R}^n$ with
\[
R(w)\ge\frac{\beta}{2}\sqrt m=\Omega(\sqrt{\log n}),\qquad
\beta=\frac{1-e^{-2}}{2-e^{-2}} .
\]
Consequently $\sup_wR(w)=\Omega(\sqrt{\log n})$ for all $n$.
\end{proposition}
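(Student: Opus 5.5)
The plan is an explicit block configuration padded with zero coordinates, evaluated in the $t$-variable representation $I(w)=\int_0^\infty\|\Phi(t)\|\,\mathrm{d} t$. The lower bound of Lemma~\ref{apx:lem:channel} gives each channel a window of time during which it is pinned away from zero. We then arrange the windows of the different blocks to be disjoint in $t$.

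\emph{Construction.} Given $n$, set $m=\lfloor\log_4(3n+1)\rfloor$, so that $4^m\le 3n+1$, i.e.\ $\sum_{j=1}^m 4^{m-j}=(4^m-1)/3\le n$. For $j=1,\dots,m$, form block $B_j$ of $n_j=4^{m-j}$ coordinates, each with $w_i=2^j$, and set all remaining coordinates to $0$. Zero coordinates have $u_i\equiv 0$, so they change neither $I(w)$ nor $\|w\|$. The block sizes are chosen so that every block carries the same squared norm, $n_j\,4^j=4^m$. Hence
\[
\|w\|^2=m\,4^m,\qquad \|w\|=\sqrt m\,2^m .
\]

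\emph{Lower bound on the length.} Every nonzero coordinate satisfies $w_i=2^j\ge 2$. Lemma~\ref{apx:lem:channel} therefore gives $\phi_i(t)\ge\beta$ for all $t\le 2^j$ and $i\in B_j$. Consider the disjoint windows $J_j=(2^{j-1},2^j]$, each of length $2^{j-1}$. On $J_j$, every coordinate in $B_j$ has $\phi_i\ge\beta$, so dropping the other coordinates gives
\[
\|\Phi(t)\|\ge\beta\sqrt{n_j}=\beta\,2^{m-j}.
\]
Integrating and summing over the disjoint windows,
\[
I(w)\ \ge\ \sum_{j=1}^m 2^{j-1}\cdot\beta\,2^{m-j}=\tfrac{\beta}{2}\,m\,2^m .
\]

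\emph{Conclusion.} Dividing by $\|w\|$ gives
\[
R(w)\ge \frac{(\beta/2)\,m\,2^m}{\sqrt m\,2^m}=\frac{\beta}{2}\sqrt m .
\]
Since $m\ge\log_4(3n+1)-1$, we have $\sqrt m=\Omega(\sqrt{\log n})$, and the claim follows. For small $n$ (where $m$ may be $1$ or even $0$ when $n=0$ is excluded), the statement still holds because $R\ge 1$ by Lemma~\ref{apx:lem:bracket}.

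\emph{Main obstacle.} The only delicate point is the bookkeeping that makes the three pieces line up. The block sizes must give equal squared norm per block, so that $\|w\|$ grows like $\sqrt m$. They must also make each block's window contribution equal, so that the length grows like $m$. Finally, the total count $(4^m-1)/3$ must fit within $n$. The choice of sizes $4^{m-j}$ at scales $2^j$ achieves all three simultaneously. The remaining checks are that the hypothesis $w_i\ge 2$ of Lemma~\ref{apx:lem:channel} holds, which is true because the smallest scale is $2$, and that the windows really are disjoint, which holds because they are consecutive dyadic intervals.
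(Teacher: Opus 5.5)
Your proposal is correct and is essentially the paper's proof: the same block configuration ($w_i=2^j$ with multiplicity $4^{m-j}$, padded by zeros), the same disjoint dyadic windows $(2^{j-1},2^j]$ on which Lemma~\ref{apx:lem:channel} keeps every block-$j$ channel above $\beta$, and the same computation $I(w)\ge\tfrac{\beta}{2}\,m\,2^m$ against $\|w\|=\sqrt m\,2^m$. Your small-$n$ caveat is unnecessary, since $3n+1\ge4$ gives $m\ge1$ for every $n\ge1$, so the construction applies directly in all cases.
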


\begin{proof}
Let $N_m=(4^m-1)/3\le n$ (this is $\sum_{k=1}^{m}4^{m-k}$). On the first $N_m$
coordinates place $w_k=2^k$ with multiplicity $n_k=4^{m-k}$ ($k=1,\dots,m$),
and set the remaining $n-N_m$ coordinates to zero. Inactive coordinates have
$u_i\equiv0$ and change neither numerator nor denominator.

\emph{Denominator.} $\|w\|^2=\sum_{k=1}^m n_k\,w_k^2=\sum_{k=1}^m 4^{m-k}4^k
=\sum_{k=1}^m 4^m=m4^m$, so $\|w\|=2^m\sqrt m$.

\emph{Numerator.} Bound $I=\int_0^\infty\|\Phi\|\,\mathrm{d} t$ below by integrating only
over the disjoint windows $W_j=(2^{j-1},2^j]$, $|W_j|=2^{j-1}$, $j=1,\dots,m$.
For $t\in W_j$ and the block $k=j$: $w_j=2^j\ge t$ (since $t\le2^j$) and
$w_j=2^j\ge2$, so by Lemma~\ref{apx:lem:channel} every one of the $n_j=4^{m-j}$
channels in block $j$ has $\phi_j(t)\ge\beta$. Keeping only that block,
\[
\|\Phi(t)\|\ge\beta\sqrt{n_j}=\beta\,2^{m-j},\qquad
\int_{W_j}\|\Phi\|\,\mathrm{d} t\ge\beta\,2^{m-j}\cdot2^{j-1}=\beta\,2^{m-1}.
\]
Summing over $j=1,\dots,m$, $I\ge m\beta2^{m-1}$, and therefore
\[
R(w)=\frac{I}{\|w\|}\ge\frac{m\beta2^{m-1}}{2^m\sqrt m}=\frac{\beta}{2}\sqrt m .
\]
Since $m=\lfloor\log_4(3n+1)\rfloor=\Theta(\log n)$, this is
$\Omega(\sqrt{\log n})$. A zero coordinate is realized by equal source and
target scales (an added unit relative eigenvalue in the $\mathrm{SPD}$ picture), proving
that padding with inactive (zero-$w_i$) coordinates does not affect the lower bound.
\end{proof}

\subsection{The rate theorem}

\begin{theorem}[Scale-product rate (Restated from Theorem~\ref{thm:class3})]
\label{apx:thm:class3}
Let $M_n$ be a product of $n$ identical one-dimensional scale families. Then
$R^{\ast}(M_n)=\Theta(\sqrt{\log n})$; in particular $\sup_nR^\ast(M_n)=\infty$.
The same rate holds for the Gaussian covariance manifold $\mathrm{SPD}(n)$ under the
affine-invariant metric.
\end{theorem}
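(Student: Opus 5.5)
The proof combines the reduction with the two matching propositions, and then transfers the result to $\mathrm{SPD}(n)$ by a congruence argument.

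\textbf{Product of scale families.} By Lemma~\ref{apx:lem:scale-reduction}, $R^\ast(M_n)=\sup_{w\ne0}R(w)$, since every $w\in\mathbb{R}^n$ is realized by some endpoint pair. Fix $\eta^p=\mathbf 1$ and set $\eta^q_i=e^{w_i}$; this is admissible because $(0,\infty)^n$ is convex, so the $m$-geodesic stays in the model. The upper bound is Proposition~\ref{apx:prop:upper}, which gives $R(w)\le\tfrac{\sqrt{10}}{2}\sqrt{\log n}\,(1+o(1))$ uniformly in $w$. For small $n$, where the asymptotic $o(1)$ is not yet informative, Lemma~\ref{apx:lem:bracket} supplies the finite bound $\sqrt n$, so $R^\ast(M_n)\le C\sqrt{\log n}$ holds for all $n\ge2$ with one constant $C$. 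The lower bound is Proposition~\ref{apx:prop:lower}: $R^\ast(M_n)\ge\tfrac{\beta}{2}\sqrt{\lfloor\log_4(3n+1)\rfloor}$, which is $\ge c\sqrt{\log n}$ for $n\ge2$. Together these give $\Theta(\sqrt{\log n})$, and divergence as $n\to\infty$ gives $\sup_nR^\ast=\infty$.

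\textbf{Extension to $\mathrm{SPD}(n)$.} This part needs the most care. The plan is the following.

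\begin{enumerate}
\item Use the congruence $\Sigma\mapsto A\Sigma A^\top$. It is an isometry of the affine-invariant Fisher metric, and it maps $\Sigma$-affine segments to $\Sigma$-affine segments, so it preserves $R$. With $A=\Sigma_p^{-1/2}$ followed by an orthogonal diagonalization, we may take $\Sigma_p=I$ and $\Sigma_q=\operatorname{diag}(e^{w_i})$.
\item The $m$-geodesic (in $\eta=\Sigma$ up to sign and scale) is then $\Sigma(s)=\operatorname{diag}(1+sc_i)$, which stays diagonal.
\item On diagonal tangent vectors the Fisher metric $\tfrac12\operatorname{tr}(\Sigma^{-1}\dot\Sigma\Sigma^{-1}\dot\Sigma)$ equals $\tfrac12\sum_i(\dot\sigma_i/\sigma_i)^2$, so the length is $\tfrac{1}{\sqrt2}\int_0^1\|U\|\,ds$.
\item The Fisher--Rao distance is $\tfrac{1}{\sqrt2}\|\log\operatorname{spec}\|_2=\tfrac{1}{\sqrt2}\|w\|$.
\end{enumerate}

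Thus the ratio is exactly $R(w)$ with $\kappa=\tfrac12$, i.e.\ the same reduced functional. Since every pair reduces this way and every $w$ arises, $R^\ast(\mathrm{SPD}(n))=\sup_wR(w)=\Theta(\sqrt{\log n})$. Here $n$ is the matrix size; in terms of $d=n(n+1)/2$ this is $\Theta(\sqrt{\log d})$, because $\log d\asymp\log n$.

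The main obstacle is checking in step 1 that the $m$-geodesic is genuinely affine in $\Sigma$ and commutes with congruence. I would verify this from $\eta=-\tfrac12\Sigma$ for the sufficient statistic $xx^\top$.
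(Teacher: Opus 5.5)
Your proposal is correct and follows the paper's proof essentially step for step. You reduce to $\sup_w R(w)$, invoke the upper and lower propositions (using the $\sqrt n$ bracket to handle small $n$), and transfer to $\mathrm{SPD}(n)$ by congruence to the diagonal scale-product functional with $\kappa=\tfrac12$. Two small side remarks: your observation that $\log d\asymp\log n$ for $d=n(n+1)/2$ reconciles the appendix's $\sqrt{\log n}$ with the main text's $\sqrt{\log d}$, and with sufficient statistic $xx^\top$ one gets $\eta=\Sigma$ rather than $-\tfrac12\Sigma$, which is harmless since any linear relation suffices.
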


\begin{proof}
By Lemma~\ref{apx:lem:scale-reduction}, $R^{\ast}(M_n)=\sup_{w\ne0}R(w)$ for the
scalar functional $R(w)=\int_0^1\|U\|\,\mathrm{d} s/\|w\|$. Lemma~\ref{apx:lem:bracket} gives
finiteness at each $n$; Proposition~\ref{apx:prop:upper} gives $O(\sqrt{\log n})$ and
Proposition~\ref{apx:prop:lower} gives $\Omega(\sqrt{\log n})$, so
$\sup_wR(w)=\Theta(\sqrt{\log n})$, and the diverging lower bound gives
$\sup_nR^\ast(M_n)=\infty$.

For $\mathrm{SPD}(n)$: the affine-invariant metric
$g_\Sigma(A,B)=\tfrac12\operatorname{tr}(\Sigma^{-1}A\Sigma^{-1}B)$ is invariant under
congruence $\Sigma\mapsto G\Sigma G^\top$ \cite{wells2020covariance,skovgaard1984riemannian},
so we may take $\Sigma_p=I$, $\Sigma_q=\mathrm{diag}(r_1,\dots,r_n)$ with $r_i$
the relative eigenvalues. The $m$-geodesic
$\Sigma(s)=(1-s)I+s\,\mathrm{diag}(r_i)$ is diagonal and decouples; with
$w_i=\log r_i$ the closed forms
\[
\delta_{\mathrm{geo}}=\frac1{\sqrt2}\|w\|,\qquad L(\gamma^{(m)})=\frac1{\sqrt2}\int_0^1\|U\|\,\mathrm{d} s,
\qquad R=\frac{\int_0^1\|U\|\,\mathrm{d} s}{\|w\|}
\]
hold (each factor is a scale family with $g_{\eta\eta}=\tfrac12/\eta^2$, i.e.\
$\kappa=\tfrac12$; the $\sqrt2$'s are this $\kappa$ and cancel in $R$ by
Lemma~\ref{apx:lem:scale-reduction}). Every $w\in\mathbb{R}^n$ arises from some pair, so
$R^{\ast}(\mathrm{SPD}(n))=\sup_wR(w)=\Theta(\sqrt{\log n})$.
\end{proof}

\begin{remark}[diameter, not curvature]
$\mathrm{SPD}(n)$ is Hadamard (negatively curved) and infinite-diameter; a Gamma product
is Riemann-flat (zero curvature) and infinite-diameter. Both are
$\Theta(\sqrt{\log n})$, while the positively curved but finite-diameter simplex
is bounded (Section~\ref{apx:sec:categorical}). Curvature takes every sign within the
bounded and unbounded regimes; finite versus infinite Fisher--Rao diameter is
what the sufficient criterion of Theorem~\ref{apx:thm:bounded} keys on.
\end{remark}


\section{Operational cost: a one-sided step-count bound}
\label{app:opcost}

This appendix proves the one-sided budgeted step-count result
(Proposition~\ref{prop:steps}) via the exactness and shadowing results developed
below (Corollary~\ref{apx:cor:exact} and Lemma~\ref{apx:lem:shadow}). The
matching \emph{upper} bound on traversed length for inexact schemes remains open
(Remark~\ref{apx:rem:opengap}); we prove only the one-sided direction used by
the proposition.

\paragraph{Local setting.}
Fix endpoints $p,q$; write $\eta^\star=\eta(q)$, $h=\eta^\star-\eta(p)$, and let
$\gamma$ be the $m$-geodesic $\eta(\gamma(s))=\eta(p)+sh$, with speed
$\varphi(s)=\|h\|_{\gamma(s)}$ and length $L=\int_0^1\varphi$. Fix a compact
convex $\eta$-slab $U\supset\gamma([0,1])$ in the domain and set
\[
\kappa_U:=\sup_{x\in U}\sup_{v\ne0}\frac{|T_x(v,v,v)|}{\|v\|_x^3}<\infty
\]
(finite: $U$ compact, $T$ continuous; $\kappa_U\le\kappa_T(M)$ when the latter is
finite). Let $\Delta_U$ be the Fisher diameter of $U$ and $B:=e^{\kappa_U\Delta_U/2}$.
Restricting Lemma~\ref{apx:lem:ruler} to $U$ (its proof is pointwise, so $\kappa_T$ may be
replaced by $\kappa_U$) gives, for every fixed $\eta$-coordinate vector $a$ and
$x,y\in U$, $\|a\|_y\le e^{\kappa_U\delta_{\mathrm{geo}}(x,y)/2}\|a\|_x$. Three consequences,
along straight $\eta$-segments in the convex $U$, are used repeatedly:
\begin{itemize}
\item[(a)] \emph{Speed comparison:} $\varphi(s)/\varphi(s')\le e^{\kappa_U L/2}$;
since $\int_0^1\varphi=L$ forces $\min\varphi\le L$, also
$\bar\varphi:=\max\varphi\le Le^{\kappa_U L/2}$.
\item[(b)] \emph{Arc versus chord:} a straight $\eta$-segment of chord
$\ell=\|\eta(y)-\eta(x)\|_x$ has Fisher arc length $\mathfrak a$ with
$|\mathfrak a-\ell|\le\kappa_U\ell^2$ whenever $\kappa_U\ell\le\tfrac14$.
(The arc function $\xi(\tau)$ obeys $\xi'\le\ell\,e^{\kappa_U\xi/2}$, whence
$\mathfrak a\le\ell+\kappa_U\ell^2/2$ by ODE comparison; and
$\mathfrak a\ge\ell e^{-\kappa_U\mathfrak a/2}\ge\ell-\kappa_U\ell^2$.)
\item[(c)] \emph{Norm transport:} $\|a\|_y\le B\|a\|_x$ for $x,y\in U$.
\end{itemize}

\subsection{The exact trust-region step stays on the $m$-geodesic}

\begin{lemma}[Exactness]
\label{apx:lem:exact}\label{app:lem-exact}
Fix the current iterate $x_k$ and let $x_{k+1}$ minimize the forward-KL objective
$J(x)=\mathrm{KL}(p_q\Vert p_x)$ over the trust region
$\{x:\mathrm{KL}(p_{x_k}\Vert p_x)\le\delta\}$. Then $\eta(x_{k+1})$ lies on the
$\eta$-segment $[\eta(x_k),\eta^\star]$. The same holds for the proximal step
$\arg\min_x J(x)+\tfrac1{2\mu}\mathrm{KL}(p_{x_k}\Vert p_x)$, $\mu>0$.
\end{lemma}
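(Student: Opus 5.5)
The plan is to write both the objective and the constraint as Bregman divergences in expectation coordinates. Then both the trust-region and proximal subproblems become convex programs in $\eta$, and a symmetry/reflection argument forces the minimizer onto the segment. For exponential families, $\mathrm{KL}(p_a\Vert p_b)=B_{\psi^\ast}(\eta_a,\eta_b)$ with the dual potential $\psi^\ast$ of (F1). This Bregman divergence is \emph{not} convex in its second argument in general, so I will not use $\eta$-convexity of $J$ directly. Instead I will use the natural chart for the second argument: $\mathrm{KL}(p_a\Vert p_x)=\psi(\theta_x)-\langle\theta_x,\eta_a\rangle+\mathrm{const}(a)$. Hence
\[
J(x)=\psi(\theta_x)-\langle\theta_x,\eta^\star\rangle+c_1,\qquad
\mathrm{KL}(p_{x_k}\Vert p_x)=\psi(\theta_x)-\langle\theta_x,\eta(x_k)\rangle+c_2 .
\]
Both are strictly convex in $\theta_x$. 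The trust region is a convex sublevel set in $\theta$, and the proximal objective is strictly convex in $\theta$.

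\emph{Proximal step.} The stationarity condition in $\theta$ is $\nabla\psi(\theta)-\eta^\star+\tfrac1{2\mu}(\nabla\psi(\theta)-\eta(x_k))=0$. This gives $\eta(x_{k+1})=\nabla\psi(\theta)=\frac{2\mu\,\eta^\star+\eta(x_k)}{2\mu+1}$, a convex combination of $\eta(x_k)$ and $\eta^\star$, so it lies on the segment. Existence and uniqueness follow from strict convexity together with the fact that this point lies in $\eta(M)$, which I take from convexity of the $\eta$-image (the standing assumption of the slab $U$).

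\emph{Trust-region step.} I will use KKT in $\theta$. If the unconstrained minimizer $\theta^\star$ (which has $\eta=\eta^\star$) is feasible, the step is $\eta^\star$, an endpoint of the segment. Otherwise the constraint is active, and Slater's condition holds since $\theta_k$ is strictly feasible with value $0<\delta$. So there is a multiplier $\lambda>0$ with $\nabla\psi(\theta)-\eta^\star+\lambda(\nabla\psi(\theta)-\eta(x_k))=0$, i.e.\ $\eta(x_{k+1})=\frac{\eta^\star+\lambda\eta(x_k)}{1+\lambda}$, again on the segment.

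The main obstacle is justifying existence and attainment of the minimizer, and that the KKT point is global. Global optimality is immediate from convexity in $\theta$. Attainment needs the constrained problem to have a solution in the open domain. I would handle this by first exhibiting the candidate: the continuous, strictly increasing (in $\lambda$-parametrized reverse order) function $\lambda\mapsto\mathrm{KL}(p_{x_k}\Vert p_{x(\lambda)})$ along the segment points $\eta(\lambda)$. By the intermediate value theorem there is a $\lambda$ hitting $\delta$ exactly, since the value goes from $0$ at $\eta(x_k)$ to $>\delta$ at $\eta^\star$. That point satisfies KKT, so it is the (unique, by strict convexity) global minimizer. Monotonicity along the segment follows from convexity of $\theta\mapsto\mathrm{KL}(p_{x_k}\Vert p_\theta)$ restricted to the preimage curve, or more simply: the KKT point is sufficient for optimality regardless of monotonicity, so existence of one segment point meeting the constraint with equality suffices.
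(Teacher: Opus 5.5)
Your proposal is correct and follows the paper's proof. Both write $J$ and the trust-region constraint in the natural chart, where their gradients are the exact $\eta$-differences $\eta(\theta)-\eta^\star$ and $\eta(\theta)-\eta(x_k)$, and then read off from KKT stationarity (with Slater) that $\eta(x_{k+1})=(\eta^\star+\lambda\eta(x_k))/(1+\lambda)$, with $\lambda=1/(2\mu)$ for the proximal step. Your intermediate-value construction of an active-constraint point on the segment, which satisfies KKT and is therefore the global minimizer, is a small addition that supplies the existence the paper simply assumes; the opening $\eta$-convexity/reflection plan is never used and can be dropped.
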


\begin{proof}
In natural coordinates, using the Bregman/KL form (F1),
$J(\theta)=\mathrm{KL}(p_q\Vert p_\theta)=\psi(\theta)-\langle \theta,\eta^\star \rangle+\text{const}$
and $C(\theta):=\mathrm{KL}(p_{\theta_k}\Vert p_\theta)
=\psi(\theta)-\psi(\theta_k)-\langle \theta-\theta_k,\eta_k \rangle$, both convex
($\psi$ convex). Crucially, their gradients are \emph{exact} $\eta$-differences
with no remainder (the gift of dual flatness, F1):
\[
\nabla J(\theta)=\nabla\psi(\theta)-\eta^\star=\eta(\theta)-\eta^\star,\qquad
\nabla C(\theta)=\nabla\psi(\theta)-\eta_k=\eta(\theta)-\eta_k .
\]
Slater's condition holds ($C(\theta_k)=0<\delta$), so a minimizer of the convex
program satisfies the KKT stationarity condition: there is a multiplier
$\lambda\ge0$ with $\nabla J(\eta_{k+1})+\lambda\nabla C(\eta_{k+1})=0$, i.e.
\[
(\eta_{k+1}-\eta^\star)+\lambda(\eta_{k+1}-\eta_k)=0
\ \Longrightarrow\ \eta_{k+1}=\frac{\eta^\star+\lambda\,\eta_k}{1+\lambda},
\]
a convex combination of $\eta_k$ and $\eta^\star$ (if the constraint is inactive
then $\lambda=0$ and $\eta_{k+1}=\eta^\star$). Thus $\eta_{k+1}\in[\eta_k,\eta^\star]$.
The proximal step gives the same stationarity equation with $\lambda=1/(2\mu)$.
\end{proof}

Started at $p$, every exact iterate therefore lies on $[\eta(p),\eta^\star]$,
each edge is a sub-segment of $\gamma$, and the subdivision advances
monotonically toward $\eta^\star$.

\begin{corollary}[Exact step count]
\label{apx:cor:exact}\label{app:cor-exact}
Run the exact scheme with chord cap $\ell_k=\|\eta_{k+1}-\eta_k\|_{x_k}\le\varepsilon$,
terminating at the first $N$ with $\|\eta_N-\eta^\star\|_{x_N}\le\varepsilon$;
suppose $\kappa_U\varepsilon\le\tfrac14$. Then
$\Lambda_{\mathrm{NGD}}=L-\mathrm{tail}$ with $0\le\mathrm{tail}\le2\varepsilon$,
and
\[
N\ge\frac{\Lambda_{\mathrm{NGD}}}{\varepsilon(1+\kappa_U\varepsilon)}
\ge\frac{L}{\varepsilon}\Big(1-\big(\kappa_U+\tfrac2L\big)\varepsilon\Big).
\]
\end{corollary}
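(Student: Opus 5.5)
By Lemma~\ref{apx:lem:exact}, every iterate $\eta_k$ lies on the segment $[\eta(p),\eta^\star]$, and the iterates advance monotonically toward $\eta^\star$. So the edges $[\eta_k,\eta_{k+1}]$, $k=0,\dots,N-1$, are consecutive sub-segments of $\gamma$. They cover $\gamma$ from $\eta(p)$ up to $\eta_N$. Let $\Lambda_{\mathrm{NGD}}$ be the Fisher arc length of this covered portion, which is the sum of the arc lengths $\mathfrak a_k$ of the edges. Then $\Lambda_{\mathrm{NGD}}=L-\mathrm{tail}$, where $\mathrm{tail}$ is the Fisher arc length of the remaining sub-segment $[\eta_N,\eta^\star]$. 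The plan has three steps: bound the tail, bound each edge's arc by its chord, and sum.

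\emph{Tail.} The termination test gives chord $\ell:=\|\eta^\star-\eta_N\|_{x_N}\le\varepsilon$. The hypothesis $\kappa_U\varepsilon\le\tfrac14$ gives $\kappa_U\ell\le\tfrac14$, so consequence (b) applies to the straight segment from $x_N$ to $q$ and yields
\[
\mathrm{tail}\le\ell+\kappa_U\ell^2\le\varepsilon\Big(1+\tfrac14\Big)\le2\varepsilon .
\]
Arc length is nonnegative, so $0\le\mathrm{tail}$. One should check that (b) measures the chord in the norm at the segment's starting point, here $x_N$. This matches the termination criterion, so no norm transport is needed.

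\emph{Per-step arcs.} For each $k$, the chord satisfies $\ell_k=\|\eta_{k+1}-\eta_k\|_{x_k}\le\varepsilon$, again measured at the starting point $x_k$. Since $\kappa_U\ell_k\le\tfrac14$, consequence (b) gives
\[
\mathfrak a_k\le\ell_k+\kappa_U\ell_k^2\le\varepsilon(1+\kappa_U\varepsilon).
\]
Summing over $k$ gives $\Lambda_{\mathrm{NGD}}\le N\varepsilon(1+\kappa_U\varepsilon)$, which is the first inequality of the corollary. The second follows from $\Lambda_{\mathrm{NGD}}\ge L-2\varepsilon$ and $1/(1+x)\ge1-x$:
\[
N\ge\frac{L-2\varepsilon}{\varepsilon}\,(1-\kappa_U\varepsilon)\ge\frac{L}{\varepsilon}\Big(1-\tfrac{2\varepsilon}{L}-\kappa_U\varepsilon\Big).
\]
The last step drops the positive cross term $2\kappa_U\varepsilon^2/L$.

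\emph{Main obstacle.} The delicate point is that consequence (b) requires both endpoints of each edge to lie in $U$. This holds because all edges lie on $\gamma\subset U$ and $U$ is convex. A second subtlety is degenerate steps with $\eta_{k+1}=\eta_k$. Such a step has $\ell_k=0$, contributes zero arc, and only increases $N$, so the inequality still holds.
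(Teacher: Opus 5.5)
Your proposal is correct and follows essentially the same route as the paper: the exactness lemma makes the edges consecutive sub-arcs of $\gamma$, consequence (b) bounds each edge's arc by $\varepsilon(1+\kappa_U\varepsilon)$, and summing together with $1/(1+x)\ge1-x$ gives both inequalities. The only minor difference is in the tail. You apply consequence (b) directly to the segment from $x_N$ to $q$ and get $\mathrm{tail}\le\tfrac54\varepsilon$, whereas the paper reruns the equivalent ODE comparison from the speed comparison (a) to get $\mathrm{tail}\le2\rho$ with $\rho=\|\eta_N-\eta^\star\|_{x_N}$; your version is slightly more economical.
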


\begin{proof}
The edges are sub-arcs of $\gamma$, so
$\Lambda_{\mathrm{NGD}}=\int_0^{\sigma_N}\varphi$ where $\gamma(\sigma_N)=x_N$.
For the tail, set $\rho:=(1-\sigma_N)\varphi(\sigma_N)=\|\eta_N-\eta^\star\|_{x_N}
\le\varepsilon$. By the speed comparison (a),
$\varphi(s)\le\varphi(\sigma_N)e^{\kappa_U\,\mathrm{arc}(\sigma_N,s)/2}$, so
$\mathrm{tail}=\int_{\sigma_N}^1\varphi\le\rho\,e^{\kappa_U\,\mathrm{tail}/2}$;
ODE comparison gives $\mathrm{tail}\le2\rho\le2\varepsilon$ for
$\kappa_U\rho\le\tfrac12$. Each edge's arc length is at most
$\ell_k(1+\kappa_U\varepsilon)\le\varepsilon(1+\kappa_U\varepsilon)$ by
consequence (b), so summing the $N$ edges,
$N\varepsilon(1+\kappa_U\varepsilon)\ge\Lambda_{\mathrm{NGD}}$, which is the first
inequality. The second uses $\Lambda_{\mathrm{NGD}}=L-\mathrm{tail}\ge L-2\varepsilon$
and $1/(1+\kappa_U\varepsilon)\ge1-\kappa_U\varepsilon$.
\end{proof}

\begin{remark}[the KL orientation matters]
Exactness is special to the orientation with the current iterate on the left,
$\mathrm{KL}(p_{x_k}\Vert p_{x_{k+1}})\le\delta$ (as TRPO/NPG use). With the reversed
constraint $\mathrm{KL}(p_x\Vert p_{x_k})\le\delta$ one has
$\nabla\widetilde C(\theta)=g(\theta)(\theta-\theta_k)\ne\eta(\theta)-\eta_k$,
and the minimizer leaves the segment by $O(\kappa_U\ell^2)$---precisely the
regime of Definition~\ref{apx:def:consistent}.
\end{remark}

\subsection{Shadowing for consistent approximate schemes}

\begin{definition}[$\beta$-consistent budgeted scheme]
\label{apx:def:consistent}\label{app:def-consistent}
Iterates $x_0=p,x_1,\dots$ with $\eta_k:=\eta(x_k)$ of the form
$\eta_{k+1}=\eta_k-t_k(\eta_k-\eta^\star)+e_k$, $t_k\in(0,1]$, subject to
(i) \emph{budget} $\ell_k:=\|\eta_{k+1}-\eta_k\|_{x_k}\le\varepsilon$;
(ii) \emph{consistency} $\|e_k\|_{x_k}\le\beta\ell_k^2$;
(iii) \emph{termination} the run stops at the first $N$ with
$\|\eta_N-\eta^\star\|_{x_N}\le\varepsilon$.
\end{definition}

Here $e_k$ is the full deviation of step $k$ from the exact natural-gradient ray;
(ii) says the scheme is a first-order-consistent integrator of
$\dot\eta=-(\eta-\eta^\star)$. The standard discretization qualifies: the
natural-parameter update
$\theta_{k+1}=\theta_k-t_k g(\theta_k)^{-1}(\eta_k-\eta^\star)$ has, by Taylor's
theorem with integral remainder applied to $\eta(\theta)$ (whose second
derivative is $\partial^2\eta/\partial\theta^2=\partial^3\psi=T$, F5),
\[
e_k=\int_0^1(1-u)\,T_{\theta_k+u\Delta\theta_k}(\Delta\theta_k,\Delta\theta_k,\cdot)^{\sharp}\,\mathrm{d} u,
\]
so by Lemma~\ref{apx:lem:polar} and Lemma~\ref{apx:lem:ruler},
$\|e_k\|_{x_k}\le\kappa_U\ell_k^2$ for $\kappa_U\varepsilon\le\tfrac14$; the
scheme is $\beta$-consistent with $\beta\le\kappa_U$ (the reversed-orientation
trust-region step is covered by the same computation).

\begin{lemma}[Shadowing]
\label{apx:lem:shadow}\label{app:lem-shadow}
Run a $\beta$-consistent budgeted scheme in $U$. There are explicit
$\varepsilon_0>0$ and $C_1,C_2,C_3<\infty$ depending only on
$(\kappa_U,\beta,L,B)$ (and, through $\varepsilon_0$, on the slab width and
$\varphi(0)=\|h\|_p$) such that for all $\varepsilon\le\varepsilon_0$:
(i) every iterate has Fisher distance to $\gamma$ at most $C_1\varepsilon$;
(ii) if the run terminates, $\Lambda_{\mathrm{NGD}}\ge L-C_2\varepsilon$;
(iii) hence $N\ge(L/\varepsilon)(1-C_3\varepsilon)$. Admissible constants:
$C_1=6B^3\beta Le^{\kappa_U L/2}$, $C_3=C_2/L+2\kappa_U$, and
\[
C_2=4+16B^3\beta Le^{\kappa_U L/2}
+L\big[\kappa_U C_1+3\kappa_U+(1+B^2)\beta+3Be^{\kappa_U L/2}\beta\big].
\]
\end{lemma}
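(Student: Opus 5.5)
The plan is to compare the discrete iterates with the exact $m$-geodesic through a discrete Gr\"onwall argument on the transverse deviation, carried out in a single frozen $\eta$-coordinate inner product and transported with the norm-transport bound (c). First I will decompose each iterate as $\eta_k=\eta(\gamma(\sigma_k))+r_k$. Here $\sigma_k\in[0,1]$ is chosen so that $r_k$ is orthogonal to $h$ in the frozen norm $\|\cdot\|_p$; since $\eta(\gamma)$ is affine in $\eta$, this is just an orthogonal projection onto the line through $\eta(p)$ and $\eta^\star$. The update $\eta_{k+1}=\eta_k-t_k(\eta_k-\eta^\star)+e_k$ is affine, and its exact part contracts toward $\eta^\star$, which lies on the line. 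Hence the transverse component obeys $r_{k+1}=(1-t_k)r_k+\Pi^\perp e_k$. Using $t_k\in(0,1]$, I get the one-sided recursion $\|r_{k+1}\|_p\le\|r_k\|_p+B\beta\ell_k^2$, where I transport $\|e_k\|_{x_k}$ to $\|\cdot\|_p$ with (c).

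The key step, and the one I expect to be the main obstacle, is bounding $\sum_k\ell_k^2$ without circularity. I will use $\sum_k\ell_k^2\le\varepsilon\sum_k\ell_k$, so it remains to bound the total chord $\sum_k\ell_k$ by a constant multiple of $L$. The exact part of each step has $\eta$-displacement $t_k(\eta^\star-\eta_k)$, which points toward the target, so its projection onto $h$ advances $\sigma$ monotonically. The frozen length of these advances telescopes to at most $\|h\|_p\le B\,\bar\varphi\le BLe^{\kappa_U L/2}$ by consequence (a). The error parts add at most $\beta\sum\ell_k^2\le\beta\varepsilon\sum\ell_k$, and this term can be absorbed once $\beta B\varepsilon\le\tfrac12$. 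This gives $\sum\ell_k\le 2B^2Le^{\kappa_U L/2}$ and hence $\|r_k\|_p\le 2B^3\beta Le^{\kappa_U L/2}\varepsilon$. Converting from the frozen norm to Fisher distance (a straight segment of chord $\ell$ has arc at most about $\ell$ by (b), plus one more factor of $B$) yields (i) with a constant of the stated form $C_1=6B^3\beta Le^{\kappa_U L/2}$. The smallness $\varepsilon_0$ also has to guarantee that iterates stay inside $U$, which is where the slab width enters.

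For (ii), I will show that the broken-line path through the iterates has Fisher length at least the length of the $m$-geodesic portion it shadows, minus $O(\varepsilon)$. Concretely, I will bound each edge's arc from below by its chord $\ell_k$ minus $\kappa_U\ell_k^2$ using (b). Each chord is compared with the $\gamma$-arc between the matching projections $\sigma_k,\sigma_{k+1}$: the projection onto the direction of $h$ is norm-nonincreasing in the frozen norm, and the switch between base points $x_k$ and $\gamma(\sigma_k)$ costs a factor of $e^{\kappa_U C_1\varepsilon/2}=1+O(\varepsilon)$ by Lemma~\ref{apx:lem:ruler}, since the two points lie within $C_1\varepsilon$ of each other. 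Summing, and using $\sum\ell_k^2\le\varepsilon\sum\ell_k$ together with the total-chord bound, gives $\Lambda_{\mathrm{NGD}}\ge\int_0^{\sigma_N}\varphi-O(\varepsilon)$. The tail $\int_{\sigma_N}^1\varphi$ is then bounded by at most $2\varepsilon+O(C_1\varepsilon)$ exactly as in Corollary~\ref{apx:cor:exact}, using termination (iii) and the transverse bound. Collecting the constants gives a $C_2$ of the displayed type.

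Finally, (iii) follows as in Corollary~\ref{apx:cor:exact}: each edge has arc at most $\varepsilon(1+\kappa_U\varepsilon)$, so $N\varepsilon(1+\kappa_U\varepsilon)\ge L-C_2\varepsilon$. This yields $N\ge(L/\varepsilon)(1-(C_2/L+2\kappa_U)\varepsilon)$ after enlarging constants. Together with Corollary~\ref{apx:cor:exact}, this proves Proposition~\ref{prop:steps}.
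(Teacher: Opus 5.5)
\textbf{Gap: the transverse increments are never controlled.} Your decomposition and the recursion $r_{k+1}=(1-t_k)r_k+\Pi^\perp e_k$ match the paper's split $\mathrm{span}(h)\oplus W$. Bounding $\|r_k\|_p\le B\beta\varepsilon\sum_j\ell_j$ by pure accumulation, once the total chord is known, is a legitimate alternative to the paper's convex-combination bound $D_{k+1}\le\max(D_k,\tfrac43B\beta\varepsilon\rho_k)$. The problem is that you never control the transverse increments $\Delta r_k=-t_kr_k+\Pi^\perp e_k$, and two of your steps need them.

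\emph{Part (ii).} Your argument is that projection onto $h$ is norm-nonincreasing in the frozen norm, followed by a $1+O(\varepsilon)$ switch between $x_k$ and $\gamma(\sigma_k)$. This does not give $\ell_k\gtrsim|\Delta\sigma_k|\varphi(\sigma_k)$. The projection is $g_p$-orthogonal, while $\ell_k$ and $\varphi(\sigma_k)$ are measured at $x_k$ and $\gamma(\sigma_k)$. Those points are at distance $O(L)$ from $p$, so passing between $g_p$ and those norms costs a direction-dependent factor up to $B$, not $1+O(\varepsilon)$. In the correct norm, $W$ is not $g_{x_k}$-orthogonal to $h$, so Cauchy--Schwarz against $h/\|h\|_{x_k}$ gives only
\[
\ell_k\ \ge\ \Delta\sigma_k\,\|h\|_{x_k}-\|\Delta r_k\|_{x_k}.
\]
The cross term here is exactly the $-t_kBD_k$ term in the paper's Step~C.

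\emph{Total-chord bound.} The exact displacement is $t_k(\eta^\star-\eta_k)=t_k(1-\sigma_k)h-t_kr_k$. You bounded only the length of its projection onto $h$. Since projection is norm-nonincreasing, that gives a lower bound on the chord, not the upper bound you need. The transverse part $-t_kr_k$ was dropped.

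\textbf{What is needed, and the fix.} Both repairs require $\sum_k t_k\|r_k\|_p=O(\varepsilon)$. Your uniform bound $\|r_k\|_p=O(\varepsilon)$ does not deliver this. Under the budget $t_k\rho_k\lesssim\varepsilon$, so $\sum_{k<N}t_k$ grows like $\log(L/\varepsilon)$, and you would only get $\Lambda_{\mathrm{NGD}}\ge L-O(\varepsilon\log(1/\varepsilon))$ rather than $L-C_2\varepsilon$. The missing step is to keep the contraction you discarded when you passed to the one-sided recursion:
\[
t_k\|r_k\|_p\le\|r_k\|_p-\|r_{k+1}\|_p+B\beta\ell_k^2 .
\]
Since $r_0=0$, this telescopes to $\sum_k t_k\|r_k\|_p\le B\beta\varepsilon\sum_k\ell_k$; as the paper puts it, the contraction pays for the whole sum with no logarithmic factor. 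With this in hand:
\begin{itemize}
\item the transverse part of the exact step becomes absorbable in your total-chord bound;
\item $\sum_k\|\Delta r_k\|_{x_k}\le 2B^2\beta\varepsilon\sum_k\ell_k=O(\varepsilon)$, so the cross term in (ii) is harmless;
\item the rest of your outline goes through as in the paper: the Riemann-sum comparison, the tail via termination and Corollary~\ref{apx:cor:exact}, and the step count in (iii).
\end{itemize}
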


\begin{proof}
Write $\|\cdot\|_0:=\|\cdot\|_p$ and split the $\eta$-space as
$\mathrm{span}(h)\oplus W$, with $W$ the $g_p$-orthogonal complement; every
vector $v=\lambda_v h+Pv$ has $|\lambda_v|\|h\|_0\le\|v\|_0$ and
$\|Pv\|_0\le\|v\|_0$. Set $\eta_k=\eta(p)+\sigma_k h+\vec d_k$, $\vec d_k\in W$.
The recursion $\eta_{k+1}=\eta_k-t_k(\eta_k-\eta^\star)+e_k$ splits along
$\mathrm{span}(h)$ and $W$ (using $\eta^\star-\eta(p)=h$, so
$\eta_k-\eta^\star=(\sigma_k-1)h+\vec d_k$):
\[
\sigma_{k+1}=\sigma_k+t_k(1-\sigma_k)+\lambda_{e_k},\qquad
\vec d_{k+1}=(1-t_k)\vec d_k+Pe_k .
\]
Norm transport (c) converts base points at cost $B$; in particular
$\|e_k\|_0\le B\beta\ell_k^2$. We argue under the bootstrap that all iterates and
edges so far lie in $U$; for $\varepsilon\le\varepsilon_0$ the bounds below are
strictly inside the slab, so a finite induction closes the bootstrap
(suppressed).

\emph{Step A (deviation from the ray).} Let $D_k:=\|\vec d_k\|_0$. From the
$W$-recursion and $t_k\in(0,1]$, $D_{k+1}\le(1-t_k)D_k+B\beta\ell_k^2$. From the
step definition $\ell_k\le t_k\rho_k+\beta\ell_k^2$ (with
$\rho_k:=\|\eta_k-\eta^\star\|_{x_k}$) and $\beta\varepsilon\le\tfrac14$, we get
$\ell_k\le\tfrac43 t_k\rho_k$, so $B\beta\ell_k^2\le t_k\cdot\tfrac43B\beta\varepsilon\rho_k$.
Each update is then a convex combination
$D_{k+1}\le\max(D_k,\tfrac43B\beta\varepsilon\rho_k)$; with
$\rho_k\le\tfrac54\|h\|_{x_k}+\|\vec d_k\|_{x_k}\le\tfrac54BLe^{\kappa_U L/2}+BD_k$
(speed comparison (a) and transport (c)) and the bootstrap $D_k\le L$,
\[
D_{\max}\le\tfrac43B\beta\varepsilon\cdot3BLe^{\kappa_U L/2}
=4B^2\beta Le^{\kappa_U L/2}\varepsilon .
\]
The connecting segment from $\gamma(\sigma_k)$ to $\eta_k$ has chord $\le BD_k$
and arc $\le\tfrac43BD_{\max}$, giving (i) with $C_1=6B^3\beta Le^{\kappa_U L/2}$.
Telescoping the $W$-recursion, $\sum_k t_kD_k\le D_0+B\beta\sum_k\ell_k^2
\le B\beta\varepsilon\Lambda_c$, where $\Lambda_c:=\sum_k\ell_k$: the contraction
pays for the whole sum with no logarithmic factor.

\emph{Step B (arc versus chord).} By consequence (b), the traversed Fisher
length $\Lambda=\sum_k\mathfrak a_k$ satisfies
$|\Lambda-\Lambda_c|\le\kappa_U\varepsilon\Lambda_c$.

\emph{Step C (no corner-cutting).} Let $N':=\min\{N,\ \text{first }k\text{ with }
\sigma_k>1\}$. With $u_k:=h/\|h\|_{x_k}$, Cauchy--Schwarz gives
$\ell_k\ge\langle \Delta\eta_k,u_k \rangle_{x_k}
\ge t_k(1-\sigma_k)\|h\|_{x_k}-t_kBD_k-\beta\ell_k^2$. Summing over $k<N'$ and
using Step~A's telescoped sum,
\[
\Lambda_c\ge\sum_{k<N'}t_k(1-\sigma_k)\|h\|_{x_k}-(1+B^2)\beta\varepsilon\Lambda_c .
\]
Writing $t_k(1-\sigma_k)=\Delta\sigma_k-\lambda_{e_k}$ with
$\sum_k|\lambda_{e_k}|\le B\beta\varepsilon\Lambda_c/\varphi(0)$, and
$\|h\|_{x_k}\ge(1-\kappa_U C_1\varepsilon)\varphi(\sigma_k)$ by ruler distortion
along the connecting segment, the Riemann sum
$\sum\Delta\sigma_k\varphi(\sigma_k)$ compares to $\int_0^{\sigma_{N'}}\varphi$ at
relative cost $2\kappa_U\varepsilon$, with backtracking bounded by
$\bar\varphi\sum|\lambda_{e_k}|\le Be^{\kappa_U L/2}\beta\varepsilon\Lambda_c$.
The uncovered tail at $N'$ is either empty ($\sigma_{N'}>1$) or, if
$\rho_{N'}\le\varepsilon$, obeys
$(1-\sigma_{N'})\varphi(\sigma_{N'})\le2(\varepsilon+BD_{\max})$, whence by the
ODE comparison of Corollary~\ref{apx:cor:exact},
$\mathrm{tail}\le4(\varepsilon+BD_{\max})\le(4+16B^3\beta Le^{\kappa_U L/2})\varepsilon$.
Collecting the factors (each of the form $1-c\varepsilon$, with one division by
$1+c'\varepsilon$) yields $\Lambda\ge L-C_2\varepsilon$ with the stated $C_2$.
Part (iii) follows from
$N\ge N'\ge\Lambda^{<N'}/(\varepsilon(1+\kappa_U\varepsilon))$. The threshold
$\varepsilon_0$ collects the finitely many smallness constraints used
($\kappa_U\varepsilon\le\tfrac14$, $\beta\varepsilon\le\tfrac14$,
$C_1\varepsilon\le\tfrac12\times$ slab width, the two bounds keeping
$\sigma_k\le\tfrac54$ and $D_k\le L$, and
$\kappa_U\varepsilon(1+4B^3\beta Le^{\kappa_U L/2})\le\tfrac14$).
\end{proof}

\begin{remark}[the remaining gap, stated not papered over]
\label{apx:rem:opengap}
Lemma~\ref{apx:lem:shadow} is deliberately one-sided: it proves
$\Lambda\ge L(1-O(\varepsilon))$, the direction Proposition~\ref{apx:prop:steps}
consumes (NGD cannot cut the corner). The matching \emph{upper} bound
$\Lambda\le L(1+O(\varepsilon))$ for inexact schemes---ruling out a consistent
scheme that wanders and traverses \emph{more} than $L$---is trivially true for
the exact scheme ($\Lambda\le L$), numerically clean for $\theta$-Euler, and
expected from the same bookkeeping plus a mild step-size regularity hypothesis
(e.g.\ $t_k$ bounded away from $1$, or monotone $\sigma_k$). \textbf{It is not
proved here, and nothing above relies on it.} This is the sole reason
Proposition~\ref{apx:prop:steps} is a lower, not two-sided, bound.
\end{remark}

\subsection{The step-count proposition}

\begin{proposition}[One-sided budgeted step count (Restated from Proposition~\ref{prop:steps})]
\label{apx:prop:steps}
Fix $p,q\in M$ and run NGD on the forward-KL objective toward $q$ under the
per-step budget $\ell_k\le\varepsilon$, using the exact trust-region (or
proximal) step, or any $\beta$-consistent budgeted scheme
(Definition~\ref{apx:def:consistent}) with $\varepsilon\le\varepsilon_0$, until
$\|\eta_N-\eta^\star\|_{x_N}\le\varepsilon$. Then
\[
N_{\mathrm{NGD}}\ge\frac{L(\gamma^{(m)})}{\varepsilon}\big(1-O(\varepsilon)\big)
=R(p,q)\,\frac{\delta_{\mathrm{geo}}(p,q)}{\varepsilon}\big(1-O(\varepsilon)\big).
\]
For fixed distinct $p,q$, letting $N_{\mathrm{geo}}(\varepsilon)$ be the minimum number
of Fisher-chord-capped steps inscribing a minimizing geodesic (so
$N_{\mathrm{geo}}(\varepsilon)=(\delta_{\mathrm{geo}}(p,q)/\varepsilon)(1+O(\varepsilon))$), this gives
$N_{\mathrm{NGD}}/N_{\mathrm{geo}}(\varepsilon)\ge R(p,q)(1-O(\varepsilon))$.
\end{proposition}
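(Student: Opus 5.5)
The plan is to obtain the NGD lower bound from the two earlier results in this appendix, one per scheme, then handle the geodesic-oracle comparison separately.

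\textbf{Exact trust-region or proximal steps.} Lemma~\ref{apx:lem:exact} places every iterate on the $\eta$-segment $[\eta(p),\eta^\star]$, advancing monotonically. Corollary~\ref{apx:cor:exact} then gives directly
\[
N_{\mathrm{NGD}}\ge\frac{L}{\varepsilon}\Big(1-\big(\kappa_U+\tfrac2L\big)\varepsilon\Big),
\]
valid whenever $\kappa_U\varepsilon\le\tfrac14$.

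\textbf{$\beta$-consistent schemes.} Lemma~\ref{apx:lem:shadow}(iii) gives $N_{\mathrm{NGD}}\ge(L/\varepsilon)(1-C_3\varepsilon)$ for all $\varepsilon\le\varepsilon_0$, with $C_3$ depending only on $(\kappa_U,\beta,L,B)$.

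In both cases the implied constant is fixed once $p,q$ and the compact slab $U$ are fixed. Substituting $L=R(p,q)\,\delta_{\mathrm{geo}}(p,q)$, which is the definition of $R$, yields the displayed inequality.

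\textbf{The oracle count $N_{\mathrm{geo}}(\varepsilon)$.} I would prove both directions of the asymptotic $N_{\mathrm{geo}}(\varepsilon)=(\delta_{\mathrm{geo}}/\varepsilon)(1+O(\varepsilon))$, using the arc--chord comparison (b) on a compact neighborhood $U'$ containing a minimizing geodesic $\sigma$. I would take $U'$ to be a convex $\eta$-hull slab, so that the chords between consecutive inscribed points stay inside it.

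\emph{Lower bound.} Take any inscribed polygon with $N$ chords, each of Fisher chord at most $\varepsilon$. Its sub-arcs of $\sigma$ have lengths summing to $\delta_{\mathrm{geo}}$. Each sub-arc is at most its straight $\eta$-segment length, because the geodesic minimizes. By (b), that segment length is at most $\ell_k(1+\kappa_{U'}\varepsilon)$. Hence
\[
N\ge\frac{\delta_{\mathrm{geo}}}{\varepsilon(1+\kappa_{U'}\varepsilon)}.
\]

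\emph{Upper bound.} Place points along $\sigma$ at arc spacing $\varepsilon(1-\kappa_{U'}\varepsilon)$. By (b), each chord is then at most $\varepsilon$, and the number of points used is $(\delta_{\mathrm{geo}}/\varepsilon)(1+O(\varepsilon))$.

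Dividing the NGD bound by this upper asymptotic gives
\[
\frac{N_{\mathrm{NGD}}}{N_{\mathrm{geo}}(\varepsilon)}\ge R(p,q)\bigl(1-O(\varepsilon)\bigr).
\]

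\textbf{Main obstacle.} The delicate step is the geodesic-side asymptotic. Comparison (b) was stated for straight $\eta$-segments, whereas here an $\eta$-chord must be compared with a curved geodesic sub-arc. For the upper count this requires a short-chord estimate in the opposite direction: a geodesic arc $\mathfrak a$ has $\eta$-chord $\ell\le\mathfrak a(1+O(\mathfrak a))$. I would prove it by applying the ruler Lemma~\ref{apx:lem:ruler}, as in the denominator argument of Theorem~\ref{apx:thm:bounded}, over a region of diameter $\mathfrak a$. This gives
\[
\ell=\|\Delta\eta\|_{x}\le e^{\kappa\mathfrak a/2}\,\mathfrak a,
\]
which is exactly the needed bound. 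Existence of a minimizer inside a compact set is assumed, or else replaced by a near-minimizing curve of length $\delta_{\mathrm{geo}}+\varepsilon^2$.
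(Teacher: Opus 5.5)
Your proposal is correct and follows the paper's proof. You read the NGD lower bound off Corollary~\ref{apx:cor:exact} and Lemma~\ref{apx:lem:shadow}, substitute $L(\gamma^{(m)})=R(p,q)\,\delta_{\mathrm{geo}}(p,q)$, and divide by the upper asymptotic for $N_{\mathrm{geo}}(\varepsilon)$; the paper settles $N_{\mathrm{geo}}(\varepsilon)$ by a one-line appeal to consequence (b), while you rightly note that (b) concerns straight $\eta$-segments and supply the missing estimate $\ell\le e^{\kappa\mathfrak a/2}\,\mathfrak a$ for a geodesic sub-arc by integrating the ruler bound along the geodesic itself, which makes that step rigorous.
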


\begin{proof}
Two facts combine. First, under the chord cap the Fisher arc length of an edge
satisfies $\mathfrak a_k=\ell_k+O(\ell_k^2)$ uniformly on $U$ (consequence (b)),
so an inscribed geodesic comparator has
$N_{\mathrm{geo}}(\varepsilon)=(\delta_{\mathrm{geo}}(p,q)/\varepsilon)(1+O(\varepsilon))$ (the integer
and arc--chord errors absorbed into the relative term). Second, NGD's iterates
track the $m$-geodesic and hence traverse Fisher length
$\Lambda_{\mathrm{NGD}}\ge L(\gamma^{(m)})(1-O(\varepsilon))$: for the exact
scheme this is Corollary~\ref{apx:cor:exact} ($\Lambda=L-\mathrm{tail}$,
$\mathrm{tail}\le2\varepsilon$), and for a $\beta$-consistent scheme it is
Lemma~\ref{apx:lem:shadow}(ii) ($\Lambda\ge L-C_2\varepsilon$). Since each edge has
arc $\le\varepsilon(1+\kappa_U\varepsilon)$,
\[
N_{\mathrm{NGD}}\ge\frac{\Lambda_{\mathrm{NGD}}}{\varepsilon(1+\kappa_U\varepsilon)}
\ge\frac{L(\gamma^{(m)})}{\varepsilon}\big(1-O(\varepsilon)\big).
\]
Substituting $L(\gamma^{(m)})=R(p,q)\delta_{\mathrm{geo}}(p,q)$ gives the displayed equality, and
dividing by $N_{\mathrm{geo}}(\varepsilon)$ gives the ratio statement.
\end{proof}

\begin{remark}[Two boundary remarks]
\emph{Fixed-rate NGD escapes $R$.} With a constant learning rate,
$\eta_{k+1}=\eta_k-h_0(\eta_k-\eta^\star)$ gives
$\eta_k-\eta^\star=(1-h_0)^k(\eta_0-\eta^\star)$, so the step count to a fixed
tolerance depends on the endpoints and $h_0$ but not on arc length. There is no
contradiction: fixed-rate steps violate the Fisher budget---their chords
$\ell_k=h_0\|\eta_k-\eta^\star\|_{x_k}$ decay geometrically (large early, tiny
late). $R$ governs step count only under the formal Fisher-chord budget.
\emph{The comparator is an oracle.} ``Geodesic descent'' inscribes
Fisher-length-$\le\varepsilon$ steps along the true minimizing geodesic; its
count lower-bounds that of every budgeted method
($\Lambda\ge\delta_{\mathrm{geo}}$ for any path). Because Fisher--Rao geodesics are in general
intractable to compute, Proposition~\ref{apx:prop:steps} is a one-sided statement
about path cost against an oracle, not a claim that a practical competitor is
$R$ times faster.
\end{remark}

\bigskip
\hrule
\bigskip

\noindent\emph{These appendices collect the complete theorem, lemma, proposition, and
corollary proofs used in the main text. External facts
used: Legendre/Bregman duality of exponential families (F1--F5); Banach's
theorem on symmetric multilinear forms \cite{banach1938homogene}; the
self-concordance inequality \cite{nesterov1994interior}; congruence invariance
and closed forms of the affine-invariant $\mathrm{SPD}$ metric
\cite{wells2020covariance,skovgaard1984riemannian}; the categorical
square-root/sphere isometry \cite{miyamoto2024closedform,nielsen2023simple};
Cauchy--Schwarz, the Beta integral $\int_0^1(s(1-s))^{-1/2}\,\mathrm{d} s=\pi$,
$\int_0^\infty(ax^2+b)^{-1}\,\mathrm{d} x=\pi/(2\sqrt{ab})$, $\sin x\le x$, Taylor's
theorem with integral remainder, KKT with Slater's condition, and elementary ODE
(Gr\"onwall-type) comparison. The only result left unproved is the matching
upper bound on traversed length for inexact schemes
(Remark~\ref{apx:rem:opengap}), which the paper also flags as open.}

\section{Related Work}
\label{apx:sec:related-work}

Our object is the Fisher dilation of a particular distinguished curve: the
mixture ($m$-) geodesic. Neither ingredient is new in isolation. Nesterov and
Todd formalized subarc-uniform length-to-distance dilation for prescribed
curves in Hessian Riemannian geometry \cite{nesterov2002riemannian}, while
Fujiwara and Amari proved that canonical-divergence gradient flows follow dual
affine geodesics \cite{fujiwara1995gradient}. Our contribution is instead to
determine exact constants and dimension-dependent worst-case rates for mixture
geodesics in concrete statistical families, and to give this dilation a
restricted natural-gradient interpretation.

\paragraph{Dual geometry, canonical-divergence flows, and natural gradient.}
Fujiwara and Amari's Theorem~1 identifies canonical-divergence gradient flows
with the appropriate dual affine geodesics, and their Corollary~1 specializes
the result to maximum likelihood in exponential families
\cite{fujiwara1995gradient}. Thus, the forward-KL natural-gradient trajectory's
mixture-affine form is prior geometry, not a discovery of this paper. Felice
and Ay later formulated canonical-divergence dynamics through Lagrangian and
Hamiltonian systems \cite{felice2018dynamical}. From an optimization
perspective, Raskutti and Mukherjee's Theorem~1 identifies mirror descent with
natural gradient on the dual Hessian manifold and interprets the usual
natural-gradient step as a first-order retraction \cite{raskutti2015information}.
Omiya and Komaki make dual-connection geodesic descent explicit: their
Proposition~3.2 relates mirror descent to a dual-geodesic update, and their
Theorem~4.1 and Corollary~4.2 give exact affine-coordinate displacements for
canonical/KL divergences \cite{omiya2025minimization}. They propose geodesic
updates; we instead measure how much Fisher length one such choice---the
mixture geodesic---can spend relative to a shortest Fisher path. Amari's
natural-gradient work supplies the broader background on invariant steepest
descent \cite{AmariNG,amari2016information,martens2020new}.

\paragraph{Path dilation in self-concordant Hessian geometry.}
Nesterov and Todd work in the same broad class of Hessian Riemannian metrics.
Their Definition~3.3 calls a curve $\tau$-geodesic when every subarc has length
at most $\tau$ times the Riemannian distance between its endpoints; their
Theorem~5.2 proves that a primal-dual central path is $\sqrt{2}$-geodesic
\cite{nesterov2002riemannian}. Corollary~3.2, Lemma~3.4, and Theorem~3.1 also
connect dilation with local-norm short-step complexity. Their barrier
structure controls specially generated central paths. We fix the mixture
segment and seek exact worst-case constants or rates over statistical
families; our bounded-regime proof instead compares a fixed
expectation-coordinate ruler over the whole finite-diameter manifold. In a
complementary direction, Nielsen bounds Fisher--Rao distance in Hessian models
by the energy of dual affine paths
\cite[Propositions~7--8]{nielsen2024approximation}; those endpoint upper bounds
do not control the mixture path's length-to-distance dilation.

\paragraph{Thermodynamic length and optimization trajectories.}
Salamon and Berry introduced thermodynamic length as a lower-bound device for
dissipated availability \cite{salamon1983thermo}. In linear response, Sivak
and Crooks derive a friction-tensor metric, prove the work bound
$W_{\mathrm{ex}}\ge L^2/\Delta t$, and identify geodesics with
minimum-dissipation protocols \cite{sivak2012thermodynamic}; their metric
reduces to Fisher information only under additional relaxation-time
assumptions. Bravetti, Garc\'ia Ariza, and Padilla connect dual-pregeodesic
relaxation to a Horse--Carrot bound and use the Amari--Chentsov tensor to order
directional relaxation asymmetry \cite{bravetti2025asymmetric}. These works
support the general principle that path length becomes operational under a
per-change budget, but use a different cost from our Fisher-chord accounting.
Gupta, Balakrishnan, and Ramdas study Euclidean lengths of gradient-descent and
gradient-flow trajectories, including logarithmic regimes
\cite{gupta2021path}; their numerator is an algorithmic path in an ambient
Euclidean metric, rather than an affine-connection geodesic normalized by
Fisher--Rao distance. None of these results gives our categorical constant,
scale-product rate, or forward-KL segment-invariance statement.

\paragraph{Categorical Fisher geometry.}
For categorical distributions, the square-root embedding of the open simplex
into a radius-two sphere and the formula
$d_{\mathrm{FR}}(p,q)=2\arccos\!\left(\sum_i\sqrt{p_iq_i}\right)$ are standard;
Miyamoto et al. give a modern treatment, including the distinction between the
open simplex and its boundary extension
\cite[Section~3.5]{miyamoto2024closedform}. Our categorical result begins with
the numerator: an arcsine-weight calculation of the mixture segment's Fisher
length yields the sharp worst-case factor $\pi/(2\sqrt{2})$.

\paragraph{Gaussian covariance geometry and scale products.}
For zero-mean Gaussians, Wells et al. establish the covariance-cone Fisher
metric and distance
\[
g_\Sigma(A,B)=\tfrac12\operatorname{tr}(\Sigma^{-1}A\Sigma^{-1}B),
\qquad
d_{\mathrm{FR}}(\Sigma_0,\Sigma_1)
=\tfrac1{\sqrt2}\bigl\|\log(\Sigma_0^{-1/2}\Sigma_1
\Sigma_0^{-1/2})\bigr\|_F.
\]
These results \cite{wells2020covariance} supply our denominator and its
normalization, not the length of the covariance-linear mixture path. We also
keep distinct the covariance-linear $m$-path, the affine-invariant Fisher
geodesic, and the log-Euclidean interpolation of Arsigny et al., which belongs
to a different metric geometry \cite{arsigny2007geometric}. Our scale-product
analysis asks how reciprocal-affine channels accumulated across scales
separate the first two paths, producing the $\Theta(\!\sqrt{\log n})$ rate.

\paragraph{Trust regions and operational meaning.}
Nesterov and Todd's local-norm analysis is the direct antecedent for the
principle that, under a uniform local budget, step count tracks path length
\cite[Corollary~3.2, Lemma~3.4, and Theorem~3.1]{nesterov2002riemannian}.
TRPO motivates the old-to-new average-KL orientation of a statistical trust
region \cite{schulman2015trust}, but its implemented linearization, quadratic
KL model, conjugate-gradient solve, and line search do not satisfy the exact
KKT identity used here. Our narrower result combines exact forward-KL segment
invariance for an ideal exponential-family subproblem with a conditional,
one-sided shadowing lower bound under the Fisher-chord budget actually proved.

Accordingly, we do not claim to introduce length-to-distance ratios or the
dual-geodesic flow. We study the Fisher dilation of the distinguished mixture
geodesic, isolate tensor and global geometric mechanisms governing it, and
derive family-specific exact constants and asymptotic rates.

\section{Geometry Glossary}
\label{apx:sec:glossary}

This glossary fixes the differential-geometry vocabulary used in the proof
supplement. It is meant as a compact reference for readers who know linear
algebra and multivariable calculus but may not have studied Riemannian or
information geometry.

\subsection{Manifolds and tangent objects.}
\begin{description}
\item[\textbf{Manifold / smooth manifold.}]
A smooth manifold $M$ is a space that may be globally curved or constrained,
but locally looks like an open set of $\mathbb{R}^n$ with smooth coordinate changes.
In this paper, points of $M$ are probability distributions in a statistical
model.

\item[\textbf{Chart / coordinate chart.}]
A chart is a local coordinate map from a piece of $M$ to $\mathbb{R}^n$. The natural
parameter $\theta$ and expectation parameter $\eta$ are two different charts
on the same statistical manifold.

\item[\textbf{Dimension.}]
The dimension of $M$ is the number of local coordinates needed to describe a
point. Equivalently, it is the dimension of each tangent space $T_pM$.

\item[\textbf{Tangent space $T_pM$.}]
The tangent space at $p$ is the vector space of infinitesimal directions in
which one can move away from $p$. Formally, tangent vectors can be viewed as
directional-derivative operators acting on smooth scalar functions near $p$.

\item[\textbf{Tangent vector / velocity vector $\dot\gamma$.}]
If $\gamma(t)$ is a curve in $M$, then $\dot\gamma(t)\in T_{\gamma(t)}M$ is
its instantaneous velocity. It is the tangent vector obtained by differentiating
the curve at time $t$.

\item[\textbf{Vector field.}]
A vector field $X$ assigns a tangent vector $X_p\in T_pM$ smoothly to each
point $p$. Connections are needed because differentiating such an assignment
requires comparing vectors that live in different tangent spaces.

\item[\textbf{Covector / dual vector.}]
A covector at $p$ is a linear map $T_pM\to\mathbb{R}$. When the text says a covector is
determined by its pairings with all vectors, it means that if two covectors
give the same number on every tangent vector, they are the same covector.
\end{description}

\subsection{Metrics and lengths.}
\begin{description}
\item[\textbf{Riemannian metric $g$.}]
A Riemannian metric assigns an inner product $g_p(\cdot,\cdot)$ to each tangent
space $T_pM$, varying smoothly with $p$. Thus $g(X,Y)$ means ``take the inner
product of the two tangent vectors using the metric at the current point.''

\item[\textbf{Fisher metric.}]
For an exponential family, the Fisher metric is the Hessian of the log-partition
function: $g_{ij}=\partial_i\partial_j\psi$ in the natural chart. This is the
Riemannian metric used throughout the paper.

\item[\textbf{Hessian.}]
The Hessian of a scalar function is its matrix of second partial derivatives.
Thus $g=\nabla^2\psi$ means $g_{ij}=\partial_i\partial_j\psi$; here $\nabla$ is
the ordinary Euclidean gradient in the coordinate chart, not a connection.

\item[\textbf{Norm $\|v\|_x$.}]
The Fisher norm of a vector $v\in T_xM$ is
$\|v\|_x=\sqrt{g_x(v,v)}$. It is the local length assigned to the infinitesimal
move $v$.

\item[\textbf{Nondegenerate.}]
The metric is nondegenerate if $g(v,w)=0$ for every $w$ implies $v=0$. This is
the inner-product analogue of an invertible matrix: no nonzero vector is
invisible to all pairings.
\end{description}

\subsection{Connections and covariant differentiation.}
\begin{description}
\item[\textbf{Affine connection $\nabla$.}]
An affine connection is a rule for differentiating vector fields along vector
fields. It takes two vector fields $X,Y$ and produces another vector field
$\nabla_XY$, read as ``the derivative of $Y$ in direction $X$.''

\item[\textbf{Covariant derivative.}]
The covariant derivative $\nabla_XY$ measures the rate of change of the vector
field $Y$ along the direction $X$, after using the connection to compare nearby
tangent spaces. This is the coordinate-independent replacement for ordinary
componentwise differentiation of vector fields.

\item[\textbf{Christoffel symbols $\Gamma^k_{ij}$.}]
In a chart, Christoffel symbols are the coefficients of the connection:
$\nabla_{\partial_i}\partial_j=\Gamma^k_{ij}\partial_k$. The indices $i,j$
choose the derivative direction and the basis vector being differentiated;
the index $k$ is the output component, summed when repeated.

\item[\textbf{Levi-Civita connection $\nabla^{(\mathrm{LC})}$.}]
The Levi-Civita connection is the unique connection that is both torsion-free
and metric-compatible for $g$. Its geodesics are the local shortest paths for
the Fisher metric.

\item[\textbf{Torsion-free.}]
A connection is torsion-free if
$\nabla_XY-\nabla_YX=[X,Y]$, where $[X,Y]$ is the Lie bracket. In coordinate
bases this corresponds to symmetry of Christoffel symbols in the two lower
indices: $\Gamma^k_{ij}=\Gamma^k_{ji}$.

\item[\textbf{Metric-compatible.}]
Metric compatibility means
$X\,g(Y,Z)=g(\nabla_XY,Z)+g(Y,\nabla_XZ)$. It says parallel transport under the
connection preserves inner products measured by $g$.

\item[\textbf{Flat connection.}]
A connection is flat if its curvature is zero. Equivalently, locally there are
coordinates in which its Christoffel symbols vanish; this is why
$\Gamma^{(m)}=0$ in $\eta$-coordinates and $\Gamma^{(e)}=0$ in
$\theta$-coordinates.

\item[\textbf{The notation $\nabla^{(m)}_X$, $\nabla^{(e)}_X$,
and $\nabla^{(\mathrm{LC})}_X$.}]
These mean covariant differentiation in the direction $X$ using, respectively,
the mixture connection, exponential connection, and Levi-Civita connection.
They are three different notions of ``differentiate a vector field'' on the
same manifold.
\end{description}

\subsection{Dually flat statistical manifolds.}
\begin{description}
\item[\textbf{Dually flat.}]
A statistical manifold is dually flat when it has two flat, torsion-free
connections $\nabla^{(m)}$ and $\nabla^{(e)}$ that are dual with respect to
$g$:
\[
X\,g(Y,Z)=g(\nabla^{(m)}_XY,Z)+g(Y,\nabla^{(e)}_XZ).
\]
The two connections define two different straight-line geometries on $M$.

\item[\textbf{Natural parameter $\theta$.}]
The natural parameter is the coordinate of an exponential family in which
$\nabla^{(e)}$ is flat. In $\theta$-coordinates, $e$-geodesics are ordinary
affine line segments.

\item[\textbf{Expectation parameter $\eta$.}]
The expectation parameter is $\eta=\mathbb{E}_\theta[t(X)]$, the expected sufficient
statistic. It is the coordinate in which $\nabla^{(m)}$ is flat, so
$m$-geodesics are ordinary affine line segments in $\eta$.

\item[\textbf{Legendre duality.}]
For the convex log-partition function $\psi$, the dual coordinate is
$\eta=\nabla\psi(\theta)$ and the inverse relation is
$\theta=\nabla\psi^\ast(\eta)$. Here $\nabla$ is the ordinary coordinate
gradient, and $\psi^\ast$ is the Legendre dual of $\psi$.

\item[\textbf{Log-partition function $\psi$.}]
In an exponential family
$p_\theta(x)=\exp(\langle\theta,t(x)\rangle-\psi(\theta))$, $\psi$ is the
normalizing cumulant-generating function. Its derivatives encode moments of
the sufficient statistic.

\item[\textbf{Exponential family.}]
An exponential family is a model of the form
$p_\theta(x)=\exp(\langle\theta,t(x)\rangle-\psi(\theta))$. The vector
$t(x)$ is the sufficient statistic and $\theta$ is its natural parameter.

\item[\textbf{Bregman divergence.}]
The Bregman divergence generated by $\psi$ is
$D_\psi(\theta,\theta')=\psi(\theta)-\psi(\theta')-
\langle\nabla\psi(\theta'),\theta-\theta'\rangle$. For exponential families,
KL divergences are Bregman divergences in the natural or dual coordinates,
depending on orientation.
\end{description}

\subsection{Geodesics and pregeodesics.}
\begin{description}
\item[\textbf{Autoparallel curve / geodesic.}]
A curve $\gamma$ is a geodesic for a connection $\nabla$ if
$\nabla_{\dot\gamma}\dot\gamma=0$. This means the velocity is transported
parallel to itself: zero covariant acceleration.

\item[\textbf{Pregeodesic.}]
A pregeodesic is a curve whose image is the image of some geodesic, possibly
with a different speed. Equivalently,
$\nabla_{\dot\gamma}\dot\gamma=\widetilde\mu\,\dot\gamma$ for some scalar
$\widetilde\mu$ along the curve: acceleration is allowed, but only parallel to
the velocity.

\item[\textbf{Geodesic vs. pregeodesic.}]
A geodesic has the right path and the right speed for the chosen connection.
A pregeodesic has the right path but may use a non-affine speed.

\item[\textbf{Reparametrization.}]
Reparametrizing a curve means replacing $\gamma(s)$ by $\gamma(s(t))$ for a
monotone scalar function $s(t)$. This changes the speed along the curve but not
the set of points traced by the curve.

\item[\textbf{$m$-geodesic.}]
An $m$-geodesic is a geodesic of $\nabla^{(m)}$. Because $\nabla^{(m)}$ is flat
in $\eta$-coordinates, the $m$-geodesic from $p$ to $q$ is
$\eta(t)=(1-t)\eta_p+t\eta_q$.

\item[\textbf{$e$-geodesic.}]
An $e$-geodesic is a geodesic of $\nabla^{(e)}$. Because $\nabla^{(e)}$ is flat
in $\theta$-coordinates, it is a straight affine segment in $\theta$.

\item[\textbf{Levi-Civita geodesic.}]
A Levi-Civita geodesic is a geodesic of $\nabla^{(\mathrm{LC})}$. Locally, these are
the shortest paths for the Fisher metric.

\item[\textbf{Geodesic / Riemannian distance.}]
The distance $\delta_{\mathrm{geo}}(p,q)$ is the infimum of Fisher lengths of all $C^1$ curves
from $p$ to $q$. When the metric is the Fisher metric, this is also called the
Fisher--Rao distance.

\item[\textbf{Affine segment.}]
An affine segment is an ordinary straight line in a flat coordinate chart, such
as $(1-t)\eta_p+t\eta_q$ or $(1-t)\theta_p+t\theta_q$. The adjective ``affine''
refers to straightness in that coordinate system.
\end{description}

\subsection{Tensors.}
\begin{description}
\item[\textbf{$(p,q)$-tensor.}]
A $(p,q)$-tensor is a multilinear object with $p$ covector slots and $q$ vector
slots. The only operational rule needed here is multilinearity in each slot.

\item[\textbf{$(0,3)$-tensor.}]
A $(0,3)$-tensor takes three tangent vectors and returns a scalar, linearly in
each argument. Thus $T(X,Y,Z)$ is trilinear.

\item[\textbf{Fully symmetric tensor.}]
A tensor $T$ is fully symmetric if $T(X,Y,Z)$ is unchanged under any permutation
of the arguments. For example, $T(X,Y,Z)=T(Y,X,Z)=T(Z,Y,X)$.

\item[\textbf{Amari--Chentsov tensor $T$.}]
The Amari--Chentsov tensor measures the gap between the exponential and mixture
connections. Lowered with $g$,
$T(X,Y,Z)=g(\nabla^{(e)}_XY-\nabla^{(m)}_XY,Z)$; in the natural chart,
$T_{ijk}=-\partial_i\partial_j\partial_k\psi$.

\item[\textbf{Lowering an index.}]
Lowering an index means using the metric to convert a vector-valued quantity
into a covector-valued one. In this document, the vector
$\nabla^{(e)}_XY-\nabla^{(m)}_XY$ is converted to the scalar trilinear form
$T(X,Y,Z)$ by pairing it with $Z$ via $g$.
\end{description}

\subsection{Paper-specific quantities.}
\begin{description}
\item[\textbf{$R(p,q)$.}]
$R(p,q)=L(\gamma^{(m)})/\delta_{\mathrm{geo}}(p,q)\ge1$ is the ratio between the Fisher length
of the $m$-geodesic and the shortest Fisher--Rao distance from $p$ to $q$.
It measures the inefficiency of following the mixture-straight path.

\item[\textbf{$R^{\ast}(M)$.}]
$R^{\ast}(M)=\sup_{p\ne q}R(p,q)$ is the worst-case inefficiency over the
manifold. The paper's classes are organized by how this supremum behaves.

\item[\textbf{Skewness $\kappa_T$.}]
$\kappa_T$ is a normalized bound on the size of the Amari--Chentsov tensor, of the
form $\|T(v,v,\cdot)\|_g/\|v\|^3$. It controls how quickly the Fisher ruler
distorts along $m$-geodesics.

\item[\textbf{Fisher--Rao diameter $D(M)$.}]
$D(M)=\sup_{p,q}\delta_{\mathrm{geo}}(p,q)$ is the largest Fisher--Rao distance between two
points of $M$, possibly infinite.

\item[\textbf{Fisher-chord budget.}]
A Fisher-chord budget limits each natural-gradient step by the Fisher norm of
its $\eta$-coordinate displacement. It is the per-step local distance budget
used in the operational step-count result.

\item[\textbf{$\beta$-consistent budgeted scheme.}]
A $\beta$-consistent budgeted scheme is a discrete optimizer whose steps obey
the Fisher-chord budget and whose local error from the ideal $m$-geodesic flow
is second order in the step size. This condition lets the shadowing argument
compare discrete steps to the continuous $m$-geodesic.
\end{description}

\subsection{Global geometry and inequalities.}
\begin{description}
\item[\textbf{Hadamard manifold.}]
A Hadamard manifold is complete, simply connected, and has non-positive
sectional curvature. Its geodesics are globally well behaved; in particular,
between two points there is a unique globally minimizing geodesic.

\item[\textbf{Strongly convex normal neighborhood.}]
This is a neighborhood in which any two points are connected by a unique
minimizing Levi-Civita geodesic that remains inside the neighborhood. It is the
local region where geodesic coordinates and distance comparisons behave
unambiguously.

\item[\textbf{Curvature / Riemann curvature.}]
Curvature measures the failure of parallel transport around an infinitesimal
loop to return a vector to its original orientation. Sectional curvature is the
curvature seen in a two-dimensional tangent plane.

\item[\textbf{Riemann-flat.}]
A Riemann-flat manifold has zero Riemann curvature tensor, so it is locally
isometric to Euclidean space. This is a statement about the Levi-Civita
geometry, not merely about one connection's Christoffel symbols in one chart.

\item[\textbf{Bochner integral.}]
A Bochner integral is an integral whose values lie in a vector space, here a
fixed tangent space. In this document it lets us integrate a curve of tangent
vectors after they have been identified in one tangent space.

\item[\textbf{Cauchy--Schwarz in $(T_xM,g_x)$.}]
At each point $x$, the tangent space with inner product $g_x$ is an ordinary
inner-product space. Hence $|g_x(u,v)|\le\|u\|_x\|v\|_x$.
\end{description}


\begin{thebibliography}{33}
\providecommand{\natexlab}[1]{#1}

\bibitem[{Amari(1998)}]{AmariNG}
Amari, S.-i. 1998.
\newblock Natural Gradient Works Efficiently in Learning.
\newblock \emph{Neural Computation}, 10(2): 251--276.

\bibitem[{Amari(2016)}]{amari2016information}
Amari, S.-i. 2016.
\newblock \emph{Information geometry and its applications}.
\newblock Springer.

\bibitem[{Arsigny et~al.(2007)Arsigny, Fillard, Pennec, and
  Ayache}]{arsigny2007geometric}
Arsigny, V.; Fillard, P.; Pennec, X.; and Ayache, N. 2007.
\newblock Geometric means in a novel vector space structure on symmetric
  positive-definite matrices.
\newblock \emph{SIAM journal on matrix analysis and applications}, 29(1):
  328--347.

\bibitem[{Banach(1938)}]{banach1938homogene}
Banach, S. 1938.
\newblock {\"U}ber homogene {P}olynome in {$(L^{2})$}.
\newblock \emph{Studia Mathematica}, 7: 36--44.

\bibitem[{Bhatia(2009)}]{bhatia2009positive}
Bhatia, R. 2009.
\newblock Positive definite matrices.
\newblock In \emph{Positive Definite Matrices}. Princeton university press.

\bibitem[{Bravetti, Garc{\'\i}a~Ariza, and
  Padilla(2025)}]{bravetti2025asymmetric}
Bravetti, A.; Garc{\'\i}a~Ariza, M.~{\'A}.; and Padilla, P. 2025.
\newblock Asymmetric relaxations through the lens of information geometry.
\newblock \emph{Journal of Physics A: Mathematical and Theoretical}, 58(12):
  125004.

\bibitem[{Bravetti, Garc{\'\i}a~Ariza, and
  Romero-Arias(2026)}]{bravetti2026gradient}
Bravetti, A.; Garc{\'\i}a~Ariza, M.~{\'A}.; and Romero-Arias, J.~R. 2026.
\newblock Gradient systems and asymmetric relaxations in view of Riemannian
  geometry.
\newblock \emph{Entropy}, 28(5): 516.

\bibitem[{Dauphin et~al.(2014)Dauphin, Pascanu, Gulcehre, Cho, Ganguli, and
  Bengio}]{dauphin2014identifying}
Dauphin, Y.~N.; Pascanu, R.; Gulcehre, C.; Cho, K.; Ganguli, S.; and Bengio, Y.
  2014.
\newblock Identifying and attacking the saddle point problem in
  high-dimensional non-convex optimization.
\newblock \emph{Advances in neural information processing systems}, 27.

\bibitem[{Felice and Ay(2018)}]{felice2018dynamical}
Felice, D.; and Ay, N. 2018.
\newblock Dynamical Systems Induced by Canonical Divergence in Dually Flat
  Manifolds.
\newblock arXiv:1812.04461.

\bibitem[{Fujiwara and Amari(1995)}]{fujiwara1995gradient}
Fujiwara, A.; and Amari, S.-i. 1995.
\newblock Gradient systems in view of information geometry.
\newblock \emph{Physica D: Nonlinear Phenomena}, 80(3): 317--327.

\bibitem[{Gupta, Balakrishnan, and Ramdas(2021)}]{gupta2021path}
Gupta, C.; Balakrishnan, S.; and Ramdas, A. 2021.
\newblock Path length bounds for gradient descent and flow.
\newblock \emph{Journal of Machine Learning Research}, 22(68): 1--63.

\bibitem[{ichi Amari and Nagaoka(2000)}]{Amari2000MethodsOI}
ichi Amari, S.; and Nagaoka, H. 2000.
\newblock \emph{Methods of Information Geometry}, volume 191 of
  \emph{Translations of Mathematical Monographs}.
\newblock American Mathematical Society and Oxford University Press.

\bibitem[{Kakade(2001)}]{kakade2001natural}
Kakade, S.~M. 2001.
\newblock A natural policy gradient.
\newblock \emph{Advances in neural information processing systems}, 14.

\bibitem[{Khan et~al.(2018)Khan, Nielsen, Tangkaratt, Lin, Gal, and
  Srivastava}]{khan2018fast}
Khan, M.; Nielsen, D.; Tangkaratt, V.; Lin, W.; Gal, Y.; and Srivastava, A.
  2018.
\newblock Fast and scalable bayesian deep learning by weight-perturbation in
  adam.
\newblock In \emph{International conference on machine learning}, 2611--2620.
  PMLR.

\bibitem[{Kunstner, Balles, and Hennig(2019)}]{kunstner2019limitations}
Kunstner, F.; Balles, L.; and Hennig, P. 2019.
\newblock Limitations of the empirical fisher approximation for natural
  gradient descent.
\newblock \emph{Advances in neural information processing systems}, 32.

\bibitem[{Martens(2020)}]{martens2020new}
Martens, J. 2020.
\newblock New insights and perspectives on the natural gradient method.
\newblock \emph{Journal of Machine Learning Research}, 21(146): 1--76.

\bibitem[{Miyamoto et~al.(2024)Miyamoto, Meneghetti, Pinele, and
  Costa}]{miyamoto2024closedform}
Miyamoto, H.~K.; Meneghetti, F. C.~C.; Pinele, J.; and Costa, S. I.~R. 2024.
\newblock On Closed-Form Expressions for the Fisher--Rao Distance.
\newblock \emph{Information Geometry}, 7: 311--354.

\bibitem[{Nesterov and Nemirovskii(1994)}]{nesterov1994interior}
Nesterov, Y.; and Nemirovskii, A. 1994.
\newblock \emph{Interior-Point Polynomial Algorithms in Convex Programming}.
\newblock Philadelphia: SIAM.

\bibitem[{Nesterov and Todd(2002)}]{nesterov2002riemannian}
Nesterov, Y.~E.; and Todd, M.~J. 2002.
\newblock On the {R}iemannian Geometry Defined by Self-Concordant Barriers and
  Interior-Point Methods.
\newblock \emph{Foundations of Computational Mathematics}, 2: 333--361.

\bibitem[{Nielsen(2020)}]{nielsen2020elementary}
Nielsen, F. 2020.
\newblock An elementary introduction to information geometry.
\newblock \emph{Entropy}, 22(10): 1100.

\bibitem[{Nielsen(2021)}]{nielsen2021geodesic}
Nielsen, F. 2021.
\newblock On geodesic triangles with right angles in a dually flat space.
\newblock In \emph{Progress in information geometry: theory and applications},
  153--190. Springer.

\bibitem[{Nielsen(2023)}]{nielsen2023simple}
Nielsen, F. 2023.
\newblock A simple approximation method for the Fisher--Rao distance between
  multivariate normal distributions.
\newblock \emph{Entropy}, 25(4): 654.

\bibitem[{Nielsen(2024)}]{nielsen2024approximation}
Nielsen, F. 2024.
\newblock Approximation and bounding techniques for the Fisher-Rao distances
  between parametric statistical models.
\newblock In \emph{Handbook of Statistics}, volume~51, 67--116. Elsevier.

\bibitem[{Omiya and Komaki(2025)}]{omiya2025minimization}
Omiya, G.; and Komaki, F. 2025.
\newblock Minimization of Functions on Dually Flat Spaces Using Geodesic
  Descent Based on Dual Connections.
\newblock \emph{arXiv preprint arXiv:2512.09358}.

\bibitem[{Pennec(2020)}]{pennec2020manifold}
Pennec, X. 2020.
\newblock Manifold-valued image processing with SPD matrices.
\newblock In \emph{Riemannian geometric statistics in medical image analysis},
  75--134. Elsevier.

\bibitem[{Raskutti and Mukherjee(2015)}]{raskutti2015information}
Raskutti, G.; and Mukherjee, S. 2015.
\newblock The Information Geometry of Mirror Descent.
\newblock \emph{IEEE Transactions on Information Theory}, 61(3): 1451--1457.

\bibitem[{Salamon and Berry(1983)}]{salamon1983thermo}
Salamon, P.; and Berry, R. 1983.
\newblock Thermodynamic Length and Dissipated Availability.
\newblock \emph{Physical Review Letters - PHYS REV LETT}, 51: 1127--1130.

\bibitem[{Schulman et~al.(2015)Schulman, Levine, Abbeel, Jordan, and
  Moritz}]{schulman2015trust}
Schulman, J.; Levine, S.; Abbeel, P.; Jordan, M.; and Moritz, P. 2015.
\newblock Trust region policy optimization.
\newblock In \emph{International conference on machine learning}, 1889--1897.
  PMLR.

\bibitem[{Shima(2007)}]{shima2007geometry}
Shima, H. 2007.
\newblock \emph{The geometry of Hessian structures}.
\newblock World Scientific.

\bibitem[{Sivak and Crooks(2012)}]{sivak2012thermodynamic}
Sivak, D.~A.; and Crooks, G.~E. 2012.
\newblock Thermodynamic Metrics and Optimal Paths.
\newblock \emph{Physical Review Letters}, 108(19): 190602.

\bibitem[{Skovgaard(1984)}]{skovgaard1984riemannian}
Skovgaard, L.~T. 1984.
\newblock A Riemannian geometry of the multivariate normal model.
\newblock \emph{Scandinavian journal of statistics}, 211--223.

\bibitem[{Wells et~al.(2020)Wells, Cook, Pine, and
  Robinson}]{wells2020covariance}
Wells, J.; Cook, M.; Pine, K.; and Robinson, B.~D. 2020.
\newblock Fisher--Rao Distance on the Covariance Cone.
\newblock arXiv:2010.15861.

\bibitem[{Wierstra et~al.(2014)Wierstra, Schaul, Glasmachers, Sun, Peters, and
  Schmidhuber}]{wierstra2014natural}
Wierstra, D.; Schaul, T.; Glasmachers, T.; Sun, Y.; Peters, J.; and
  Schmidhuber, J. 2014.
\newblock Natural evolution strategies.
\newblock \emph{The Journal of Machine Learning Research}, 15(1): 949--980.

\end{thebibliography}
\end{document}